\renewcommand{\thefootnote}{\fnsymbol{footnote}}
\numberwithin{equation}{section}
\newcommand{\rset}{\mathbb{R}}
\newcommand{\sas}{{\cal S}\alpha{\cal S}}
\newcommand{\Rcal}{\mathcal{R}}
\newcommand{\Zcal}{\mathcal{Z}}
\newcommand{\veps}{\varepsilon}
\newtheorem{lem}{Lemma}
\newtheorem{cor}{Corollary}
\newtheorem{assumption}{H}
\Crefname{assumption}{\textnormal{\textbf{H}}\hspace{-3pt}}{\textnormal{\textbf{H}}\hspace{-3pt}}
\crefname{assumption}{\textnormal{\textbf{H}}}{\textnormal{\textbf{H}}}
\newcommand{\vc}[1]{\mathbf{#1}} 
\Crefname{assumption}{\textnormal{\textbf{H}}\hspace{-3pt}}{\textnormal{\textbf{H}}\hspace{-3pt}}
\crefname{assumption}{\textnormal{\textbf{H}}}{\textnormal{\textbf{H}}}
\begin{document}

\title{Heavy Tails in SGD and Compressibility \\ of Overparametrized Neural Networks}

\setlength{\parskip}{\baselineskip}%
\setlength{\parindent}{0pt}%

\author{\name Melih Barsbey* \email melih.barsbey@boun.edu.tr \\
      \addr Department of Computer Engineering, Bo\u{g}azi\c{c}i University, Istanbul, Turkey
      \AND
\name Milad Sefidgaran* \email milad.sefidgaran@telecom-paris.fr \\
      \addr LTCI, Télécom Paris, Institut Polytechnique de Paris, France
      \AND
      \name Murat A. Erdogdu \email  erdogdu@cs.toronto.edu \\
      \addr Department of Computer Science and Department of Statistical Sciences at University of Toronto, and Vector
Institute, Toronto, Canada
            \AND
\name Ga\"{e}l Richard \email gael.richard@telecom-paris.fr \\
      \addr LTCI, Télécom Paris, Institut Polytechnique de Paris, France
      \AND
\name Umut \c{S}im\c{s}ekli \email umut.simsekli@inria.fr \\
      \addr INRIA - D\'{e}partement d’Informatique de l’\'{E}cole Normale Sup\'{e}rieure, PSL Research University, Paris, France
      }

 \footnotetext[1]{Equal contribution.}
\maketitle
\renewcommand{\thefootnote}{\arabic{footnote}}
\begin{abstract}
Neural network compression techniques have become increasingly popular as they can drastically reduce the storage and computation requirements for very large networks. Recent empirical studies have illustrated that even simple pruning strategies can be surprisingly effective, and several theoretical studies have shown that compressible networks (in specific senses) should achieve a low generalization error. Yet, a theoretical characterization of the underlying cause that makes the networks amenable to such simple compression schemes is still missing. In this study, we address this fundamental question and reveal that the dynamics of the training algorithm has a key role in obtaining such compressible networks. Focusing our attention on stochastic gradient descent (SGD), our main contribution is to link compressibility to two recently established properties of SGD: (i) as the network size goes to infinity, the system can converge to a \emph{mean-field} limit, where the network weights behave independently, (ii) for a large step-size/batch-size ratio, the SGD iterates can converge to a \emph{heavy-tailed} stationary distribution. In the case where these two phenomena occur \emph{simultaneously}, we prove that the networks are guaranteed to be `$\ell_p$-compressible', and the compression errors of different pruning techniques (magnitude, singular value, or node pruning) become arbitrarily small as the network size increases. We further prove generalization bounds adapted to our theoretical framework, which indeed confirm that the generalization error will be lower for more compressible networks. Our theory and numerical study on various neural networks show that large step-size/batch-size ratios introduce heavy-tails, which, in combination with overparametrization, result in compressibility.
\end{abstract}

\setcounter{tocdepth}{1}

\section{Introduction}
\label{sec:intro}
With the increasing model sizes in deep learning and with its increasing use in low-resource environments, network compression is becoming ever more important. Among many network compression techniques, network pruning has been arguably the most commonly used method \cite{oneillOverviewNeuralNetwork2020}, and it is rising in popularity and success \cite{blalock2020state}. Though various pruning methods are successfully used in practice and their theoretical implications in terms of generalization are increasingly apparent \cite{arora2018stronger}, a thorough understanding of \emph{why} and \emph{when} neural networks are compressible is lacking.

A common conclusion in pruning research is that overparametrized networks can be greatly compressed by pruning with little to no cost at generalization, including with simple schemes such as magnitude pruning \cite{blalock2020state, oneillOverviewNeuralNetwork2020}. For example, research on \textit{iterative magnitude pruning}  \cite{frankle2018the} demonstrated the possibility of compressing trained deep learning models by iteratively eliciting a much sparser substructure. While it is known that the choice of training hyperparameters such as learning rate affects the performance of such pruning strategies \cite{frankle_linear_2020, hattie_zhou_deconstructing_2020, renda_comparing_2020}, usually such observations are low in granularity and almost never theoretically motivated. Overall, the field lacks a framework to understand why or when a pruning method should be useful \cite{oneillOverviewNeuralNetwork2020}.

Another strand of research that highlights the importance of understanding network compressibility includes various studies \cite{arora2018stronger,suzuki2018spectral,Suzuki2020Compression,hsu2021generalization,kuhn2021robustness} that presented generalization bounds and/or empirical evidence that imply that the more compressible a network is, the more likely it is to generalize well. The aforementioned bounds are particularly interesting since classical generalization bounds increase with the dimension and hence become irrelevant in high dimensional deep learning settings, and fall short of explaining the generalization behavior of overparametrized neural networks. These results again illustrate the importance of understanding the conditions that give rise to compressibility given their implications regarding generalization. 

In this paper, we develop a theoretical framework to address (i) \emph{why and when modern neural networks can be amenable to very simple pruning strategies} and (ii) \emph{how this relates to generalization}. Our theory lies in the intersection of two recent discoveries regarding deep neural networks trained with the stochastic gradient descent (SGD) algorithm. The first one is the emergence of \emph{heavy-tailed} stationary distributions, which appear when the networks are trained with large learning rates and/or small batch-sizes \cite{hodgkinson2020multiplicative, gurbuzbalaban2020heavy}. The second one is the \emph{propagation of chaos} phenomenon, which indicates that, as the network size goes to infinity, the network weights behave \emph{independently} \cite{mei2018mean,sirignano2020mean,de2020quantitative}.

We show that, under the setting described above, fully connected neural networks will be provably compressible in a precise sense, and the compression errors of (i) unstructured global or layer-wise magnitude pruning, (ii) pruning based on singular values of the weight matrices, (iii) and node pruning can be made arbitrarily small for any compression ratio as the dimension increases. Our formulation of network compressibility in terms of `$\ell_p$-compressibility' enables us to access results from \textit{compressed sensing} literature \cite{gribonval2012compressible, amini2011compressibility} to be used in neural network analysis. Moreover, we prove generalization bounds adapted to our theoretical framework that agree with existing compression-based generalization bounds \cite{arora2018stronger,suzuki2018spectral,Suzuki2020Compression} and confirm that compressibility implies improved generalization. We conduct experiments on fully connected and convolutional neural networks and show that the results are in strong accordance with our theory.

Our study reveals an interesting phenomenon: depending on the algorithm hyperparameters, such as learning rate and batch-size, the resulting neural networks might possess different compressibility properties. When the step-size/batch-size ratio is large, heavy-tails emerge, and when combined with the decoupling effect of propagation of chaos that emerges with overparametrization, the networks become compressible in a way that they are amenable to simple pruning strategies. Finally, when compressible, the networks become likely to generalize better. In this sense, our results also provide an alternative perspective to the recent theoretical studies that suggest that heavy-tails can be beneficial in terms of generalization\footnote{In a concurrent study, \cite{shinCompressingHeavyTailedWeight2021} considered a related problem and examined the relationship between compressibility of heavy-tailed weight matrices and generalization from an alternative theoretical point of view, without an emphasis on the algorithmic aspects.} \cite{simsekli2020hausdorff,zhou2020towards}.

\section{Preliminaries and Technical Background}
\label{sec:prelims}
\textbf{Notation. } Matrices and vectors are denoted by upper- and lower-case bold letters, respectively,  \textit{e.g.} $\vc{X}$ and $\vc{x}$. A sequence of $n$ scalars $x_1,\ldots, x_{n}$ is shown as $\{x_i\}_{i=1}^n$. Similar notations are used for sequences of matrices $\left\{\vc{X}_i\right\}_{i=1}^{l}$ and vectors $\left\{\vc{x}_i\right\}_{i=1}^{l}$, whose entries are indexed with convention $\vc{X}_{i} {=} \left[\vc{X}_i\right]_{m,n}$ and $\vc{x}_{i} {=} \left(x_{i,1}, x_{i,2}, \ldots, x_{i,n}\right)$, respectively. The set of integers $\{m,\ldots,n\}$ is denoted by $\llbracket m,n \rrbracket$. We denote the $\ell_p$ (semi-)norm of a vector $\vc{x} \in \rset^d$ as $\|\vc{x}\|_p {=} (\sum_{i=1}^d |x_i|^p)^{1/p}$ for all $p \in (0,\infty)$ and $\|\vc{x}\|$ implies $ \|\vc{x}\|_2$.  For a matrix $\vc{A} \in \rset^{n\times m}$, $\|\vc{A}\|$ denotes its Frobenius norm.

\textbf{Fully connected neural networks. } In the entirety of the paper, we consider a multi-class classification setting. We denote the space of data points by $\mathcal{Z} =  \mathcal{X} \times \mathcal{Y}$, where $\mathcal{X} \subset \rset^{d_X}$ is the space of features and $\mathcal{Y} = \{1,2,\dots,d_Y\}$ is the space of the labels. Similar to prior art (e.g., \cite{neyshabur2017pac}), we focus our attention on the bounded feature domain $\mathcal{X} = \mathcal{X}_{B} := \left\{\vc{x} \in \mathbb{R}^{d_X}; \| \vc{x}\| \leq B \right\}$. 

We denote a \emph{fully connected neural network} with $L$ layers by a collection of \emph{weight matrices} $\{\vc{W}_{l}\}_{l=1}^L$, such that $\vc{W}_l \in \rset^{h_l \times h_{l-1}}$, where $h_l$ denotes the number of hidden units at layer $l$ with $h_0 = d_X$ and $h_L = d_Y$. Accordingly, the \emph{prediction function} $f_{\vc{w}}(\vc{x}): \mathcal{X}_{B} \mapsto \mathcal{Y}$ corresponding to the neural network is defined as follows:
\begin{equation}
f_{\vc{w}}(\vc{x})=\vc{W}_{L} \phi\left(\vc{W}_{L-1} \phi\left(\cdots \phi\left(\vc{W}_{1} {\vc{x}}\right) \right)\right),
\end{equation}
where $\phi : \rset \to \rset$ is the rectified linear unit (ReLU) activation function, \textit{i.e.}, $\phi(x) = \max(0,x)$, and it is applied element-wise when its input is a vector. For notational convenience, let us define $d_l\coloneqq h_l \times h_{l-1}$ and $d \coloneqq \sum_{l=1}^L d_l$. Furthermore, let  $\vc{w}_l$ denote the \emph{vectorized} weight matrix of layer $l$, \textit{i.e.},  $\vc{w}_l \coloneqq \textbf{vec}\left(\vc{W}_l\right) \in \rset^{d_l}$, where $\textbf{vec}$ denotes vectorization. Finally, let
$\vc{w}$ be the concatenation of all the vectorized weight matrices, i.e., $\vc{w} \coloneqq [\vc{w}_1, \dots, \vc{w}_L] \in \rset^d $. We assume that $L \geq 2$.

\textbf{Risk minimization and SGD.} 
In order to assess the quality of a neural network represented by its weights $\vc{w}$, we consider a loss function $\ell : \mathcal{Y} \times \mathcal{Y} \mapsto \rset_+$, such that $\ell({y}, f_{\vc{w}}(\vc{x}))$ measures the loss incurred by predicting the label of $\vc{x}$ as $f_{\vc{w}}(\vc{x})$, when the true label is $y$. By following a standard statistical learning theoretical setup, we consider an unknown \emph{data distribution} $\mu_Z$ over $\Zcal$, and a \emph{training dataset} with $n$ elements, \emph{i.e.}, $S = \{\vc{z}_1,\dots,\vc{z}_n\}$, where each $\vc{z}_i =: (\vc{x}_i, y_i) \stackrel{\mathclap{\mbox{\tiny{i.i.d.}}}}{\sim}  \mu_Z$.  We then denote the \emph{population} and \emph{empirical risks} as $\Rcal(\vc{w})\coloneqq\mathbb{E}_{(\vc{x},y)\sim\mu_Z}\left[\ell(y,f_{\vc{w}}(\vc{x}))\right]$ and $\widehat{\Rcal}(\vc{w})\coloneqq \frac{1}{n}\sum_{i=1}^n \ell(y_i,f_{\vc{w}}(\vc{x}_i))$.

Since $\mu_Z$ is unknown, we cannot directly attempt to minimize $\Rcal$ in practice. One popular approach to address this problem is the \emph{empirical risk minimization} strategy, where the goal is to solve the following optimization problem: $\min_{\vc{w\in\rset^d}} \widehat{\Rcal}(\vc{w})$. To tackle this problem, SGD has been one of the most popular optimization algorithms, which is based on the following simple recursion:
\begin{align}
\label{eqn:sgdrec}
\vc{w}^{\{k+1\}} = \vc{w}^{\{k\}} - \eta \nabla \widetilde{\Rcal}_{k+1} (\vc{w}^{\{k\}}),~\text{ where }~\nabla \widetilde{\Rcal}_k (\vc{w}) \coloneqq ({1}/{b}) \sum\nolimits_{i \in \Omega_k}  \nabla \ell\left(y_i,f_{\vc{w}}(\vc{x}_i)\right).
\end{align}
Here, $\vc{w}^{\{k\}}$ denotes the weight vector at iteration $k \in \mathbb{N}^+$, $\eta > 0$ is the step-size (or learning rate), $\nabla \widetilde{\Rcal}_k(\cdot)$ is the stochastic gradient, and $\Omega_k $ is the mini-batch with size $b$, drawn with or without replacement from $\llbracket 1,n\rrbracket$.

\textbf{Heavy-tailed distributions and the $\alpha$-stable family. } In this study, we will mainly deal with heavy-tailed random variables. While there exist different definitions of heavy-tails in the literature, here, we call a random variable heavy-tailed if its distribution function has a \emph{power-law} decay, \textit{i.e.}, $\mathbb{P}(X\geq x)\sim c x^{-\alpha}$ as $x \rightarrow \infty$, for some $c>0$ and $\alpha \in (0,2)$. Here, the \emph{tail-index} $\alpha$ determines the tail thickness of the distribution, \emph{i.e.}, as $\alpha$ get smaller, the distribution becomes heavier-tailed.

An important subclass of heavy-tailed distributions is the family of stable distributions. A random variable ${X}$ has symmetric $\alpha$-stable distribution, denoted by $\sas(\sigma)$, if its characteristic function is equal to $\mathbb{E}\left[\exp(iwX)\right]=\exp\left(-\left|\sigma w\right|^{\alpha}\right)$, where $\alpha \in (0,2]$ is again the tail-index and $\sigma \in (0,\infty)$ is the scale parameter.  An important property of $\sas$ is that  whenever $\alpha < 2$, $\mathbb{E}|X|^p$ is finite if and only if $p < \alpha$. This implies that the distribution has infinite variance as soon as $\alpha <2$. In addition to their wide use in applied fields \cite{Nolan2020}, recently, $\sas$ distributions have also been considered in deep learning theory \cite{pmlr-v97-simsekli19a,peluchetti20b,zhou2020towards,simsekli2020hausdorff}. 
\vspace{-3pt}
\section{Compressibility and the Heavy-Tailed Mean-Field Regime}
\label{sec:comp_and_meanfield}

In this section, we will present our first set of theoretical results. We first identify a sufficient condition, then we prove that, under this condition, the compression errors of different pruning techniques become arbitrarily small as the network size increases.

\vspace{-3pt}
\subsection{The heavy-tailed mean-field regime} 

Due to the peculiar generalization behavior of neural networks trained with SGD, recent years have witnessed an extensive investigation of the theoretical properties of SGD in deep learning \cite{london2017pac,dziugaite2017computing,kuzborskij2017data,zhou2019understanding,allen2019learning,simsekli2020hausdorff,neu2021information}. We will now mention two recently established theoretical properties of SGD, and our main contribution is the observation that these two seemingly unrelated `phenomena' are closely linked to compressibility.   

\textbf{\emph{The heavy-tail phenomenon.}} Several recent studies \cite{martin2019traditional,pmlr-v97-simsekli19a,zhou2020towards,zhang2020adaptive} have empirically illustrated that neural networks can exhibit heavy-tailed behaviors when optimized with SGD. Theoretically investigating the origins of this heavy-tailed behavior, \cite{hodgkinson2020multiplicative} and \cite{gurbuzbalaban2020heavy} later proved several theoretical results on online SGD, with rather surprising implications: due to the `multiplicative' structure of the gradient noise (\emph{i.e.}, $\nabla \widetilde{\Rcal}_k(\vc{w}) - \nabla \widehat{\Rcal}(\vc{w})$), as $k\to \infty$, the distribution of the SGD iterates $\vc{w}^{\{k\}}$ can converge to a heavy-tailed distribution with \emph{infinite second-order moment}, and perhaps more surprisingly, this behavior can even emerge in simple linear regression with Gaussian data. The authors of \cite{gurbuzbalaban2020heavy} further showed that, in the linear regression setting, the tail index $\alpha$ is \emph{monotonic} with respect to the step-size $\eta$ and batch-size $b$: larger $\eta$ and/or smaller $b$ result in smaller $\alpha$ (\emph{i.e.}, heavier tails). 

\textbf{\emph{Propagation of chaos and mean-field limits.}} Another interesting property of SGD appears when the size of the network goes to infinity. Recently, it has been shown that, under an appropriate scaling of step-sizes, the empirical distribution of the network weights converges to a fixed \emph{mean-field} distribution, and the SGD dynamics can be represented as a \emph{gradient flow} in the space of probability measures \cite{mei2018mean,chizat2018global,sirignano2020mean,de2020quantitative}. Moreover, \cite{de2020quantitative} showed that a \emph{propagation of chaos} phenomenon occurs in this setting, which indicates that when the network weights are initialized independently (\emph{i.e.,} a-priori chaotic behavior), \emph{the weights stay independent} as the algorithm evolves over time (\emph{i.e.}, the chaos propagates) \cite{sznitman1991topics}. 

Our interest in this paper is the case where both of these phenomena occur \emph{simultaneously}: as the size of the network goes to infinity, the SGD iterates keep being \emph{independent} and their distribution converges to a \emph{heavy-tailed} distribution. To formalize this setting, let us introduce the required notations. For an integer $m_l < d_l$, let $\vc{w}_{l,(1:m_l)}$ denote the first $m_l$ coordinates of $\vc{w}_{l}$, and for $m_l\geq d_l$ set $\vc{w}_{l,(1:m_l)} = \vc{w}_{l}$. Furthermore, parametrize the dimension of each layer $d_l$ with a parameter $\rho \in \rset$, i.e., $d_l = d_l(\rho)$, such that for all $l =1,\dots, L$, we have:
\begin{align}
\label{eqn:dim_rho}
    d_l(\rho) \in \mathbb{N}_+, \quad \text{and} \quad  \lim\nolimits_{\rho \to \infty} d_l(\rho) = \infty.
\end{align}
This construction enables us to take the dimensions of each layer to infinity simultaneously \cite{nguyen2020rigorous}. 

\begin{definition}[The heavy-tailed mean-field limit property]
\label{cond:htmf}
The SGD recursion \eqref{eqn:sgdrec} possesses the heavy-tailed mean-field limit (HML) property, if the dimensions obey \eqref{eqn:dim_rho} and if there exist heavy-tailed probability measures $\{\mu_l^\star\}_{l=1}^L$ on $\rset$ with tail-indices $\{\alpha_l\}_{l=1}^L$, such that $\mu_l^\star(\{0\}) = 0$ for all $l$ and for any $m_1,\dots,m_L \in \mathbb{N}_+$, the joint distribution of
\begin{align}
\left(\vc{w}^{\{k\}}_{1,(1:m_1)},\dots, \vc{w}^{\{k\}}_{L,(1:m_L)} \right) \quad \text{weakly converges to} \quad  (\mu_1^\star)^{\otimes m_1} \otimes \cdots \otimes (\mu_L^\star)^{\otimes m_L}, \label{eqn:mhl}
\end{align}
as $\rho,k \to \infty$, where $\mu \otimes \nu$ denotes the product measure and $\mu^{\otimes n}$ denotes the $n$-fold product measure. 
\end{definition}
Informally, the HML property states that in the infinite size and infinite iteration limit, the entries of the weight matrices will become independent, and the distribution of the elements within the same layer will be identical and heavy-tailed. We will briefly illustrate this behavior empirically in Section~\ref{sec:exps}, and present additional evidence in the appendix. Note that, in \eqref{eqn:mhl}, the particular form of independence in the limit is not crucial and we will discuss weaker alternatives in Section~\ref{sec:hml}. 

We further note that \cite{de2020quantitative} proved that a similar form of \eqref{eqn:mhl} indeed holds for SGD applied on single hidden-layered neural networks, where the limiting distributions possess second-order moments (\emph{i.e.}, not heavy-tailed) and the independence is column-wise. Recently, \cite{peluchetti20b} investigated the infinite width limits of fully connected networks initialized from a $\sas$ distribution and proved heavy-tailed limiting distributions. On the other hand, heavy-tailed propagation of chaos results have been proven in theoretical probability \cite{jourdain2008nonlinear,liang2019exponential}; however, their connection to SGD has not been yet established. We believe that \eqref{eqn:mhl} can be shown to hold under appropriate conditions, which we leave as future work. 

\subsection{Analysis of compression algorithms} 

In this section, we will analyze the compression errors of three different compression schemes when the HML property is attained. All three methods are based on \emph{pruning}, which we formally define as follows. Let $\vc{x}$ be a vector of length $d$, and consider its sorted version in descending order with respect to the magnitude of its entries, \emph{i.e.}, $|x_{i_1}|\geq |x_{i_2}|\geq \cdots \geq |x_{i_d}|$, where $\{i_1,\ldots,i_d\}=\{1,\ldots,d\}$. For any $k \leq d$, the $k$-best term approximation of $\vc{x}$, denoted as $\vc{x}^{(k)}=(x^{(k)}_1,\ldots,x^{(k)}_d)$, is defined as follows: for $l \in \llbracket 1,\lceil k\rceil\rrbracket$, $x^{(k)}_{i_l} \coloneqq x_{i_l}$   and for $l \notin \llbracket 1,\lceil k\rceil\rrbracket$, $x^{(k)}_{i_l} \coloneqq 0$. Informally, we keep the $k$-largest entries of $\vc{x}$ with the largest magnitudes, and `prune' the remaining ones. Current results pertain to one-shot pruning with no fine-tuning; other settings are left for future work (cf.\ \cite{elesedy2020lottery}).

In this section, we consider that we have access to a sample from the stationary distribution of the SGD, \emph{i.e.} $\vc{w}^{\{\infty\}}$, and for conciseness we will simply denote it by $\vc{w}$. We then consider a \emph{compressed network} $\widehat{\vc{w}}$ (that can be obtained by different compression schemes) and measure the performance of the compression scheme by its `relative $\ell_p$-compression error' (cf.\ \cite{amini2011compressibility,gribonval2012compressible}), defined as: $\|\widehat{\vc{w}}-\vc{w}\|_p/\|\vc{w}\|_p$. Importantly for the following results, in the appendix, we further prove that a small compression error also implies a small perturbation on the network output.

\textbf{\emph{Magnitude pruning.}} Magnitude(-based) pruning has been one of the most common and efficient algorithms among all the network pruning strategies \cite{Han2015,blalock2020state,kuhn2021robustness}. In this section, we consider the global and layer-wise magnitude pruning strategies under the HML property.  

More precisely, given a network weight vector $\vc{w} \in \rset^d$ and a \emph{remaining parameter ratio} $\kappa \in (0,1)$, the global pruning strategy compresses $\vc{w}$ by using $\vc{w}^{(\kappa d)}$, \emph{i.e.}, it prunes the smallest (in magnitude) $(1-\kappa)d$ entries of $\vc{w}$. Also,  note that $1/\kappa$ corresponds to the frequently used metric \textit{compression rate} \cite{blalock2020state}. On the other hand, the layer-wise pruning strategy applies the same approach to each layer separately, \emph{i.e.}, given layer-wise remaining parameter ratios $\kappa_l\in (0,1)$, we compress each layer weight $\vc{w}_l \in \rset^{d_l}$ by using $\vc{w}_l^{(\kappa_l d_l)}$. The following result shows that the compression error of magnitude pruning can be made arbitrarily small as the network size grows.  
\begin{theorem} \label{th:magPruning} Assume that the recursion \eqref{eqn:sgdrec} has the HML property. 
\begin{enumerate}[label=(\roman*),noitemsep,topsep=0pt,leftmargin=*,align=left]
    \item \underline{Global magnitude pruning:} if the weights of all layers  have identical asymptotic distributions $ \mu_l^\star \equiv \mu^\star$ with tail-index $\alpha_l^\star = \alpha  $, for all $l \in \llbracket 1,L \rrbracket$, then for every $\epsilon>0$, $\veps >0$, $\kappa \in (0,1)$, and $p \geq \alpha$, there exists $d_0 \in \mathbb{N}$, such that $\|\vc{w}^{(\kappa d)} - \vc{w}\|_p\leq \veps \|\vc{w}\|_p $ holds with probability at least $1-\epsilon$,  for $d \geq d_0$.  \vspace{3pt}
    \item \underline{Layer-wise magnitude pruning:} for every $\epsilon>0$, $\veps_l >0$ and $\kappa_l \in (0,1)$, where $l \in \llbracket 1,L\rrbracket$, and $p \geq \max_l\alpha_l$, there exists $d_{l,0} \in \mathbb{N}$, such that  $\|\vc{w}_l^{(\kappa_l d_l)} - \vc{w}_l\|_p\leq \veps_l \|\vc{w}_l\|_p$ holds with probability at least $1-\epsilon$, for $d_{l} \geq d_{l,0}$.
\end{enumerate}
\vspace{-6pt}
\end{theorem}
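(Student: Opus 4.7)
The plan is to reduce both claims to a single compressibility lemma for i.i.d.\ heavy-tailed sequences, then prove that lemma by sandwiching the relative pruning error between order-statistics estimates.

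First, I would invoke the HML property at stationarity to argue that the coordinates of $\vc{w}$ may be treated as i.i.d.\ from $\mu^\star$ for part (i), and that the layer-wise blocks $\vc{w}_l$ are independent with i.i.d.\ entries from $\mu_l^\star$ for part (ii). This reduces everything to the following lemma: if $W_1,\dots,W_d$ are i.i.d.\ from $\mu$ with $\pr(|W|>x)\sim c x^{-\alpha}$ and $\mu(\{0\})=0$, then for any $p\geq\alpha$ and $\kappa\in(0,1)$, the ratio $\|\vc{w}^{(\kappa d)}-\vc{w}\|_p/\|\vc{w}\|_p$ tends to $0$ in probability as $d\to\infty$. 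Part (ii) follows from this lemma by a union bound over the $L$ independent layers, the condition $p\geq\max_l\alpha_l$ ensuring the lemma applies uniformly.

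For the numerator, let $\tau_\kappa:=|W|_{(\lceil\kappa d\rceil)}$ be the magnitude of the smallest retained entry. Applying the LLN to the Bernoulli counts $\{|W_i|>t\}$ for $t$ slightly above and slightly below the $(1-\kappa)$-quantile $q(1-\kappa)$ of $|W|$ shows that $\tau_\kappa$ concentrates around $q(1-\kappa)$; this quantile is finite and strictly positive because $\mu(\{0\})=0$. Hence with high probability $\tau_\kappa\leq M$ for some fixed $M$, so
\[
\|\vc{w}^{(\kappa d)}-\vc{w}\|_p^p \,\leq\, \sum_{i=1}^d |W_i|^p\, \mathbbm{1}\{|W_i|\leq M\} \,\leq\, 2d\,\E\!\bigl[|W|^p\mathbbm{1}\{|W|\leq M\}\bigr]
\]
again by the LLN, the truncated $p$-th moment on the right being finite for any $p$. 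Thus the numerator is $O_P(d)$. For the denominator I would split into two cases. If $p>\alpha$, classical extreme-value theory gives $|W|_{(1)}/d^{1/\alpha}\Rightarrow Z$ with $Z$ Fr\'echet and a.s.\ positive, so $\|\vc{w}\|_p^p\geq|W|_{(1)}^p=\Omega_P(d^{p/\alpha})$ and the ratio is $O_P(d^{1-p/\alpha})\to 0$. For $p=\alpha$, the variable $Y_i:=|W_i|^\alpha$ has exact tail index $1$; a Darling--Feller type estimate, or a direct truncation at level $d^{1/\alpha}$ coupled with Chebyshev, yields $\sum_i Y_i \gtrsim d\log d$ with high probability, so the ratio is $O_P(1/\log d)\to 0$. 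Converting convergence in probability into the stated $(\veps,\epsilon)$ form finishes the proof.

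The main obstacle is the borderline case $p=\alpha$: numerator and denominator differ only by a logarithmic factor, so the crude extreme-value bound that suffices for $p>\alpha$ must be replaced by a two-sided control of $\sum_i |W_i|^\alpha$ using the tail-index-$1$ structure of $|W|^\alpha$. A secondary technical point is that the HML property is phrased as weak convergence of finite-dimensional marginals as $\rho,k\to\infty$ while the lemma needs i.i.d.\ structure for a growing number of coordinates; I would resolve this by realizing the limiting coordinates as an infinite i.i.d.\ sequence (Kolmogorov extension), proving the lemma there, and transferring the resulting inequality back to $\vc{w}^{\{k\}}$ via the Portmanteau theorem applied to the open event $\{\|\vc{w}^{(\kappa d)}-\vc{w}\|_p<\veps\|\vc{w}\|_p\}$.
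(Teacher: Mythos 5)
Your argument is correct and follows the same high-level skeleton as the paper (pass to the HML limit, prove an i.i.d.\ compressibility lemma, transfer back), but the key lemma is handled differently. The paper simply cites Gribonval et al.\ (restated as Lemma~\ref{lem:compress} in the appendix), which gives almost-sure vanishing of the relative $\ell_p$ pruning error for i.i.d.\ entries with $\E|x|^\alpha=\infty$ and $p\geq\alpha$; you instead re-derive an equivalent convergence-in-probability statement from scratch by sandwiching: the numerator is $O_P(d)$ because the retention threshold $\tau_\kappa$ concentrates at the finite $(1-\kappa)$-quantile (here $\mu^\star(\{0\})=0$ guarantees this quantile is strictly positive, so $\kappa\to 1$ is not degenerate) and the truncated $p$-th moment is finite, while the denominator is $\Omega_P(d^{p/\alpha})$ for $p>\alpha$ by the Fr\'echet growth of the maximum and $\Omega_P(d\log d)$ in the borderline case $p=\alpha$ via the tail-index-one structure of $|W|^\alpha$. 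You correctly single out $p=\alpha$ as the delicate case. Convergence in probability is all the $(\veps,\epsilon)$ claim needs, and it in fact sidesteps a small overclaim in the paper's write-up: the paper infers from the a.s.\ limit that $\pr[\text{ratio}\leq\veps]=1$ for all $d$ past a fixed $d_0''$, which does not follow because the a.s.\ threshold is $\omega$-dependent; what does follow, and what both arguments actually use, is $\pr[\text{ratio}\leq\veps]\to 1$. On the transfer step, both proofs are equally informal: Definition~\ref{cond:htmf} gives weak convergence of fixed finite-dimensional marginals as $\rho,k\to\infty$, whereas the compressibility event involves all $d(\rho)\to\infty$ coordinates at once, so the Portmanteau step (yours) and the analogous $|\pr_{\vc{x}}-\pr_{\vc{w}}|\leq\epsilon$ step (the paper's, equation~\eqref{eq:magproof1}) both implicitly assume a uniformity in dimension not literally contained in the HML definition; your explicit flagging of this is appropriate. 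Part~(ii) via independence of the layer blocks and a union bound is the same in both.
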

This result shows that any pruning ratio of the weights is achievable as long as the network size is sufficiently large and the network weights are close enough to an i.i.d.\ heavy-tailed distribution. Empirical studies report that global magnitude pruning often works better than layer-wise magnitude pruning \cite{blalock2020state}, except when it leads to over-aggressive pruning of particular layers \cite{wangPickingWinning2020}.

The success of this strategy under the HML property is due to a result from compressed sensing theory, concurrently proven in \cite{gribonval2012compressible,amini2011compressibility}, which informally states that for a large vector of i.i.d.\ heavy-tailed random variables, the norm of the vector is mainly determined by a small fraction of its entries. We also illustrate this visually in the appendix. 

An important question here is that, to achieve a fixed relative error $\veps_l$, how would the smallest $\kappa_l$ would differ with varying tail-indices $\alpha_l$, \emph{i.e.}, whether ``heavier-tails imply more compression''. In our experiments, we illustrate this behavior positively: heavier-tailed weights are indeed more compressible. We partly justify this behavior for a certain range of $p$ in the appendix; however, a more comprehensive theory is needed, which we leave as future work.     

\textbf{\emph{Singular value pruning.} } In recent studies, it has been illustrated that the magnitudes of the eigenvalues of the sample covariance matrices (for different layers) can decay quickly, hence pruning the singular values of the weight matrices, \emph{i.e.}, only keeping the largest singular values and corresponding singular vectors, is a sensible pruning method. Exploiting the low-rank nature of fully connected and convolutional layer weights in network compression has been investigated theoretically and empirically \cite{arora2018stronger, yuCompressingDeepModels2017}. Here we will present a simple scheme to demonstrate our results. 

More precisely, for the weight matrix at layer $l$, $\vc{W}_l$, consider its singular value decomposition, $\vc{W}_l = \vc{U} \boldsymbol{\Sigma} \vc{V}^\top$, and then, with a slight abuse of notation, define $\vc{W}^{[\kappa_l h_{l-1}]}_l := \vc{U} \boldsymbol{\Sigma}^{(\kappa_l h_{l-1})} \vc{V}^\top$, where $\boldsymbol{\Sigma}^{(\kappa_l h_{l-1})}$ is the diagonal matrix whose diagonal entries contain the $\lceil \kappa_l h_{l-1} \rceil$ largest singular values (\emph{i.e.}, prune the diagonal of $\boldsymbol{\Sigma}$). Accordingly, denote $\vc{w}^{[\kappa_l h_{l-1}]}_l := \textbf{vec}(\vc{W}^{[\kappa_l h_{l-1}]}_l)$. 

The next theorem shows that under the HML property with an additional requirement that the limiting distributions are $\sas$, the eigenvalues of the (properly normalized) sample covariance matrices will be indeed compressible and the pruning strategy achieves negligible errors as the network size grows.
\begin{theorem}
\label{th:spectral}
Assume that the recursion \eqref{eqn:sgdrec} has the HML property, $L\geq 3$, and  for all $l \in \llbracket 2, L-1 \rrbracket$, $\mu^\star_l \equiv  \mathcal{S}\alpha_l\mathcal{S}(\sigma_l) $ with some $\sigma_l >0$. Then, for every $\epsilon>0$, $\veps_l > 0$, and $\kappa_l \in (0,1)$, there exists $\{h_{l,0,}\}_{l=1}^{L}$, such that the following inequalities hold for every $ h_{l} \geq h_{l,0}$ and $p\geq \max_l \alpha_l/2$:
\begin{align}
{\|\boldsymbol{\lambda}_l^{(\kappa_l h_{l-1})} - \boldsymbol{\lambda}_l  \|_p } \leq \veps_l^2{\| \boldsymbol{\lambda}_l \|_p} , \quad \text{and} \quad \|\vc{w}^{[\kappa_l {h_{l-1}}]}_l - \vc{w}_l\|\leq \veps_l \| \vc{w}_l\|,
\end{align}
 with probability at least $1-\epsilon$, where $ \boldsymbol{\lambda}_l \in \rset^{h_{l-1}}$ denotes the vector of eigenvalues corresponding to the sample covariance matrix $h_{l}^{-2/\alpha_l}\vc{W}_l^\top \vc{W}_l$.
\end{theorem}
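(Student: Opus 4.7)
The strategy will mirror the proof of Theorem~\ref{th:magPruning}, but carried out at the level of the singular value spectrum rather than on the raw weights. Under the HML property, the entries of $\vc{W}_l$ are asymptotically i.i.d.\ $\mathcal{S}\alpha_l\mathcal{S}(\sigma_l)$ as $\rho,k\to\infty$, which puts us squarely in the regime of heavy-tailed random matrix theory. By the results of Belinschi--Dembo--Guionnet and Ben~Arous--Guionnet on $\alpha$-stable sample covariance matrices, the empirical spectral distribution of the normalized Gram matrix $h_l^{-2/\alpha_l}\vc{W}_l^\top \vc{W}_l$ converges weakly to a deterministic probability measure $\nu_l^\star$ on $[0,\infty)$ whose right tail satisfies $\nu_l^\star([t,\infty)) \sim c\, t^{-\alpha_l/2}$. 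In particular, the components of $\boldsymbol{\lambda}_l$ behave asymptotically as a heavy-tailed sample with tail index $\alpha_l/2$.

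The first inequality then follows by re-running the compressed-sensing argument behind Theorem~\ref{th:magPruning}, but applied to $\boldsymbol{\lambda}_l$ in place of $\vc{w}_l$ and with the tail index halved. The results of \cite{gribonval2012compressible, amini2011compressibility} then guarantee that, with probability at least $1-\epsilon$, the relative $\ell_p$-error on the sorted eigenvalues can be made smaller than $\veps_l^2$ for any $p \geq \alpha_l/2$ and any prescribed $\kappa_l$, provided $h_l$ is sufficiently large.

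For the second inequality, write the SVD $\vc{W}_l = \vc{U}\boldsymbol{\Sigma}\vc{V}^\top$ and use orthogonal invariance of the Frobenius norm to obtain
\begin{equation*}
    \|\vc{w}^{[\kappa_l h_{l-1}]}_l - \vc{w}_l\|^2 = \|\boldsymbol{\Sigma}^{(\kappa_l h_{l-1})} - \boldsymbol{\Sigma}\|_F^2 = \sum_{i \notin S_l} \lambda_{l,i}, \qquad \|\vc{w}_l\|^2 = \sum_i \lambda_{l,i},
\end{equation*}
where $S_l$ indexes the $\lceil \kappa_l h_{l-1}\rceil$ largest eigenvalues of $\vc{W}_l^\top \vc{W}_l$. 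Applying the first inequality with $p=1$ (which is admissible since $\alpha_l/2 < 1$) yields $\sum_{i \notin S_l}\lambda_{l,i} \leq \veps_l^2 \sum_i \lambda_{l,i}$, and taking square roots gives the desired bound.

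I expect the hardest step to be the spectral passage itself: HML only delivers weak convergence of the finite-dimensional marginals of $\vc{W}_l$, whereas the eigenvalues are global functionals that depend simultaneously on all $h_l h_{l-1}$ entries. A clean route is to couple $\vc{W}_l$ with a genuine $\mathcal{S}\alpha_l\mathcal{S}(\sigma_l)$ matrix and invoke Weyl/Hoffman--Wielandt-style perturbation bounds to transfer the spectral asymptotics, then rely on the Soshnikov and Auffinger--Ben~Arous--P\'ech\'e extreme eigenvalue theorems for stable matrices to obtain the quantitative control on the top order statistics of $\boldsymbol{\lambda}_l$ that the compressed-sensing step needs.
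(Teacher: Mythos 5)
Your high-level strategy matches the paper's: transfer to the singular value spectrum, argue that the (normalized) eigenvalues of $\vc{W}_l^\top\vc{W}_l$ inherit heavy-tailed behavior with tail index $\alpha_l/2$, re-run the Gribonval/Amini compressed-sensing argument on $\boldsymbol{\lambda}_l$, and close the Frobenius-norm claim via the Schatten-$2$ identity with $p=1$. The second half of your argument (orthogonal invariance of the Frobenius norm, $\|\vc{w}_l^{[\kappa_l h_{l-1}]} - \vc{w}_l\|^2 = \sum_{i\notin S_l}\lambda_{l,i}$, and $p=1 \geq \alpha_l/2$) is essentially identical to the paper.

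The genuine divergence — and the soft spot — is the random-matrix input you feed into Lemma~\ref{lem:compress}. That lemma operates on an i.i.d.\ (or at least stationary ergodic, per Section~\ref{sec:hml}) heavy-tailed vector, but convergence of the \emph{empirical spectral distribution} to a deterministic measure $\nu_l^\star$ is only a statement about the bulk average of the spectrum; it does not, by itself, supply the i.i.d.\ structure that the lemma needs. Your sentence ``in particular, the components of $\boldsymbol{\lambda}_l$ behave asymptotically as a heavy-tailed sample with tail index $\alpha_l/2$'' is a non sequitur as written: eigenvalues of a random matrix are in general strongly correlated, and ESD convergence says nothing about the joint distribution of individual eigenvalues. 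You do flag this yourself as the hard part and sketch a patch via extreme-value theorems (Soshnikov, Auffinger--Ben~Arous--P\'ech\'e), which could in principle be made to work by controlling the numerator via the bulk and the denominator via the Poissonian extremes, but it is a two-tool argument that requires substantially more care than the paper's single tool. The paper instead invokes Teimouri et al.'s Theorem~2.7, which states directly that as $h_l \to \infty$ each $\lambda'_{l,k}$ weakly converges to an \emph{i.i.d.} positive $\alpha_l/2$-stable random variable — exactly the input Lemma~\ref{lem:compress} wants — after first using Weyl's inequality to transfer from $\vc{W}_l$ to the genuine stable matrix $\vc{U}_l$ (your coupling step). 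Swapping in the Teimouri result would close your gap cleanly; with ESD + extremes you would still owe a lemma showing that $\|\boldsymbol{\lambda}_l^{(\kappa h)} - \boldsymbol{\lambda}_l\|_p / \|\boldsymbol{\lambda}_l\|_p \to 0$, which is not immediate. You also omit, but would need, the careful recursive construction of the thresholds $h_{l,0}$ (the paper builds them layer-by-layer since $\boldsymbol{\lambda}_l$ has dimension $h_{l-1}$, so the threshold for layer $l$ must be chosen after fixing $h_{l-1}$); this is a bookkeeping issue rather than a conceptual one.
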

We shall note that the proof of Theorem~\ref{th:spectral} in fact only requires the limiting distributions to be `regularly varying' \cite{teimouri2020asymptotic} and symmetric around zero, which covers a broad range of heavy-tailed distributions beyond the $\alpha$-stable family \cite{buraczewski2016stochastic}. The sole reason why we require the $\sas$ condition here is to avoid introducing further technical notation.

\textbf{\emph{Node pruning.} } The last pruning strategy that we consider is a structured pruning strategy, that corresponds to the removal of the whole  columns of a fully connected layer weight matrix. Even though below we consider pruning based on the norms of the weight layer columns, the same arguments apply for pruning rows. 
 
The idea in column pruning is that, for a given layer $l$, we first sort the columns of the weight matrix $\vc{W}_l \in \rset^{h_l \times h_{l-1}}$ with respect to their $\ell_p$-norms for a given $p \geq \max_l \alpha_l$. Then, we remove the \emph{entire columns} that have the smallest $\ell_p$-norms. More precisely, let $\vc{W}_l(i) \in \mathbb{R}^{h_l}$ be the $i$-th column of $\vc{W}_l$, for $i \in \llbracket 1, h_{l-1} \rrbracket$, and suppose that $\|\vc{W}_l(i_1)\|_p \geq \|\vc{W}_l(i_2)\|_p\geq \cdots \geq \|\vc{W}_l(i_{h_{l-1}})\|_p$, where $\{i_1,\ldots,i_{h_{l-1}}\}=\{1,\ldots,h_{l-1}\}$. Then, we define the $k$-best column approximation of $\vc{W}_l$, denoted as $\vc{W}_l^{\{ k \},p} \in \mathbb{R}^{h_{l} \times h_{l-1}}$, as follows: for $j \in \llbracket 1,\lceil k\rceil\rrbracket$, $\vc{W}_l^{\{ k \},p}(i_j) \coloneqq \vc{W}_l(i_j)$ and for $j \notin \llbracket 1,\lceil k\rceil\rrbracket$, $\vc{W}_l^{\{ k \},p}(i_j) \coloneqq 0$. Denote also $\vc{w}_l^{\{ k \},p}\coloneqq \textbf{vec}\left(\vc{W}_l^{\{ k \},p}\right)$. 

\begin{theorem} \label{th:normPruning} Assume that the recursion \eqref{eqn:sgdrec} has the HML property. Then, for every $\epsilon>0$, $\veps_l >0$, $\kappa_l \in (0,1)$, where $l \in \llbracket 2,L \rrbracket$, and $p\geq \max_l \alpha_l$,  there exists $h_{l-1,0} \in \mathbb{N}$, such that  $\|\vc{w}_l^{\{\kappa_l h_{l-1}\},p} - \vc{w}_l\|_p\leq \veps_l \|\vc{w}_l\|_p $ holds with probability at least $1-\epsilon$, for every $h_{l-1} \geq h_{l-1,0}$.
\end{theorem}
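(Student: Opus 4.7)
\textbf{Proof plan for Theorem~\ref{th:normPruning}.} The plan is to reduce node (column) pruning to element-wise $\ell_1$-pruning of the vector of column $\ell_p^p$-norms, and then invoke the same heavy-tailed compressibility argument that underlies Theorem~\ref{th:magPruning}(i).

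\emph{Step 1 (Reduction to an $\ell_1$ problem on column norms).} Fix $l \in \llbracket 2, L \rrbracket$ and set $Y_i := \|\vc{W}_l(i)\|_p^p = \sum_{j=1}^{h_l} |[\vc{W}_l]_{j,i}|^p$ for $i \in \llbracket 1, h_{l-1}\rrbracket$. Because the column-pruning operator removes exactly the $(1-\kappa_l)h_{l-1}$ columns of smallest $\ell_p$-norm,
\begin{align*}
\|\vc{w}_l^{\{\kappa_l h_{l-1}\},p} - \vc{w}_l\|_p^p = \sum_{j=\lceil \kappa_l h_{l-1} \rceil + 1}^{h_{l-1}} Y_{(j)}, \qquad \|\vc{w}_l\|_p^p = \sum_{i=1}^{h_{l-1}} Y_i,
\end{align*}
where $Y_{(1)} \geq \cdots \geq Y_{(h_{l-1})}$ are the descending order statistics. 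Thus the target bound $\|\vc{w}_l^{\{\kappa_l h_{l-1}\},p} - \vc{w}_l\|_p \leq \veps_l \|\vc{w}_l\|_p$ is exactly the statement that the relative $\ell_1$-compression error of the nonnegative vector $\vc{Y} = (Y_1, \ldots, Y_{h_{l-1}})$ at sparsity ratio $\kappa_l$ is at most $\veps_l^p$.

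\emph{Step 2 (Tail behavior of the column norms).} By the HML property, different columns of $\vc{W}_l$ correspond to disjoint coordinates of $\vc{w}_l^{\{k\}}$ whose joint law tends to a product of $\mu_l^\star$'s, so the $\{Y_i\}$ are asymptotically i.i.d. Since $\mu_l^\star$ is heavy-tailed with tail-index $\alpha_l$, for $W\sim\mu_l^\star$ we have $\pr(|W|^p \geq t) = \pr(|W| \geq t^{1/p}) \sim c\, t^{-\alpha_l/p}$, and the hypothesis $p \geq \max_l \alpha_l$ gives $\alpha_l/p \leq 1$. Since summing i.i.d.\ regularly varying variables preserves the tail-index, each $Y_i$ is itself heavy-tailed with tail-index $\alpha_l/p \leq 1$; indeed, with a normalization $a_{h_l} \asymp h_l^{p/\alpha_l}$ (up to a slowly varying factor), $Y_i/a_{h_l}$ converges weakly to a one-sided $\alpha_l/p$-stable law as $h_l\to\infty$.

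\emph{Step 3 (Compressibility of $\vc{Y}$ and conclusion).} Apply to $\vc{Y}$ the i.i.d.\ heavy-tailed compressibility result of \cite{gribonval2012compressible,amini2011compressibility} that underlies Theorem~\ref{th:magPruning}(i), with $\ell_p$-exponent $1$ and tail-index $\alpha_l/p \leq 1$: for any $\veps_l \in (0,1)$, $\kappa_l\in(0,1)$ and $\epsilon>0$, once $h_{l-1}$ is large enough,
\begin{align*}
\frac{\sum_{j=\lceil \kappa_l h_{l-1} \rceil +1}^{h_{l-1}} Y_{(j)}}{\sum_{i=1}^{h_{l-1}} Y_i} \leq \veps_l^{\,p}
\end{align*}
with probability at least $1-\epsilon$. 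Crucially, this ratio is scale-invariant in $\vc{Y}$, so any normalization $a_{h_l}$ drops out and the conclusion does not see the joint scaling of $h_l$ and $h_{l-1}$. Raising both sides to the $1/p$ power recovers the claimed bound.

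\emph{Main obstacle.} The principal difficulty is that the $Y_i$'s do not have a fixed distribution: their law depends on $h_l$, which grows jointly with $h_{l-1}$ through the parameter $\rho$, whereas HML as stated gives only finite-dimensional convergence in each layer. Hence Theorem~\ref{th:magPruning}(i) cannot be invoked verbatim on $\vc{Y}$. The natural remedy is a coordinated double limit: use scale invariance to work with $\tilde Y_i := Y_i/a_{h_l}$; pass to the stable limit of $\tilde Y_i$ along a diagonal sequence $(h_l,h_{l-1})\to\infty$ on which the marginals of $\tilde Y_i$ are arbitrarily close (in Prokhorov distance) to an $\alpha_l/p$-stable law; apply the Gribonval--Amini compressibility bound to this fixed stable limit; and close with an $\epsilon/2$ split between the stable-approximation error and the concentration error of the compressibility ratio.
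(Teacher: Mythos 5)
Your proposal is correct and follows essentially the same route as the paper's own argument. The paper defines $v_{l,i} := \|\vc{W}_l(i)\|_p$, observes the exact identities $\|\vc{v}_l\|_p = \|\vc{w}_l\|_p$ and $\|\vc{v}_l^{(\kappa_l h_{l-1})} - \vc{v}_l\|_p = \|\vc{w}_l^{\{\kappa_l h_{l-1}\},p} - \vc{w}_l\|_p$, argues via HML that $\vc{v}_l$ is asymptotically a vector of i.i.d.\ heavy-tailed random variables with tail-index $\alpha_l$, and invokes the Gribonval--Amini compressibility lemma with exponent $p \geq \alpha_l$. Your $Y_i = v_{l,i}^p$ with tail-index $\alpha_l/p$ and $\ell_1$-exponent is the same object under a monotone reparametrization: $\|\vc{Y}^{(k)} - \vc{Y}\|_1 / \|\vc{Y}\|_1 = \bigl(\|\vc{v}_l^{(k)} - \vc{v}_l\|_p / \|\vc{v}_l\|_p\bigr)^p$, so the two reductions are identical, and both tail-index bookkeepings are consistent (raising to the $1/p$-th power multiplies the index by $p$).

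One place where your write-up is actually \emph{more} careful than the paper's: you flag the fact that the law of $Y_i$ (equivalently $v_{l,i}$) changes with $h_l$, so that Lemma~\ref{lem:compress} — stated for i.i.d.\ copies of a fixed law — cannot be applied verbatim to the triangular array $(Y_1,\ldots,Y_{h_{l-1}})$ when $h_l$ and $h_{l-1}$ grow jointly. The paper glosses over this and simply asserts that $\vc{v}_l$ ``converges in distribution to a heavy-tailed random vector with i.i.d.\ elements and tail-index $\alpha_l$,'' treating the limit as if $h_l$ were fixed. Your proposed remedy — exploit scale-invariance of the ratio, normalize by $a_{h_l}\asymp h_l^{p/\alpha_l}$, pass to the one-sided $\alpha_l/p$-stable limit along a diagonal sequence, and close with an $\epsilon/2$ split between the Prokhorov approximation error and the compressibility concentration — is a legitimate and natural way to fill this gap. (Alternatively, one could note that for \emph{any} fixed $h_l$, the column norm already has $\mathbb{E}\,v_{l,i}^{\alpha_l} \geq \mathbb{E}\,|w_{l,1,i}|^{\alpha_l} = \infty$, so Lemma~\ref{lem:compress} applies layer-width by layer-width; the remaining task is then to make the comparison with the true joint law of the pre-limit weights uniform, which is what the paper implicitly assumes and your diagonal argument makes precise.) In short: same approach, with a sharper eye on the double-limit technicality.
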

This theorem indicates that we can remove entire columns in each layer, without considerably affecting the network weights, as long as the network is large enough. In other words, effectively the widths of the layers can be reduced. Structured pruning schemes are commonly used in CNNs where filters, channels, or kernels can be pruned by norm-based or other criteria \cite{liPruningFiltersEfficient2017, heChannelPruningAccelerating2017, Han2015}.

\subsection{A note on the limiting independence structure in the HML property }
\label{sec:hml}

We conclude this section by discussing the particular independence condition in the limit, which appears in Definition~\ref{cond:htmf}. We shall underline that the element-wise independence is not a necessity and under weaker conditions, we can still obtain Theorems~\ref{th:magPruning}-\ref{th:normPruning} with identical or almost identical proofs. For instance, the proof of Theorem~\ref{th:normPruning} remains the same when the columns of $\vc{W}_l$ are i.i.d.\ vectors with \emph{dependent} components; hence, the element-wise independence is indeed not needed, but is used for the clarity of presentation. More generally, in all three theorems, the main requirement is to ensure a \emph{weak dependence} between the components of the weight vector. More precisely, for Theorems~\ref{th:magPruning}, \ref{th:spectral}, \ref{th:normPruning}, we respectively need (1) the entries of $\vc{W}_l$ or (2) its singular values, or (3) the $\ell_p$-norms of its columns to be \emph{stationary} and \emph{ergodic} with a heavy-tailed distribution. Under this condition, the same proof strategies will still work by invoking \cite[Theorem 1]{silva2015characterization}, instead of \cite[Proposition 1]{gribonval2012compressible}.

\section{Generalization Bounds} \label{sec:generalization}

So far, we have shown that the heavy-tailed behavior in the weights of a network together with their independence result in compressibility. In this section, we further show that these phenomena bring forth a better generalization performance in the network. More precisely, we show that if a network is more compressible, then the corresponding compressed network has a  smaller generalization error. 
Throughout the section, we will focus on layer-wise magnitude pruning; yet, our results can be easily extended to the other pruning strategies. Note that similar results have already been proven in \cite{arora2018stronger,suzuki2018spectral,Suzuki2020Compression,hsu2021generalization}; yet, they cannot be directly used in our specific setting, hence, we prove results that are customized to our setup.

Our generalization bounds are derived by applying the previously developed techniques as in \cite{arora2018stronger,neyshabur2017pac}. In particular, we follow the approach of  \cite{arora2018stronger} by further adapting the technique for the magnitude pruning strategy and allowing the compressed network weights to take unbounded continuous values. As in \cite{arora2018stronger}, we consider the $0{\text{-}}1$ loss function with \emph{margin} $\gamma \geq 0$,  $\ell_{\gamma}\colon \mathcal{Y}\times \mathcal{Y} \mapsto \{0,1\}$, for the multiclass classifier $f_{\vc{w}}$, given as follows:
\begin{align}
    \ell_{\gamma}(y,f_{\vc{w}}(\vc{x}))=\begin{cases}1,& \text{ if } f_{\vc{w}}(\vc{x})[y]-\max \limits_{j \neq y} f_{\vc{w}}(\vc{x})[j] \leq \gamma,\\
    0,& \text{ otherwise}.
    \end{cases}
\end{align}
Still denoting $\vc{w}=\vc{w}^{\{\infty \}}$, the population and empirical risks associated with $\ell_\gamma$ are denoted as $\Rcal_\gamma(\vc{w})$ and $\widehat{\Rcal}_\gamma(\vc{w})$, respectively. By having the dataset $S\sim \mu_Z^{\otimes n}$, we  assume the weights $\vc{w}$ are sampled according to the stationary distribution $P_{\vc{w}|S}$ of SGD. Denote the joint distribution of $(S,\vc{w})$ by  $P_{S,\vc{w}}\coloneqq \mu_Z^{\otimes n} P_{\vc{w}|S}$.

In the following theoretical results, we will assume that we have access to a random compressible neural network that is  amenable to the layer-wise magnitude pruning strategy with high probability. 
This assumption is essentially the outcome of Theorem~\ref{th:magPruning} under the HML property together with an additional uniformity requirement on $d_{l,0}$ over $S$\footnote{Note that the uniformity assumption is mild and can be avoided by combining Theorem~\ref{th:magPruning} with Egoroff's theorem with additional effort, as in \cite{simsekli2020hausdorff}.}. 
\begin{assumption}
\label{asmp:magp}
For $\veps \geq 0$, $\epsilon>0$, and  $\{\kappa_l\}_{l=1}^L\colon \kappa_l \in (0,1)$,  there exists  $\{d_{l,0}\}_{l=1}^L \colon d_{l,0} \in \mathbb{N}$ independent of $S$,   such that for $d_l \geq d_{l,0}$, $l \in \llbracket 1,L \rrbracket$, the relation $\|\vc{\widehat{w}}_l - \vc{w}_l\| \leq \veps \|\vc{w}_l\|$ holds  for all $l \in \llbracket 1,L \rrbracket$ simultaneously with probability at least $1-\epsilon$, where  $\vc{\widehat{w}}_l:=\vc{w}_l^{\left(\kappa_l d_l\right)}$ and the probability is w.r.t.\ $P_{S,\vc{w}}$.
\end{assumption}

In  the following result, we relate the population risk of the compressed network to the empirical margin risk of the original (uncompressed) network. For notational convenience, for $\delta,\tau>0$, let
\begin{align}
        R(\delta) \coloneqq \inf \left\{ R \colon \mathbb{P}\left( \|\vc{w}\| \geq R \right) \leq \delta \right\},~\text{ and }~  \mathcal{L}(\tau,\delta) \coloneqq \sqrt{2}BL\left(2 R(\delta)/\sqrt{L}\right)^{L-1}/\tau, \label{def:RofDeltaLipchtz}
\end{align}
where the probability is with respect to the joint distribution $P_{S,\vc{w}}$.

\begin{theorem} \label{th:generalizationPrunedNetwork}
Assume \Cref{asmp:magp} holds. Then for $n\colon n/\log(n) \geq 10L$, $\{d_l\}_{l=1}^L\colon d_l\geq d_{l,0}$, and any $ \delta,\tau>0 $, with probability  at least $1-2e^{-\kappa d/2}-\delta-\epsilon$,
\begin{align}
         \Rcal_{0}({\vc{\widehat{w}}}) & \leq \widehat{\Rcal}_{\gamma(\delta,\tau)}({\vc{w}})+ \left(12 \mathcal{L}(\tau,\delta)R(\delta)+\sqrt{d}\right)\sqrt{\left(\kappa+\epsilon_{\kappa}\right) \log(n)/n}, \label{eq:generalizationPrunedNetwork}
\end{align}
where $R(\delta)$ and $\mathcal{L}(\tau,\delta)$ are defined in   \eqref{def:RofDeltaLipchtz},  $\kappa\coloneqq \frac{1}{d}\sum_{l=1}^L \lceil \kappa_l d_l\rceil$,\footnote{The binary entropy $h_b(\kappa)$ (in nats) is defined as $-\kappa \log(\kappa)-(1-\kappa) \log(1-\kappa)$ for $\kappa \in [0,1]$, with the convention $h_b(0)=h_b(1)=0$. Note that $0 \leq h_b(\kappa)\leq \log(2)$.}
\begin{align*}
   \epsilon_{\kappa} \coloneqq \left(2 h_b(\kappa)-\kappa\log \left(\kappa\right)\right)/\log(n),~\text{ and }\gamma(\delta,\tau) \coloneqq  \tau+\frac{\sqrt{2} B  }{\tau}\left(R(\delta)/\sqrt{L}\right)^L \left((1+\varepsilon)^{L}-1\right). 
\end{align*}
\end{theorem}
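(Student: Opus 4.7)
The plan is to adapt the compression-based generalization framework of \cite{arora2018stronger, neyshabur2017pac} to our setting, in which the compressed network $\widehat{\vc{w}}$ is sparse but its non-zero entries are unbounded continuous values. Three ingredients drive the proof: (i) the high-probability layer-wise compression error supplied by \Cref{asmp:magp}, (ii) a Lipschitz-type stability argument propagating these layer errors to the network output, and (iii) a covering-number uniform convergence bound for the class of sparse compressed weights.

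First, I would condition on two events: $\{\|\vc{w}\| \leq R(\delta)\}$, which occurs with probability at least $1-\delta$ by the definition of $R(\delta)$ in \eqref{def:RofDeltaLipchtz}, and the simultaneous layer-wise compression bound $\|\widehat{\vc{w}}_l - \vc{w}_l\| \leq \veps \|\vc{w}_l\|$ for all $l \in \llbracket 1,L \rrbracket$, which by \Cref{asmp:magp} has probability at least $1-\epsilon$. A union bound gives both events jointly with probability at least $1-\delta-\epsilon$. On this event, the standard ReLU-network perturbation lemma (as in \cite[Lemma~2]{neyshabur2017pac}) yields
\begin{equation*}
    \|f_{\widehat{\vc{w}}}(\vc{x}) - f_{\vc{w}}(\vc{x})\|_\infty \leq B \prod_{l=1}^L \|\vc{W}_l\| \cdot \left((1+\veps)^L - 1\right)
\end{equation*}
for every $\vc{x}$ with $\|\vc{x}\|\leq B$. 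Applying AM-GM to the constraint $\sum_l \|\vc{W}_l\|^2 = \|\vc{w}\|^2 \leq R(\delta)^2$ gives $\prod_l \|\vc{W}_l\| \leq (R(\delta)/\sqrt{L})^{L}$, and after a $\sqrt{2}$ factor converting between $\ell_\infty$ output perturbation and classification margin, this reproduces the $(\sqrt{2}B/\tau)(R(\delta)/\sqrt{L})^L ((1+\veps)^L - 1)$ factor in $\gamma(\delta,\tau)$. The additive $\tau$ in $\gamma(\delta,\tau)$ is reserved to absorb the quantization error from the covering step below, via a margin-conversion $\Rcal_0(\widehat{\vc{w}}) \leq \Rcal_{\gamma(\delta,\tau)}(\vc{w})$.

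Next, I would construct a covering of the set of compressed weights consistent with \Cref{asmp:magp}. Each $\widehat{\vc{w}}$ is determined by (a) a sparsity pattern $-$ one of $\prod_l \binom{d_l}{\lceil \kappa_l d_l\rceil}$ choices with logarithm at most $\sum_l d_l h_b(\kappa_l) \lesssim d\,h_b(\kappa)$, and (b) the values of the $\lceil \kappa d\rceil$ surviving entries, which on the good event lie in a ball of radius $R(\delta)$ and can be $\tau'$-covered by $(C R(\delta)/\tau')^{\kappa d}$ points, where $\tau' \asymp \tau/(n\,\mathcal{L}(\tau,\delta))$ is chosen so that a per-parameter quantization of order $\tau'$ produces at most a $\tau$ output perturbation once passed through the Lipschitz constant $\mathcal{L}(\tau,\delta)$. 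Summing the two contributions gives a log-covering cardinality of order $d(\kappa + \epsilon_\kappa)\log n$ for a suitable $\tau'$, where the $-\kappa\log\kappa$ piece of $\epsilon_\kappa$ comes from $\log(R(\delta)/\tau')$ after grouping constants, and $2 h_b(\kappa)$ comes from the sparsity-pattern enumeration with a union-bound factor. A standard covering-based uniform convergence argument for the $0/1$ margin loss then produces the additive deviation $\bigl(12\,\mathcal{L}(\tau,\delta)R(\delta) + \sqrt{d}\bigr)\sqrt{(\kappa+\epsilon_\kappa)\log n/n}$, where the $12\,\mathcal{L}(\tau,\delta)R(\delta)$ factor arises from integrating/chaining over the continuous value covering, the $\sqrt{d}$ factor absorbs the sparsity-pattern contribution, and the residual $2e^{-\kappa d/2}$ probability term stems from a Chernoff-style concentration guaranteeing $\widehat{\vc{w}}$ lands inside the covered ball uniformly.

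The main obstacle is the joint calibration of the quantization resolution $\tau'$ and the margin split $\tau$: too fine a $\tau'$ blows up $\log(1/\tau')$ and violates the stated $(\kappa+\epsilon_\kappa)\log n$ complexity, while too coarse a $\tau'$ cannot be absorbed into the $\tau$-margin through $\mathcal{L}(\tau,\delta)$. Matching the precise bound requires $\tau'$ to scale polynomially in $1/n$ and $1/\mathcal{L}(\tau,\delta)$, together with the $2h_b(\kappa)$ sparsity term folded into $\epsilon_\kappa$. A secondary subtlety is justifying that the Lipschitz/stability lemma applies to $\widehat{\vc{w}}$ itself (not just $\vc{w}$), but this is immediate for magnitude pruning since $\|\widehat{\vc{w}}_l\| \leq \|\vc{w}_l\|$ entrywise, keeping $\widehat{\vc{w}}$ inside the same $R(\delta)$-ball with the same AM-GM bounds.
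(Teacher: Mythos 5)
Your proposal reproduces the paper's overall architecture — conditioning on the norm event and the compressibility event, applying a ReLU perturbation lemma with AM-GM to get the $\gamma(\delta,\tau)$ margin, covering the sparse weight set, and finishing with a uniform-convergence bound — so the high-level plan is essentially the paper's. However, several of your attributions of constants are off, and one ingredient you leave implicit is actually load-bearing. First, the proof really needs the two-margin surrogate loss $\ell_{\gamma,\tau}$ (with a Lipschitz ``continuity margin'' $\tau$ in addition to the ordinary margin $\gamma$): the $\tau$ in $\gamma(\delta,\tau)$ is the continuity margin of this surrogate, used to sandwich $\Rcal_0(\widehat{\vc{w}}) \leq \Rcal_{0,\tau}(\widehat{\vc{w}})$ and then $\widehat{\Rcal}_{0,\tau}(\widehat{\vc{w}}) \leq \widehat{\Rcal}_{\gamma(\delta,\tau)}(\vc{w})$, while the quantization error from the cover is absorbed separately through the Lipschitz constant $\mathcal{L}(\tau,\delta)$ inside the deviation term; conflating the two makes the calibration you worry about in your last paragraph look harder than it is.

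Second, the probability residual $2e^{-\kappa d/2}$ is not a Chernoff bound on $\widehat{\vc{w}}$ ``landing in a ball'' — it is the leftover Hoeffding failure probability after the covering/union bound, once the deviation level $l$ and resolution $\upsilon$ are tuned so the exponent is $\leq -\kappa d/2$. Third, the $\sqrt{d}$ in $c_1(\delta,\tau) = (12\mathcal{L}R + \sqrt{d})/\sqrt{d}$ is there to guarantee $c_1 \geq 1$ so the Hoeffding exponent dominates the $\log n$ term; it does not specifically ``absorb the sparsity-pattern contribution.'' Fourth, the factor $2$ in $\epsilon_\kappa = (2h_b(\kappa) - \kappa\log\kappa)/\log(n)$ comes from the $1/2$ in the Hoeffding exponent (the $d h_b(\kappa)$ covering term must be beaten by $\frac{1}{2}\epsilon_\kappa d \log n$), and the $-\kappa\log\kappa$ piece arises from controlling $\log(\kappa+\epsilon_\kappa)$ in the same exponent calculation, not from $\log(R(\delta)/\tau')$. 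Finally, no chaining is used — a single-scale cover with Hoeffding and a union bound suffices. Your closing remark that $\|\widehat{\vc{w}}\| \leq \|\vc{w}\|$ entrywise for magnitude pruning is correct and exactly what the paper uses.
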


This result shows that, for a fixed relative compression error $\varepsilon$, if a network is more compressible, \textit{i.e.} if $\kappa$ is smaller, then its corresponding compressed network has a smaller generalization error bound, as the bound is scaled by the factor ${\approx} \sqrt{\kappa}$. 

In our proof, we prove an intermediate result, inspired by \cite{neyshabur2015norm} and stated in the appendix, which bounds the perturbation of the network output after compression. This guarantees that the risks of the original and pruned networks can be made arbitrarily close, as long as the relative compression error is small enough. 

To have a better understanding of the constants in Theorem~\ref{th:generalizationPrunedNetwork}, in the next result, we consider a special case where the weights follow a stable distribution, and we make the above bound more explicit.

\begin{cor} \label{cor:genCompressStable} Assume that for $l \in \llbracket 1,L \rrbracket$ and $i \in \llbracket 1,d_l \rrbracket$, the conditional distribution of ${w}_{l,i} \stackrel{\mathclap{\mbox{\tiny{i.i.d.}}}}{\sim} \mathcal{S}\alpha_l \mathcal{S}(\sigma_l) $  with $\alpha_l \in (1,2)$. Further, assume that the scale parameters satisfy the following property:
\begin{align}
\sigma^2 \coloneqq \sum\nolimits_{l=1}^L (d_l/d) (\sigma_l/\sigma_{\alpha_l})^2= \left[(4^{-1/\alpha}\sqrt{L}/3) \sigma_0 d^{-(1/2+1/\alpha)}\right]^2 ,
\end{align}
where $\sigma_{\alpha_l}\coloneqq \left(2\Gamma(-\alpha_l)\cos((2-\alpha_l)\pi/2)\right)^{1/\alpha_l}$, $\alpha {\coloneqq}\min_{l}\alpha_l$, and  $\sigma_0$ is a constant, and also  $\{\alpha_l\}_{l=1}^L$ and $\sigma$ are independent from $S$. Then, for every $\veps >0$ and $\kappa_l \in (0,1)$, $l \in \llbracket 1, L \rrbracket$,  there exists $d_{l,0} \in \mathbb{N}$,  such that  for $d_l \geq d_{l,0}$,   $n \colon n/\log(n) \geq 10L$, and every $\tau>0$, with probability  at least $1-3d^{-\alpha/(2L)}$,
\begin{align}
         \Rcal_{0}({\vc{\widehat{w}}}) & \leq \hat{\Rcal}_{\gamma}({\vc{w}})+ \left(a \sigma_0^L/\tau+1\right)\sqrt{\left(\kappa+\epsilon_{\kappa}\right) d \log(n)/n}, \label{eq:generalizationPrunedNetworkStable}
\end{align}
where $\{\vc{\widehat{w}_l}\}_l=\{\vc{w}_l^{(\kappa_l  d_l)}\}_l$, $\gamma \coloneqq \tau+ b_{\varepsilon} \sigma_0^{L} \sqrt{d}/\tau$, $a \coloneqq  6 \sqrt{2} B  2^{L} L^{3/2} $, and  $b_{\varepsilon} \coloneqq \sqrt{2} B   \left((1+\varepsilon)^{L}-1\right)$. 
\end{cor}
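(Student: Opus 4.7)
The plan is to instantiate \Cref{th:generalizationPrunedNetwork} in the i.i.d.\ $\alpha$-stable setting and make all constants explicit in $d$. Since the weights are actually i.i.d., the HML property holds trivially, and \Cref{th:magPruning}(ii) (applied per layer) yields \Cref{asmp:magp} for the prescribed $\kappa_l,\varepsilon$ with failure probability as small as $\epsilon=d^{-\alpha/(2L)}$, at the cost of taking each $d_{l,0}$ large enough. I then invoke \Cref{th:generalizationPrunedNetwork} with $\delta=d^{-\alpha/(2L)}$; for $d$ large, $2e^{-\kappa d/2}\le d^{-\alpha/(2L)}$, so the total failure probability collapses to $3d^{-\alpha/(2L)}$, matching the target.

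The main work is to bound $R(\delta)$ in~\eqref{def:RofDeltaLipchtz}. A bookkeeping check against \Cref{th:generalizationPrunedNetwork} shows that the announced formulae for $a$ and $b_\varepsilon$ correspond precisely to the estimate $R(\delta)\le \sqrt{L}\sigma_0\,d^{1/(2L)}$: under this bound $(R(\delta)/\sqrt{L})^L=\sigma_0^L\sqrt{d}$, which plugs into $\mathcal{L}(\tau,\delta)R(\delta)$ to give $\tfrac{1}{12}a\sigma_0^L\sqrt{d}/\tau$ and into $\gamma(\delta,\tau)$ to give $\tau+b_\varepsilon\sigma_0^L\sqrt{d}/\tau$. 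Factoring $\sqrt{d}$ out of $(12\mathcal{L}R+\sqrt{d})$ then produces the stated form~\eqref{eq:generalizationPrunedNetworkStable}.

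To establish this tail estimate I would use Markov's inequality with a fractional moment. For any $p\in(0,\alpha)\subset(0,2)$, subadditivity of $x\mapsto x^{p/2}$ gives
\begin{equation*}
\mathbb{E}\|\vc{w}\|^{p}=\mathbb{E}\Bigl(\sum_{l,i}w_{l,i}^{2}\Bigr)^{p/2}\le \sum_l d_l\, c_{\alpha_l,p}\,\sigma_l^{p},
\end{equation*}
where $c_{\alpha_l,p}$ is the standard absolute-moment constant of $\mathcal{S}\alpha_l\mathcal{S}(1)$, finite because $p<\alpha_l$. The normalization $\sum_l (d_l/d)(\sigma_l/\sigma_{\alpha_l})^2=\sigma^2$ yields $(\sigma_l/\sigma_{\alpha_l})^{2}\le d\sigma^{2}/d_l$, and Jensen applied to the concave map $x\mapsto x^{1-p/2}$ gives $\sum_l d_l^{1-p/2}\le L^{p/2}d^{1-p/2}$. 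Substituting the prescribed $\sigma^2=\Theta(\sigma_0^2 d^{-1-2/\alpha})$ collapses $\mathbb{E}\|\vc{w}\|^p$ to $C(\alpha,L)\sigma_0^p d^{1-p/2-p/\alpha}$, and Markov at threshold $\sqrt{L}\sigma_0 d^{1/(2L)}$ leaves a residual factor of $d^{1-p/2-p/\alpha-p/(2L)+\alpha/(2L)}$. A short algebra check shows that any $p$ in the nonempty interval $\bigl(\alpha(2L+\alpha)/((\alpha+2)L+\alpha),\,\alpha\bigr)$ makes this exponent strictly negative, so the probability falls below $d^{-\alpha/(2L)}$ once $d$ is large enough. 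The main technical hurdle is the blow-up of $c_{\alpha_l,p}$ as $p\uparrow\alpha_l$: one fixes $p$ strictly below $\alpha$ and absorbs the resulting $\alpha$-dependent constant into the threshold $d_{l,0}$, so that the final constants $a$ and $b_\varepsilon$ depend only on $B,L,\varepsilon$ as stated.
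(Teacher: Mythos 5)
Your overall structure mirrors the paper's (apply \Cref{th:generalizationPrunedNetwork} with $\delta = d^{-\alpha/(2L)}$, absorb $2e^{-\kappa d/2}$ for large $d$, and the bookkeeping that turns $R(\delta)\le \sqrt{L}\sigma_0 d^{1/(2L)}$ into the displayed constants $a,b_\varepsilon$ is exactly the calculation the paper performs). The genuine difference is in how you establish the tail estimate $R(\delta)\le \sqrt{L}\sigma_0 d^{1/(2L)}$. The paper's route (its Lemma~\ref{lem:normBoundStable}) is a coordinate-wise union bound $\mathbb{P}(\|\vc{w}\|\ge R) \le \sum_{l,i}\mathbb{P}(|w_{l,i}|\ge R/\sqrt{d})$ combined with an explicit tail inequality for symmetric $\alpha$-stable variables (from \cite{bednorz2020tails}), yielding the sharp rate $R(\delta)\lesssim \sigma\sqrt{d}(d/\delta)^{1/\alpha}$. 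Your route is Markov's inequality on a fractional moment $\mathbb{E}\|\vc{w}\|^p$ ($p<\alpha$), with subadditivity of $x\mapsto x^{p/2}$ and Jensen to control $\sum_l d_l^{1-p/2}$; I checked the algebra and the window $p\in\bigl(\alpha(2L+\alpha)/((\alpha+2)L+\alpha),\alpha\bigr)$ is indeed nonempty and gives a strictly negative exponent, so the argument does close. Your approach is more elementary (no reference to the explicit stable-tail constant) but needs the extra degree of freedom $p$ and only achieves the same threshold after pushing $d_{l,0}$ high enough to absorb the blow-up of the moment constants as $p\uparrow\alpha$ — which you correctly flag.

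One point worth tightening: you invoke \Cref{th:magPruning}(ii) to get \Cref{asmp:magp} with a $d$-dependent failure probability $\epsilon = d^{-\alpha/(2L)}$. But \Cref{th:magPruning} is an existence statement for a \emph{fixed} $\epsilon$ — it produces a $d_{l,0}(\epsilon)$ with no rate, so setting $\epsilon$ to shrink with $d$ creates a circularity ($d_{l,0}$ would depend on $\epsilon$, which depends on $d\ge d_{l,0}$). The paper avoids this by observing that in the exactly-i.i.d.\ $\sas$ case it can appeal to the almost-sure convergence in Lemma~\ref{lem:compress} to take $\epsilon=0$ directly (see its proof of Corollary~\ref{cor:genCompressStableGeneral}), so that only the $\delta$ and $2e^{-\kappa d/2}$ terms remain and the "$3$" in $3d^{-\alpha/(2L)}$ accounts for $\delta + 2e^{-\kappa d/2}\le 3d^{-\alpha/(2L)}$ for $d$ large. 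You should either adopt that route or give a quantitative rate for the convergence in \Cref{th:magPruning} before letting $\epsilon$ depend on $d$.
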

To simplify our presentation, we have set the `scale' of the distribution as a decreasing function of dimension, which intuitively states that the typical magnitude of each entry of the network will get smaller as the network grows. We observe that for a fixed $\varepsilon$ and $d$, the bound improves as pruning ratio, $1-\kappa$, increases. This result is of interest in particular since it is observed experimentally (and in part, theoretically) that heavier-tailed weights are more compressible, and hence due to this result have better generalization bounds. This provides an alternative perspective to the recent bounds that aim at linking heavy tails to generalization through a geometric approach \cite{simsekli2020hausdorff}.

Finally, in the appendix, we further show that the uncompressed network also inherits this good generalization performance, which is consistent with the results of \cite{hsu2021generalization,kuhn2021robustness}. 

\section{Experiments}
\label{sec:exps}

\begin{figure}[t]
    \centering
    \includegraphics[width=\textwidth]{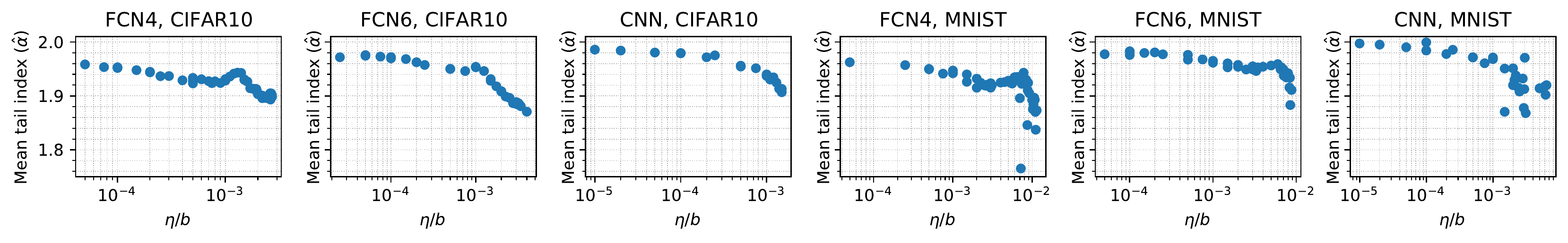}
    \caption{Mean estimated tail index ($\hat{\alpha}$) vs. $\eta/b$ for each trained model. $x$-axes are log-scaled.}
    \label{fig:alpha_vs_eta_b} 
\end{figure}

In this section, we present experiments conducted with neural networks to investigate our theory. We use three different model architectures: 
a fully connected network with $4$ hidden layers (FCN4), a fully connected network with $6$ hidden layers (FCN6), and a convolutional neural network with $8$ convolutional layers (CNN). Hidden layer widths were $2048$ for both FCN models. All networks include ReLU activation functions and none include batch normalization, dropout, residual layers, or any explicit regularization term in the loss function. Each model is trained on MNIST \cite{mnist2010} and CIFAR10 \cite{cifar102009} datasets under various hyperparameter settings, using the default splits for training and evaluation. The total number of parameters were approximately $14M$ for FCN4-MNIST, $19M$ for FCN4-CIFAR10, $23M$ for FCN6-MNIST, $27M$ for FCN6-CIFAR10, and $9M$ for both CNN models. All models were trained with SGD until convergence with constant learning rates and no momentum. The convergence criteria comprised $100\%$ training accuracy and a training negative log-likelihood less than $5 \times 10^{-5}$. The training hyperparameter settings include two batch-sizes ($b = 50, 100$) and various learning rates ($\eta$) to generate a large range of $\eta/b$ values. See the appendix for more details. 
\begin{wrapfigure}{r}{0.5\textwidth}
    \centering
    \includegraphics[width=.49\textwidth]{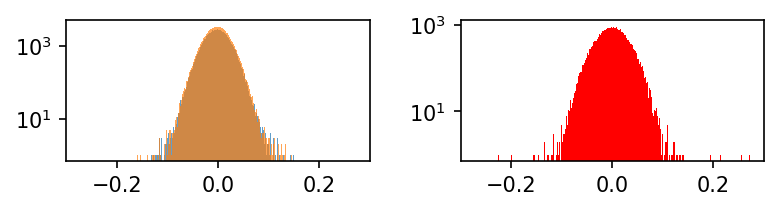}
    \captionof{figure}{ \small{Empirical distribution of a CNN layer trained on MNIST. (Left) Overlaid histograms of a random partition of the weights, showing an identical distribution. (Right) Comparing the network weights to samples simulated i.i.d. from a symmetric $\alpha$-stable distribution with the same tail index $\alpha  \approx 1.95$. $y$-axes are log-scaled. } } \label{fig:layer_histogram}
\end{wrapfigure}
By invoking \cite[Corollary 11]{gurbuzbalaban2020heavy} which shows the ergodic average of heavy-tailed SGD iterates converges to a multivariate stable distribution, after convergence, we kept running SGD  for $1000$ additional iterations to obtain the average of the parameters to be used in this estimation. The tail index estimations were made by the estimator proposed in \cite[Corollary 
2.4]{bgindex}, which has been used by other recent studies that estimate tail index in neural network parameters \cite{simsekli2020hausdorff,gurbuzbalaban2020heavy,zhou2020towards}. In all pruning methods, the parameters were centered before pruning is conducted with the median value to conform with the tail index estimation.

\textbf{Training hyperparameters and layer statistics.} We first verify that models trained with higher learning rate to batch-size ratio ($\eta / b$) lead to heavier-tailed parameters, replicating the results presented in \cite{gurbuzbalaban2020heavy}. 
For each trained model, we compute the mean of separately estimated tail indices for all layers, so that each model is represented by a single value. This averaging is a heuristic without a clear theoretical meaning, and has been also used by recent works in the literature \cite{gurbuzbalaban2020heavy,martin2019traditional}. Results presented in Figure \ref{fig:alpha_vs_eta_b} demonstrates confirm that higher $\eta/b$ leads to heavier tails (lower $\hat{\alpha}$).

Also of interest are the distribution of the resulting parameters from training. Figure \ref{fig:layer_histogram} (left) demonstrates a representative example of the parameters from an CNN layer trained on MNIST, where two overlaid histograms representing empirical distributions of a random partition of the parameters are almost identical. On the right in the same figure, this distribution is compared to a simulated symmetric $\alpha$-stable distribution that has the same tail index ($\approx 1.95$). The figure demonstrates that the two distributions have similar qualitative properties.

\begin{figure}[h]
    \centering
    \includegraphics[width=\textwidth]{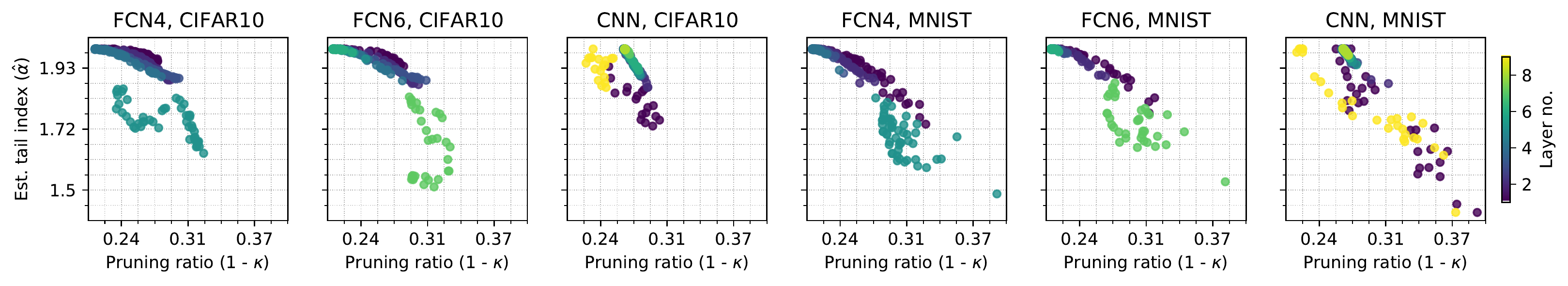}
    \caption{Estimated tail index ($\hat{\alpha}$) vs. pruning ratio ($1 - \kappa$), for relative compression error $= 0.1$}
    \label{fig:alpha_kappa}
\end{figure}

\textbf{Tail index and prunability.} In this section we examine whether networks with heavier-tailed layers, trained with higher $\eta/b$ ratios, are more prunable and whether they generalize better. As a baseline test, we first examine whether neural network layers which are heavier-tailed can be pruned more given a fixed maximum relative compression error. Figure \ref{fig:alpha_kappa} demonstrates for an error of $0.1$ that this is indeed the case. We next test our hypothesis that posits models with heavy-tailed parameters to be more prunable. Both the results pertaining to layer-wise magnitude pruning and singular value pruning, demonstrated in Figures \ref{fig:magnitude_pruning} and \ref{fig:spectral_pruning} show that this is indeed the case. Here, relative test accuracy stands for test accuracy after pruning / unpruned test accuracy.  The results show that models with heavier-tailed parameters (shown with darker colors) are starkly more robust against pruning. Similar results with global magnitude pruning can be seen in the appendix.
\begin{figure}[t]
    \centering
    \includegraphics[width=\textwidth]{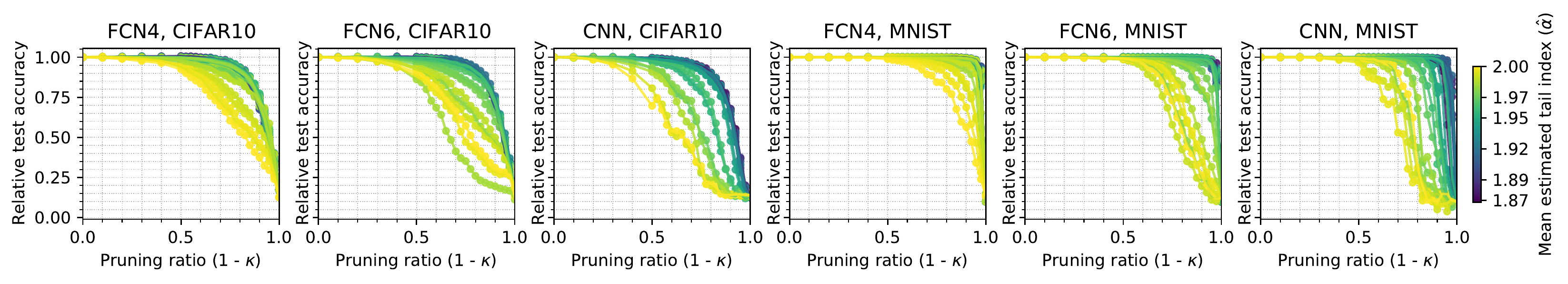}
    \caption{Relative test accuracy vs. pruning ratio for layer-wise magnitude pruning. Color: mean $\hat{\alpha}$.}
    \label{fig:magnitude_pruning}
\end{figure}
\begin{figure}[t]
    \centering
    \includegraphics[width=\textwidth]{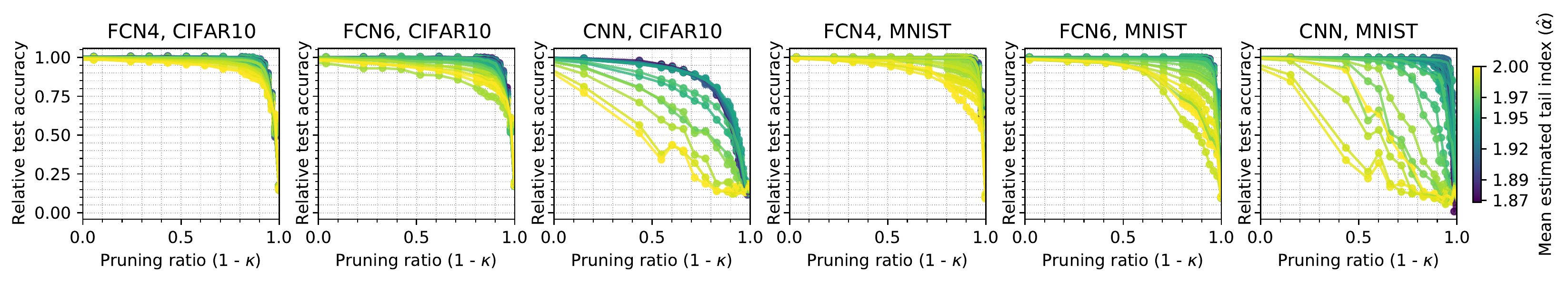}
    \caption{Relative test accuracy vs. pruning ratio for singular value pruning. Color: mean $\hat{\alpha}$.}
    \label{fig:spectral_pruning}
\end{figure}
\begin{wrapfigure}{r}{0.45\textwidth}
    \includegraphics[width=0.42\textwidth]{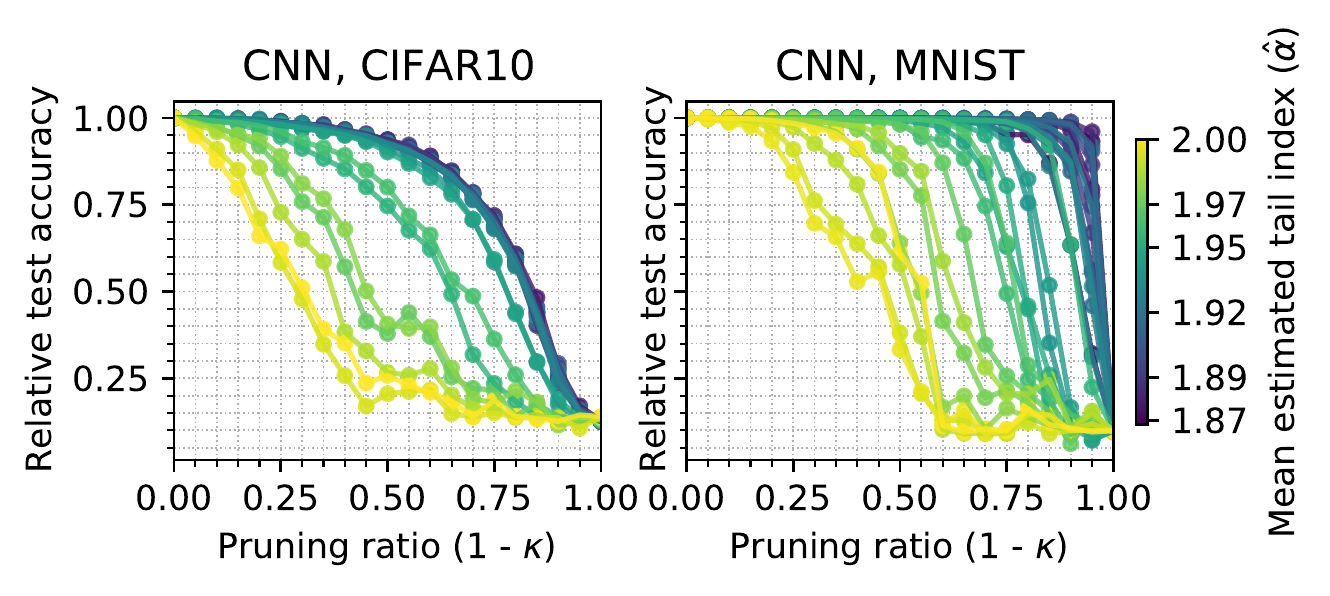}
    \captionof{figure}{Relative test accuracy vs. pruning ratio for structured pruning. Color: mean $\hat{\alpha}$.} \label{fig:nb_pruning}
\vspace{-5pt}
\end{wrapfigure}

For structured pruning, we prune $3\times 3$ kernel parameters in CNN models. The results (Figure \ref{fig:nb_pruning}) show a similar, hypothesis conforming pattern. Results for node pruning in FCNs, presented in the appendix, were underwhelming; perhaps unsurprisingly as structured pruning is not as commonly used in FCNs \cite{oneillOverviewNeuralNetwork2020}. More successful attempts could be due to alternative scoring methods \cite{suzuki2018spectral}; our approach might require wider layers to conform with theoretical conditions.  

\textbf{Tail index and generalization.} Following our theoretical results, we examine whether the heavier-tailed networks lead to better generalization performance. Consistent with our hypothesis, Figure \ref{fig:tail_index_gen} shows that models with the highest tail index have consistently the worst test performances. The same conclusion applies to generalization performance as training accuracy is $100\%$ for all models.
\label{sec:tail-index-gen}
\begin{figure}[h]
    \centering
    \includegraphics[width=\textwidth]{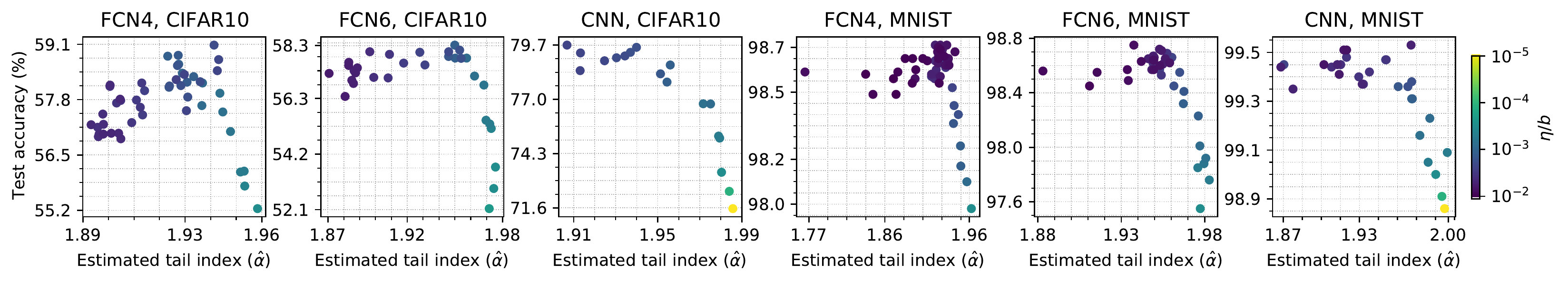}
    \caption{Test accuracy vs. mean estimated tail index ($\hat{\alpha}$) for each model. Color: training $\eta/b$ ratio.}
    \label{fig:tail_index_gen}
\end{figure}

\section{Conclusion}
\label{sec:conclusion}
We investigated the conditions and the mechanism through which various pruning strategies can be expected to work, and confirmed the relationship between pruning and generalization from our theoretical approach. Future directions (hence our limitations) include a formal proof for the convergence of the weights to a mean-field distribution in the heavy-tailed setting, eliciting the relationship between $\hat{\alpha}$ and $\kappa$ more clearly, and further examination of structured pruning in FCNs. Other pruning strategies would also benefit from our analyses, e.g. gradient-based pruning \cite{blalock2020state}. Recently it has been observed  that gradients also exhibit heavy-tailed behavior \cite{csimcsekli2019heavy,zhang2020adaptive,zhou2020towards}; we suspect that our theory might be applicable in this case as well. 

\section*{Acknowledgments}
The authors are grateful to A. Taylan Cemgil, Valentin De Bortoli, and Mohsen Rezapour for fruitful discussions on various aspects of the paper. Umut \c{S}im\c{s}ekli's research is supported by the French government under management of Agence Nationale de la Recherche as part of the ``Investissements d’avenir'' program, reference ANR-19-P3IA-0001 (PRAIRIE 3IA Institute).

\bibliography{generalization}

\newpage
\begin{appendices}

\begin{center}

\Large \bf Appendix

\end{center}

The appendix is organized as follows:
\begin{itemize}
    \item Section~\ref{sec:sup_exp} details the experimental setup used in our simulations and presents some additional experimental results.
     \item In Section~\ref{sec:sup_genOriginal}, a generalization bound for an uncompressed network, given that this network is compressible, is presented.
     \item  Section~\ref{sec:sup_perturbation} provides an upper-bound on the change in the network output when there is a small change in the network weights.
     \item In Section~\ref{sec:sup_compressionRate}, the relation between compressibility and the tail-index is discussed.
     \item Proofs of the main results of the paper are presented in Section~\ref{sec:sup_mainProofs}.
     \item Finally, the technical lemmas are proved in Section~\ref{sec:sup_lemProofs}.
\end{itemize}

\section{Details of the Experiments and Additional Results} 
\label{sec:sup_exp}
Here we provide a more detailed explanation for our experimental setting, as well as the results we omitted from the main paper due to space restrictions. You can find the source code for the experiments at the following link: \href{https://github.com/mbarsbey/sgd_comp_gen}{https://github.com/mbarsbey/sgd\_comp\_gen}.

\subsection{Datasets}
The experiments were conducted in a supervised learning setting where the task is classification of images. Each model is trained on CIFAR10 \cite{cifar102009} and MNIST \cite{mnist2010} datasets. The MNIST is an image classification dataset where the data is comprised of $28\times 28$ black and white handwritten digits, belonging to one class from $0$ to $9$. We use the traditional split defined in the dataset where there are $60000$ training and $10000$ test samples. CIFAR10 is also image classification dataset comprising  $32\times 32$ color images of objects or animals, making up 10 classes. There are $50000$ training and $10000$ test images, which is the split that we use in the experiments.

\subsection{Models}

As described in the main text, in our experiments we use three models: a fully connected network with 4 hidden layers (FCN4), a fully connected network with 6 hidden layers (FCN6), and a convolutional neural network with 8 convolutional layers (CNN). Hidden layer widths are $2048$ for the two FCN models. All networks include ReLU activation functions and none of them include batch normalization, dropout, residual layers, or any explicit regularization term in the loss function. The architecture for the CNN model for the CIFAR10 dataset progresses as below:

$$
64, M, 128, M, 256, 256, M, 512, 512, M, 512, 512, M,
$$

where integers stand for 2-dimensional convolutional layers (and the corresponding number of filters) with a kernel size of $3 \times 3$, and $M$ stands for $2 \times 2$ max-pooling with a stride of $2$. Our CNN architecture follows that of VGG11 model \cite{simonyanVeryDeepConvolutional2015} except after the layers presented above we have only a single linear layer with a softmax output. For the MNIST experiment the first max-pooling layer was omitted as the dimensions of the MNIST images disallow the previous structure to be used. Table \ref{tab:parameter_count} includes the number of parameters for each model-dataset combination.
\begin{table}[h]
\centering
\begin{tabular}{ |c| c| c| c|}
\hline
 & FCN4 & FCN6 & CNN \\ 
 \hline
 CIFAR10 & 18,894,848 & 27,283,456 & 9,222,848\\
 \hline
 MNIST & 14,209,024 & 22,597,632 & 9,221,696 \\
 \hline
\end{tabular}
\caption{\label{tab:parameter_count}Parameter counts for all model-dataset combinations.}
\end{table}

\subsection{Training and hyperparameters}
All models were trained with SGD until $100\%$ training accuracy and a training negative log-likelihood less than $5 \times 10^{-5}$ is acquired, with constant learning rates and no momentum. The training hyperparameters include two batch-sizes ($b = 50, 100$) and a variety of learning rates ($\eta$) to generate a large range of $\eta/b$ values. The ranges vary somewhat since different $\eta/b$ values might lead to heavy-tailed behavior (or divergence) under different settings. Table \ref{tab:eta_b_ranges} presents these ranges for all experiments. See the source code for all experiment settings that were presented in the main results.

\begin{table}[h]
\centering
\begin{tabular}{ |c| c| c| c|}
\hline
 & FCN4 & FCN6 & CNN \\ 
 \hline
 CIFAR10 & $5 \times 10^{-5}$ to  $2.7\times 10^{-3}$  & $2.5 \times 10^{-5}$ to  $4\times 10^{-3}$ & $1 \times 10^{-5}$ to  $1.5\times 10^{-3}$\\
 \hline
 MNIST & $5 \times 10^{-5}$ to  $1.14\times 10^{-2}$  & $5 \times 10^{-5}$ to  $8.8\times 10^{-3}$ & $1 \times 10^{-5}$ to  $6.35\times 10^{-3}$ \\
 \hline
\end{tabular}
\caption{\label{tab:eta_b_ranges}$\eta/b$ ranges for all experiments.}
\end{table}

\subsection{Tail index estimation}
We use the following multivariate tail-index estimator proposed by \cite{bgindex}.

\begin{theorem}[{\cite[Corollary 2.4]{bgindex}}]
Let $\{X_i\}_{i=1}^K$ be a collection of i.i.d.\ random vectors where each $X_i$ is multivariate strictly stable with tail index $\alpha$, and $K = K_1 \times K_2$.
Define $Y_i \triangleq \sum_{j=1}^{K_1} X_{j+(i-1)K_1} \>$ for $i \in \llbracket 1, K_2 \rrbracket$. Then, the estimator
\begin{align}
\label{eqn:alpha_estim}
\widehat{\phantom{a}\frac1{\alpha}\phantom{a}} \hspace{-4pt} \triangleq \hspace{-2pt} \frac1{\log K_1} \Bigl(\frac1{K_2 } \sum_{i=1}^{K_2} \log \|Y_i\|  - \frac1{K} \sum_{i=1}^K \log \|X_i\| \Bigr). 
\end{align}
converges to $1/{\alpha}$ almost surely, as $K_2 \rightarrow \infty$.
\end{theorem}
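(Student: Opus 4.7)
The plan is to exploit the defining scaling property of strictly $\alpha$-stable random vectors: if $X_1, \dots, X_{K_1}$ are i.i.d.\ strictly $\alpha$-stable, then $\sum_{j=1}^{K_1} X_j \stackrel{d}{=} K_1^{1/\alpha} X_1$. Consequently each $Y_i$ satisfies $\log\|Y_i\| \stackrel{d}{=} (1/\alpha)\log K_1 + \log\|X_1\|$, and the $Y_i$ are themselves i.i.d.\ across $i \in \llbracket 1, K_2\rrbracket$ (since they are built from disjoint blocks of the original sample). This reduces everything to two applications of the strong law of large numbers.

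Concretely, I would proceed as follows. First, I would establish that $\mathbb{E}|\log \|X_1\|| < \infty$. For a strictly $\alpha$-stable vector, the density is bounded and continuous near the origin (so that the lower tail $\mathbb{E}[(\log\|X_1\|)_-]$ is finite) while the tail decays like $\|x\|^{-\alpha-d}$ up to a constant (so that $\mathbb{E}[\|X_1\|^p] < \infty$ for any $p \in (0,\alpha)$, which in particular bounds $\mathbb{E}[(\log\|X_1\|)_+]$). Second, I would apply Kolmogorov's SLLN to the i.i.d.\ sequence $\{\log\|X_i\|\}_{i=1}^{K}$, obtaining
\begin{equation*}
\frac{1}{K}\sum_{i=1}^{K} \log \|X_i\| \;\xrightarrow{\mathrm{a.s.}}\; \mathbb{E}\bigl[\log\|X_1\|\bigr], \qquad K = K_1 K_2 \to \infty.
\end{equation*}
Third, since $\{Y_i\}_{i=1}^{K_2}$ are i.i.d.\ with $\log\|Y_i\| \stackrel{d}{=} (1/\alpha)\log K_1 + \log\|X_1\|$, the same SLLN yields
\begin{equation*}
\frac{1}{K_2}\sum_{i=1}^{K_2} \log\|Y_i\| \;\xrightarrow{\mathrm{a.s.}}\; \frac{1}{\alpha}\log K_1 + \mathbb{E}\bigl[\log\|X_1\|\bigr].
\end{equation*}

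Subtracting these two limits, the common $\mathbb{E}[\log\|X_1\|]$ cancels, leaving $(1/\alpha)\log K_1$ almost surely; dividing by $\log K_1$ (which is a fixed positive constant since $K_1 \geq 2$) gives the claimed almost-sure convergence of $\widehat{1/\alpha}$ to $1/\alpha$. The only non-routine step is the integrability of $\log\|X_1\|$; this is the main obstacle and relies on the standard regularity of strictly stable densities (boundedness near zero and the fractional moments bound $\mathbb{E}[\|X_1\|^p] < \infty$ for $p < \alpha$), both of which are classical facts that can be cited from the stable-distribution literature (e.g.\ Samorodnitsky--Taqqu or Nolan's monograph already referenced in the paper). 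Everything else is pure SLLN bookkeeping, and no specific multivariate structure of the stable law beyond the scaling identity is needed.
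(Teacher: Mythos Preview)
The paper does not prove this statement; it is quoted verbatim as \cite[Corollary 2.4]{bgindex} and used only as a tool in the experimental section for tail-index estimation. So there is no ``paper's own proof'' to compare against.

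That said, your argument is the standard one and is correct. The scaling identity $\sum_{j=1}^{K_1} X_j \eqdist K_1^{1/\alpha} X_1$ is precisely the definition of strict $\alpha$-stability, the $Y_i$ are i.i.d.\ because they are built from disjoint blocks, and the two SLLN applications are legitimate once $\mathbb{E}|\log\|X_1\||<\infty$ is established. Two small remarks: (i) the two sample averages you subtract are dependent (they use the same underlying $X_i$'s), but this is irrelevant for almost-sure convergence since a.s.\ limits are preserved under arithmetic operations; (ii) your integrability argument tacitly assumes the stable law is non-degenerate (full-dimensional support) so that it admits a bounded density---this is the usual standing assumption and is harmless here, but worth stating explicitly if you write the proof out.
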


This estimator has been used in previous research such as \cite{csimcsekli2019heavy} and \cite{tzagkarakisCompressiveSensing2018}. We center the observations using the median values before the estimation. Using the alternative univariate tail-index estimator from the same paper \cite[Corollary 2.2]{bgindex} has no qualitative effects on our results. An additional benefit of our choice is additional analyses it makes possible, as presented in Section \ref{sec:synthetic}. Comparisons with alternative tail-index estimators with symmetric $\alpha$-stable assumption revealed no dramatic differences between various estimators \cite{satheEstimationParameters2020}.

\subsection{Pruning details}
We first provide a review of the pruning methods we use. All three notions of pruning in our experiments correspond to the magnitude-wise ordering of certain parameters and the `pruning' of a certain ratio of parameters that correspond to smallest magnitudes\footnote{Note that a more relaxed definition of pruning would be `systematic removal of model parameters' to allow for different scoring methods in pruning \cite{blalock2020state}. However, we proceed with our specific definition since this allows us to communicate our theoretical and experimental results more concisely.}. When the parameters that are pruned are the weight parameters themselves, this corresponds to magnitude-based pruning or \textit{magnitude pruning} as known in the literature, which can be conducted layer-wise or globally \cite{blalock2020state}. \textit{Singular value pruning}, as described here, corresponds to pruning of the smallest singular values (and, by implication, the related singular vectors) in the SVD of specific layers. To apply the SVD to CNN layers, we reshape the parameter tensors into matrices of shape (\# channels) $\times$ (\# filters $\times 3 \times 3$). Lastly \textit{node pruning} corresponds to the pruning of the whole columns in the weight matrices. Again, CNN counterpart of node pruning is open to interpretation; we choose to prune specific kernels according to the their norms.

Before any pruning is done, the parameters to be pruned are centered with the estimated median of the observations, in order to conform with our tail index estimation methodology. We chose median due to its robustness against extreme observations especially with a small sample - however our results were qualitatively unchanged when the mean was used in the centering. After the pruning (in all three methods), the median was added to the pruned parameter vectors before testing the performance of the resulting model. Note that the median (or mean) was usually very small in norm and omitting centering made no qualitative effect on the results. The last layer in all models and additionally the first layer in VGG models were not pruned as they had very small numbers of parameters compared to the other layers.

Lastly, while `remaining parameter ratio' ($\kappa$) or `pruning ratio' ($1- \kappa$) are easy to interpret in the case of magnitude pruning or node pruning, in SVD $\kappa$ would equal (number of singular value and vector parameters left) / (number of weight parameters in the layer), and pruning ratio would be determined accordingly.

\subsection{Additional results}
Here we present additional results that were referenced in the main text but were not presented due to space restrictions.
\subsubsection{Synthetic experiments}
\label{sec:synthetic}
Recall that in Figure \ref{fig:alpha_kappa} we examined whether lower tail index is associated with higher prunability, for a given relative compression error. The results demonstrated that this was indeed the case. Here we compare our empirical results with some synthetic results to get additional insights regarding whether HML property is actually observed in our networks.

For this experiment, we randomly sample tail indices $\alpha_i \sim \mathcal{U}(1.75, 2),$ where $i \in \{1, \dots, 250\}$. Then for each $\alpha_i$, we sample three different `weight matrices': $\mathbf{W}_{ind, i}, \mathbf{W}_{col, i}, \mathbf{W}_{lay, i} \in \mathbb{R}^{500 \times 500}$. The elements of $\mathbf{W}_{ind, i}$ are sampled independently from a $\sas$ with tail index $\alpha_i$; this corresponds to the case where weight parameters are statistically independent as prescribed by the HML property. On the other hand, columns of the $\mathbf{W}_{col, i}$ are independently sampled from a $500$-dimensional multivariate \textit{elliptically contoured} $\sas$ with tail index $\alpha_i$. A $d$-dimensional elliptically contoured multivariate $\sas$ has the characteristic function

$$
\mathbb{E}\left[\exp(i\langle \omega, X \rangle)\right] = \exp(- \|\omega\|^{\alpha}),
$$

where $X, \omega \in \mathbb{R}^d$ and $\langle \cdot , \cdot \rangle$ stands for inner product. This means that while the columns of the matrix are independent, column elements can be correlated. Lastly, all elements of $\mathbf{W}_{lay, i}$ are sampled from a $(500 \times 500)$-dimensional elliptically contoured multivariate $\sas$, creating a case where all elements of the matrix can be correlated. We repeat the analysis presented in Figure \ref{fig:alpha_kappa} for all three sets of sampled synthetic layer weights, and present the results in Figure \ref{fig:synth_alpha_pruning_ratio}.

\begin{figure}[h]
    \centering
    \includegraphics[width=.60\textwidth]{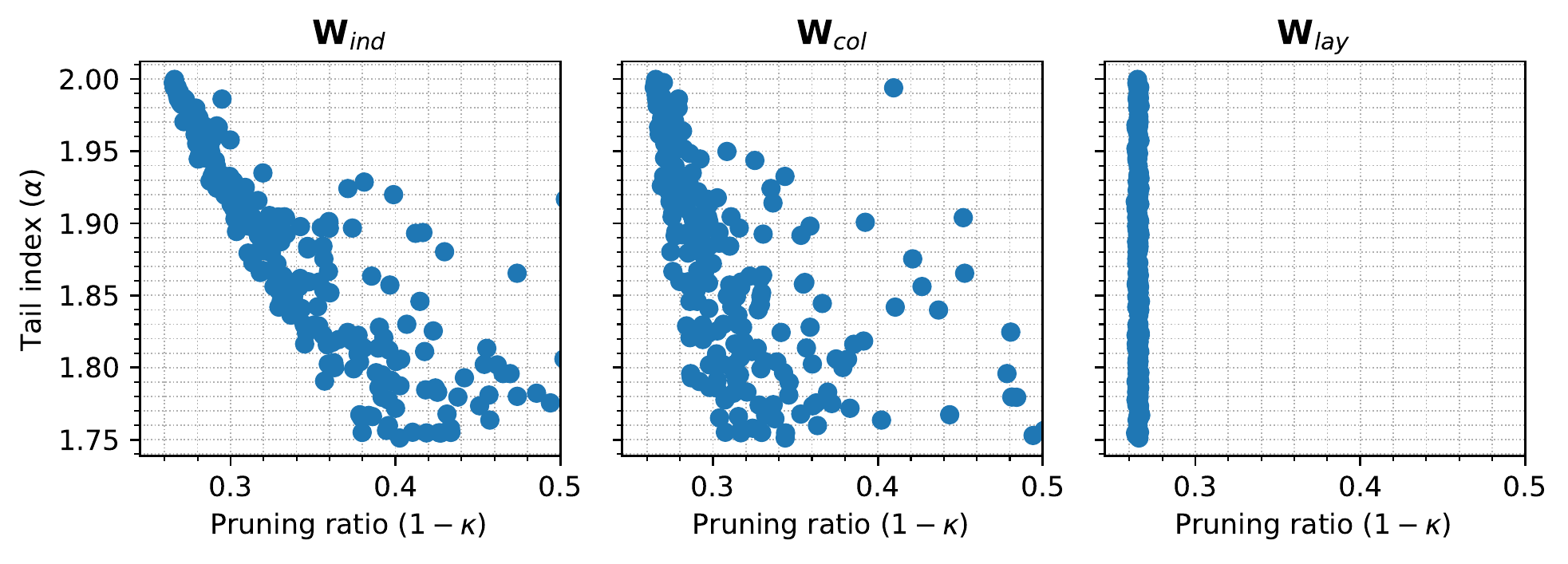}
    \caption{Tail index ($\alpha$) vs. pruning ratio ($1 - \kappa$), with relative compression error $= 0.1$, for synthetically generated weight matrices.}
    \label{fig:synth_alpha_pruning_ratio}
\end{figure}

The results demonstrate two interesting phenomena. First, a comparison of these results with Figure \ref{fig:alpha_kappa} shows that our empirical results show the most similarity with results obtained with $\mathbf{W}_{ind, i}$, showing support for the existence of the HML property. Another observation is that as the layer parameters become correlated, the prunability advantage conferred by heavier tails disappears. This observation both supports the existence of HML property in overparametrized neural networks and invites further discussion on the importance of propagation of chaos phenomenon in such architectures for compressibility and generalization \cite{de2020quantitative}.

\subsubsection{Global magnitude pruning and node pruning results}
In Figure \ref{fig:global_magnitude} present the global magnitude pruning results for magnitude pruning. The results are qualitatively very similar to those of the layer-wise magnitude pruning. Figure \ref{fig:node_pruning_fcn} presents the results of node pruning on FCNs. As mentioned in the main text, the less impressive results might have to do with the layer widths not being sufficient for our theoretical conditions.

\begin{figure}[h]
    \centering
    \includegraphics[width=\textwidth]{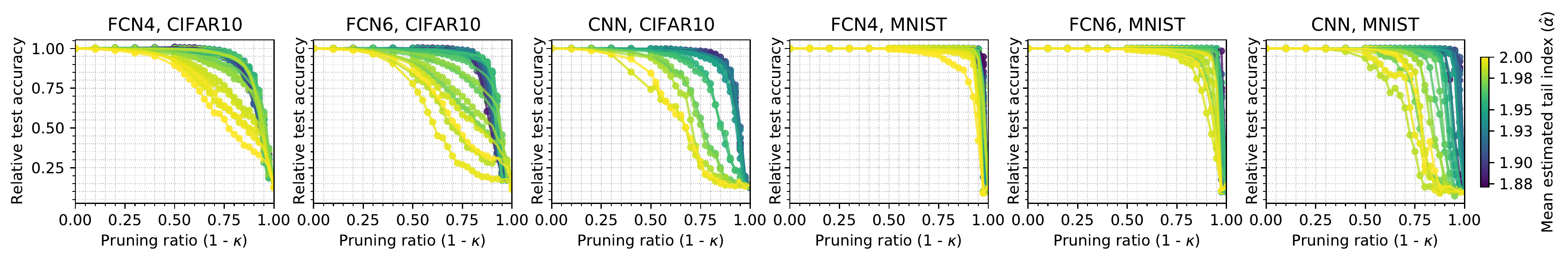}
    \caption{Relative test accuracy vs. pruning ratio for global magnitude pruning. Color: mean $\hat{\alpha}$.}
    \label{fig:global_magnitude}
\end{figure}

\begin{figure}[h]
    \centering
    \includegraphics[width=.67\textwidth]{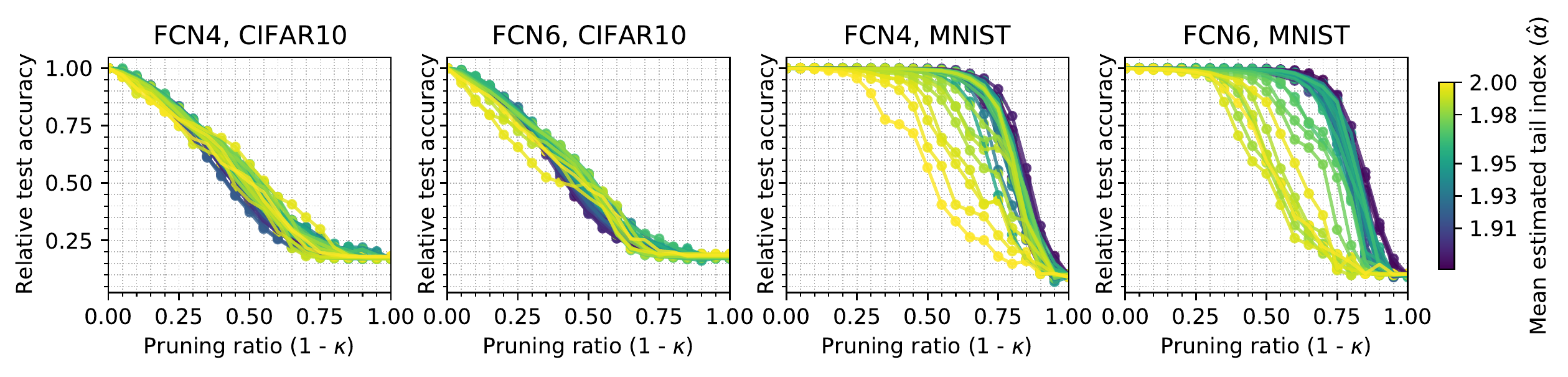}
    \caption{Relative test accuracy vs. pruning ratio for node pruning in FCNs. Color: mean $\hat{\alpha}$.}
    \label{fig:node_pruning_fcn}
\end{figure}

\subsection{Hardware and other resources}

The experiments were conducted on an internal server of a research institute. Nvidia Titan X, 1080 Ti, and 1080 model GPU's were used roughly equally in running the experiments. Our published results involve 215 trained models, each of which included GPU-heavy workload, with an average completion time of approximately 2 hours. Around 30 models diverged (thus were not used in the results) and in most cases were trained for less than an hour. Total GPU-time could correspondingly be estimated to equal 460 hours.  We also conducted tail index estimation and pruning experiments on these networks, however the computational load of these experiments are negligible compared to those of the training of the algorithms, with an estimated CPU time of 48 hours for all estimation and pruning tasks that were published.

As can be seen in the accompanying code, the experiments were conducted using Python programming language. The deep learning framework PyTorch \cite{pytorch2019} as well as some of its tutorials\footnote{\textsc{https://github.com/pytorch/vision/blob/master/torchvision/models/vgg.py}} were used in implementing the experiments.

\section{Generalization bound for the uncompressed network} \label{sec:sup_genOriginal}

In this section, in the continuation of Section~\ref{sec:generalization}, we establish a generalization bound for an  uncompressed network,  if this network is compressible using the layer-wise magnitude pruning strategy. 

\begin{theorem} \label{th:generalizationOriginal}
Assume \Cref{asmp:magp} holds. Then for $n \colon n/\log(n) \geq \max(9L,81\varepsilon^{-2\kappa})$, $\{d_l\}_{l=1}^L \colon d_l \geq d_{l,0}$ and   $d \geq 10$, and any $ \delta,\tau>0 $, with probability  at least $1-2e^{-d/2}-\delta-\epsilon$,
\begin{align}
\Rcal_{0}({\vc{{w}}}) & \leq \widehat{\Rcal}_{\tau}({\vc{w}})+ \max\left(2,24 \rho_{\varepsilon}(\kappa,d) \mathcal{L}(\tau,\delta)R(\delta)/\sqrt{d}\right)\sqrt{d\log(n)/n}. \label{eq:generalizationOriginal}
\end{align}
where $R(\delta)$ and $\mathcal{L}(\tau,\delta)$ are defined in \eqref{def:RofDeltaLipchtz}, $\kappa\coloneqq \frac{1}{d}\sum_{l=1}^L \lceil \kappa_l d_l\rceil$,
\begin{align*}
    \rho_{\varepsilon}(\kappa,d)&\coloneqq \min \left(\varepsilon^{1-\kappa}\exp \left(h_b(\kappa)+h_b^{(1)}(\kappa,d)\right),1\right) \leq  \min \left(3\varepsilon^{1-\kappa} ,1\right),\\
    h_b^{(1)}(\kappa,d)&\coloneqq \frac{\lceil d/2\rceil}{d} \max\left(h_b(\lceil \kappa d/2 \rceil/\lceil d/2\rceil),h_b(\lfloor \kappa d/2\rfloor/\lceil d/2\rceil)\right).
\end{align*}
\end{theorem}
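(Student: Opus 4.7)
The plan is to adapt the covering-based proof of \Cref{th:generalizationPrunedNetwork}, but to bound the risk of the uncompressed $\vc w$ directly by exploiting that \Cref{asmp:magp} forces $\vc w$ to lie in a much smaller subset of $\rset^d$ than the plain ball of radius $R(\delta)$. First I would isolate the high-probability event on which $\|\vc w\|\le R(\delta)$ (holding with probability $\ge 1-\delta$ by definition of $R(\delta)$) and \Cref{asmp:magp} both hold; on their intersection, of probability at least $1-\delta-\epsilon$, $\vc w$ belongs to the compressibility-restricted class
\[
\Wcal_{\kappa,\varepsilon}\coloneqq \Big\{\vc u\in\rset^d:\|\vc u\|\le R(\delta),\ \exists\,\widehat{\vc u}\text{ with }\|\widehat{\vc u}_l\|_0\le \lceil \kappa_l d_l\rceil\text{ and }\|\vc u_l-\widehat{\vc u}_l\|\le \varepsilon\|\vc u_l\|\ \forall l\Big\}.
\]
Everything else reduces to a uniform convergence statement over $\Wcal_{\kappa,\varepsilon}$ for the $\tau$-margin loss, so the real objective is to cover $\Wcal_{\kappa,\varepsilon}$ efficiently in $\ell_2$.

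I would construct an $\eta$-net of $\Wcal_{\kappa,\varepsilon}$ in three stages. Stage~(i) enumerates the layer-wise sparsity patterns using a refined combinatorial count: splitting each layer's coordinate set into two halves of sizes $\lceil d_l/2\rceil$ and $\lfloor d_l/2\rfloor$ and bounding the pattern on each half separately yields an entropy of at most $d(h_b(\kappa)+h_b^{(1)}(\kappa,d))$ rather than the naive $dh_b(\kappa)$, which is exactly the correction that appears in $\rho_\varepsilon(\kappa,d)$. Stage~(ii) places an $\eta/\sqrt 2$-net on the ``kept'' coordinates inside a $\lceil \kappa d\rceil$-dimensional ball of radius $R(\delta)$, contributing $(c R(\delta)/\eta)^{\kappa d}$. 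Stage~(iii) covers the ``removed'' coordinates, which are constrained to lie in a $(1-\kappa)d$-dimensional ball of radius $\varepsilon R(\delta)$ thanks to $\|\vc u_l-\widehat{\vc u}_l\|\le \varepsilon R(\delta)$, contributing $(c\varepsilon R(\delta)/\eta)^{(1-\kappa)d}$. Combining, $\log N(\eta,\Wcal_{\kappa,\varepsilon})\le d\log(c R(\delta)/\eta)+\log(1/\rho_\varepsilon(\kappa,d))$, so the compressibility gain manifests as a multiplicative $\rho_\varepsilon(\kappa,d)$ on the generalization term.

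With the cover in hand, I would close the argument with the perturbation-plus-union-bound strategy used for \Cref{th:generalizationPrunedNetwork}. On $\Wcal_{\kappa,\varepsilon}$ the map $\vc u\mapsto f_{\vc u}$ is Lipschitz with constant $\mathcal L(\tau,\delta)$ by the perturbation result of \Cref{sec:sup_perturbation}, so an $\eta$-neighbour mis-estimates the $\tau$-margin loss by at most $\eta\mathcal L(\tau,\delta)/\tau$. A Hoeffding/union bound over the $N(\eta)$ centres combined with the entropy estimate, and the choice $\eta\asymp R(\delta)\sqrt{d/n}$ up to logs, yields the $24\rho_\varepsilon\mathcal L(\tau,\delta)R(\delta)/\sqrt d\cdot\sqrt{d\log n/n}$ term; the $2e^{-d/2}$ failure mass is the sub-Gaussian tail from Hoeffding, and the conditions $n/\log n\ge 81\varepsilon^{-2\kappa}$ and $d\ge 10$ are the regimes ensuring $\eta\le\varepsilon R(\delta)$ (so that stage~(iii) is non-vacuous) and that $\log N(\eta)$ dominates the discretization error. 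The outer $\max(2,\cdot)$ arises by falling back to the plain Dudley cover of the full ball in the regime where $\rho_\varepsilon(\kappa,d)\sqrt d$ provides no improvement over the classical $\sqrt{d\log n/n}$ VC bound.

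The principal obstacle, beyond careful bookkeeping across all three stages, is obtaining the refined sparsity count that yields the $h_b^{(1)}$ correction: a direct Stirling estimate gives only $e^{dh_b(\kappa)}$, whereas $\rho_\varepsilon$ requires separately counting patterns on a balanced two-way split of each layer's indices and tracking layer-wise sparsity budgets $\lceil \kappa_l d_l\rceil$ against the aggregate $\kappa d$. A secondary subtlety is that the compressor $\widehat{\vc w}$ supplied by \Cref{asmp:magp} is data-dependent, so the net must uniformly approximate \emph{every} point of $\Wcal_{\kappa,\varepsilon}$, which is what forces coordinate-wise rather than norm-wise control of the removed entries in stage~(iii).
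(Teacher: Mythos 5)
Your overall strategy is the right one and matches the paper: condition on $\|\vc w\|\le R(\delta)$ and on \Cref{asmp:magp}, restrict attention to the compressibility class, cover it at a reduced metric entropy, and close with the Lipschitz perturbation bound plus a Hoeffding/union argument. But your stage~(i), which is where all the non-trivial savings have to live, is based on a misidentification of where the $h_b^{(1)}(\kappa,d)$ correction comes from, and as written it would not produce it.

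The paper's Lemma~\ref{lem:disPointsOriginal} does \emph{not} refine the sparsity-pattern count at all: the sparsity patterns contribute exactly $\binom{d}{\kappa d}\le e^{d h_b(\kappa)}$ via the plain Stirling/entropy bound (Lemma~\ref{lem:binomBounds}), and nothing finer. The extra $e^{d h_b^{(1)}(\kappa,d)}$ factor instead comes from the \emph{geometry} of the covered body: for a fixed pattern $\mathcal A$, the set $\mathcal W''_{\mathcal A}$ is the Cartesian product of a $\kappa d$-dimensional ball of radius $R(\delta)$ and a $(1-\kappa)d$-dimensional ball of radius $\varepsilon R(\delta)$, treated as a \emph{single convex body} and covered using the volume-ratio bound $N(\upsilon,K)\le (3/\upsilon)^d\,\mathrm{Vol}(K)/\mathrm{Vol}(\mathbb B_d)$. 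That volume ratio evaluates to $\varepsilon^{(1-\kappa)d}\,\frac{\Gamma(d/2+1)}{\Gamma(\kappa d/2+1)\,\Gamma((1-\kappa)d/2+1)}$, and it is precisely this Gamma-function ratio, bounded via Lemma~\ref{lem:binomBounds2}, that yields $e^{d h_b^{(1)}(\kappa,d)}$. Your ``balanced two-way split of each layer's indices'' has no role here, and notice your own statement is internally inconsistent: a \emph{refined} count should be \emph{smaller} than the naive $e^{dh_b(\kappa)}$, yet you describe it as producing the larger quantity $e^{d(h_b(\kappa)+h_b^{(1)}(\kappa,d))}$.

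Your alternative stage~(ii)/(iii) covering --- separate $\eta/\sqrt2$-nets on the two factor balls and a Cartesian product --- is a legitimate route, but it trades the Gamma-ratio for a flat $(\sqrt2)^d$ (or $(c)^d$) constant from the triangle inequality in the product metric, so it produces $\approx e^{dh_b(\kappa)}\bigl(3\sqrt2\,\varepsilon^{1-\kappa}R/\upsilon\bigr)^d$ rather than $\bigl(3\rho_\varepsilon(\kappa,d) R/\upsilon\bigr)^d$. Numerically these are comparable when $\kappa\approx 1/2$ (since $e^{h_b^{(1)}}\lesssim \sqrt2$), but they diverge as $\kappa\to0$ or $\kappa\to1$, and in any case you would not recover the theorem's $\rho_\varepsilon(\kappa,d)$ exactly. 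To reproduce the stated bound you should replace your stages~(ii)--(iii) by the paper's single volume-ratio covering of each $\mathcal W''_{\mathcal A}$ and keep the sparsity count at the plain $\binom{d}{\kappa d}$. The remaining ingredients in your outline (the requirement $\upsilon<\varepsilon R(\delta)$, the fallback to the full-ball cover giving $\min(\cdot,1)$ in $\rho_\varepsilon$, the Lipschitz-to-margin step, and the role of $n/\log n\ge 81\varepsilon^{-2\kappa}$ in ensuring $\upsilon<\varepsilon R(\delta)$) are consistent with the paper's argument.
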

Note that the function $h_b^{(1)}(\kappa,d)\leq \log(2)(1/2+1/d)$. Hence
\begin{align*}
    \varepsilon^{1-\kappa} e^{h_b(\kappa)+h_b^{(1)}(\kappa,d)}  \leq   \varepsilon^{1-\kappa} e^{log(2)(3/2+1/d)} \leq 3 \varepsilon^{1-\kappa},
\end{align*} for $d\geq 12$. Besides,   $\rho_{\varepsilon}(1,d)=1$  and $\rho_{\varepsilon}(0,d)=\varepsilon$. Moreover, when both $d$ and $\kappa d$ are even 
\begin{figure}[t]
 \centering
 \begin{tabular}{c c c}
   \includegraphics[width=0.3\textwidth]{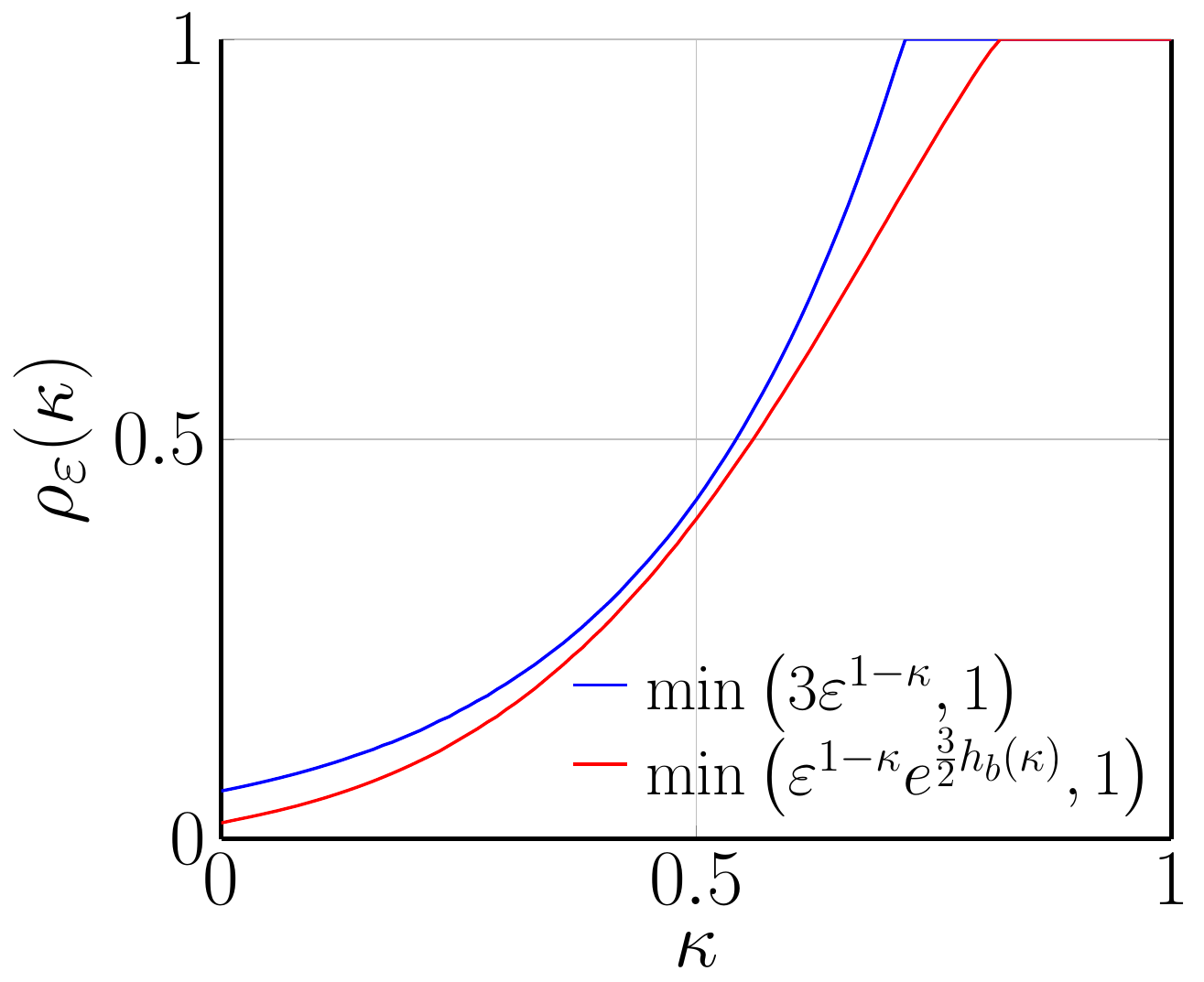}  &  \includegraphics[width=0.3\textwidth]{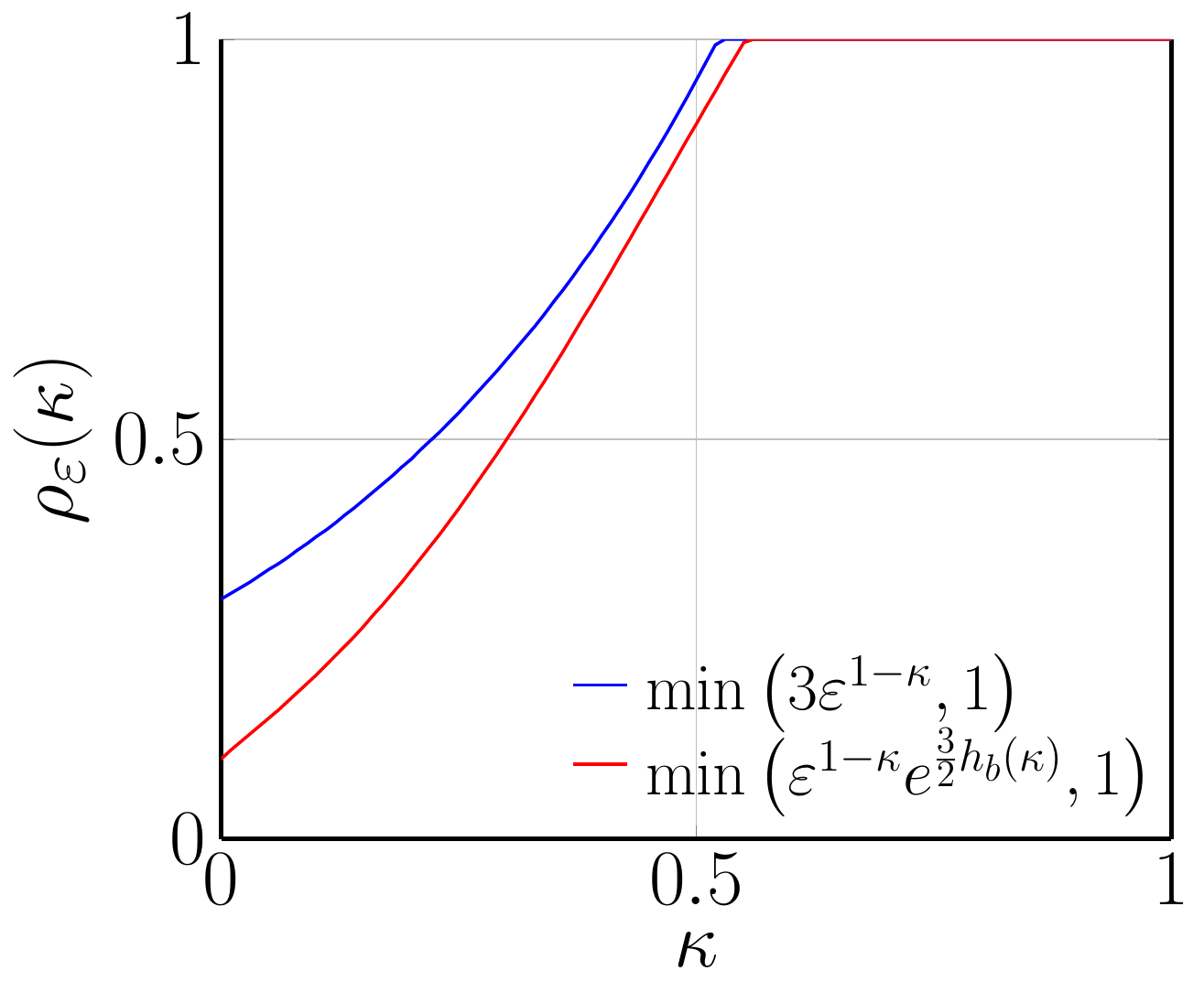}   &  \includegraphics[width=0.3\textwidth]{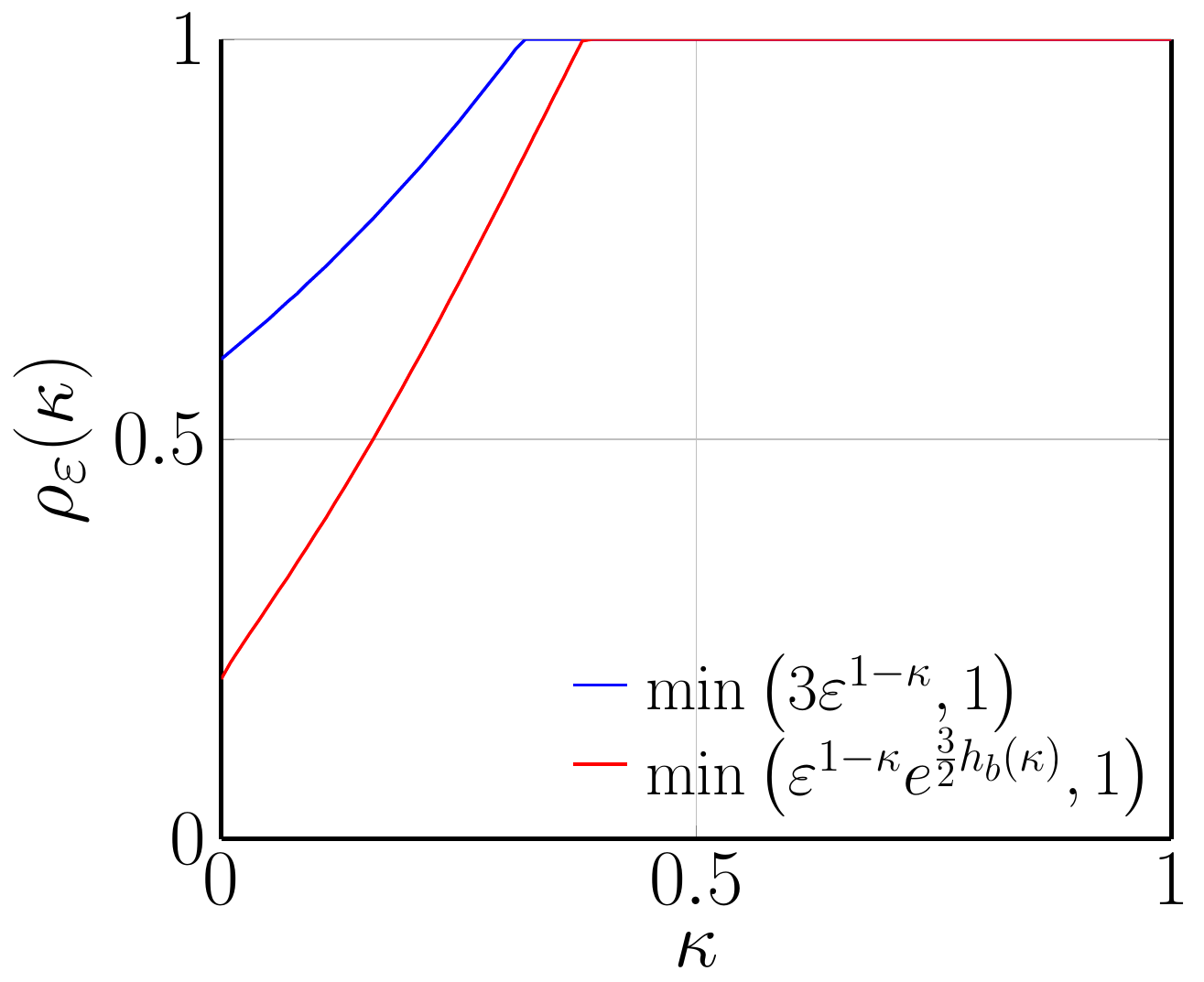}\\
   $\varepsilon=0.02$ &  $\varepsilon=0.1$ & $\varepsilon=0.2$
 \end{tabular}
 \caption{$\rho_{\varepsilon}(\kappa)$ for different values of $\varepsilon$.}              \label{fig:rho}
\end{figure}
numbers, then $ h_b^{(1)}(\kappa,d)=\frac{1}{2}h_b(\kappa)$ and $ \rho_{\varepsilon}(\kappa)\coloneqq \rho_{\varepsilon}(\kappa,d)$ is increasing with respect to $\kappa$. To show the latter claim, consider the derivative of $g\left(\kappa\right) \coloneqq \varepsilon^{1-\kappa}\exp \left(\frac{3}{2}h_b(\kappa)\right)$ with respect to $\kappa$. This derivative is equal to zero at $\kappa^*=1/\left(1+\varepsilon^{2/3}\right)$ and is positive for $\kappa <\kappa^*$. In addition, $g\left(\kappa^*\right)=\left(1+\varepsilon^{2/3}\right)^{3/2}>1$. Hence, $\rho_{\varepsilon}(\kappa)=\min\left(g(\kappa),1\right)$ is increasing with respect to $\kappa$. In Figure~\ref{fig:rho}, $\rho_{\varepsilon}(\kappa)=\min\left( \varepsilon^{1-\kappa}\exp \left(\frac{3}{2}h_b(\kappa)\right),1\right)$, together with its upper bound $\min\left( 3 \varepsilon^{1-\kappa},1\right)$ are plotted for different values of $\varepsilon$. 

It can be observed that if a network is more compressible, then not only the compressed network, but also the original network has a better generalization bound. This result is consistent with the findings of \cite{hsu2021generalization,kuhn2021robustness}. In  \cite{hsu2021generalization}, it is shown that if two networks are ``\textit{close}'' enough, a good generalization bound on one of them, would imply a good bound on the other one as well. In \cite{kuhn2021robustness}, it is shown that ``\textit{prunability}'' of a network captures well the generalization property of the network.

Finally, it should be noted that when the weights of the network follow a stable distribution, similar results to Corollary~\ref{cor:genCompressStable} can be established for the original network.

\section{Perturbation Bound}
\label{sec:sup_perturbation}
The goal of pruning is to find  compressed weights $\vc{\widehat{w}}$ with low dimensionality that are close enough to the original weights $\vc{w}$, which is measured in this work by the relative error  $\|\vc{\widehat{w}}-\vc{w}\|/\|\vc{w}\|$. The following perturbation result guarantees that such pruning strategies also result in small perturbations on the output of the network. The proof is based on the technique given in \cite{neyshabur2017pac}.

\begin{lem} \label{lem:prunOutputDiff}
Let $\vc{w},\widehat{\vc{w}} \in \rset^d$ be two fully connected neural networks. Assume that there exists $\{ \veps_l \}_{l=1}^L\colon \veps_l \geq 0$, such that $\| \widehat{\vc{w}}_l-\vc{w}_l\| \leq \veps_l \|\vc{w}_l\|$, for all $l \in \llbracket 1, L\rrbracket$.
Then, the following inequality holds:
\begin{align}
\|f_{\widehat{\vc{w}}}(\vc{x}) - f_{{\vc{w}}}(\vc{x})  \| \leq B \left[ \prod\nolimits_{l=1}^{L} \left( 1 + \veps_l \right) -1 \right] \left[ \prod\nolimits_{l=1}^{L}  \|\vc{w}_l\| \right], \label{eq:netOutputDiff}
\end{align}
for all $\vc{x} \in \mathcal{X}_B$. In particular, if $\varepsilon_l=\varepsilon$ for all layers and $\| \vc{w} \| \leq R$, then 
\begin{align}
      \|f_{\widehat{\vc{w}}}(\vc{x}) - f_{{\vc{w}}}(\vc{x})  \|   &\leq  B \left[\left( 1 + \veps \right)^L -1 \right] \left({R}/{\sqrt{L}}\right)^{L}. \label{eq:netOutputDiffParticular}
    \end{align}
\end{lem}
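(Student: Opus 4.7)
The plan is to follow the standard layer-by-layer perturbation argument from \cite{neyshabur2017pac}. Let me introduce intermediate activations: let $\vc{x}^{l}$ and $\widehat{\vc{x}}^{l}$ denote the output of layer $l$ under weights $\vc{w}$ and $\widehat{\vc{w}}$ respectively, so that $\vc{x}^{0}=\widehat{\vc{x}}^{0}=\vc{x}$, $\vc{x}^{l}=\phi(\vc{W}_l \vc{x}^{l-1})$ for $l<L$ and $\vc{x}^{L}=\vc{W}_L \vc{x}^{L-1}$ (and similarly for $\widehat{\vc{x}}^{l}$). The goal is to control $\Delta_l := \|\widehat{\vc{x}}^{l}-\vc{x}^{l}\|$ and in particular $\Delta_L$.

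First I would derive a one-step recursion. Using the $1$-Lipschitz property of ReLU applied coordinatewise, adding and subtracting $\widehat{\vc{W}}_l \vc{x}^{l-1}$, and the fact that the Frobenius norm dominates the operator norm (so $\|\vc{A}\vc{z}\|\leq \|\vc{A}\|\,\|\vc{z}\|$ for any matrix $\vc{A}$), we obtain
\begin{align*}
\Delta_l \leq \|\widehat{\vc{W}}_l\|\,\Delta_{l-1} + \|\widehat{\vc{W}}_l-\vc{W}_l\|\,\|\vc{x}^{l-1}\|.
\end{align*}
Two simple bounds feed into this: the hypothesis gives $\|\widehat{\vc{W}}_l\|\leq (1+\varepsilon_l)\|\vc{w}_l\|$ via triangle inequality, and iterating $\|\vc{x}^{l}\|\leq \|\vc{W}_l\|\,\|\vc{x}^{l-1}\|$ (using again that ReLU is $1$-Lipschitz and fixes $0$) together with $\|\vc{x}\|\leq B$ yields $\|\vc{x}^{l-1}\|\leq B\prod_{j=1}^{l-1}\|\vc{w}_j\|$.

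The next step is to normalize: set $\delta_l := \Delta_l/\prod_{j=1}^{l}\|\vc{w}_j\|$. Substituting the two bounds above into the recursion and dividing through produces the clean scalar recursion
\begin{align*}
\delta_l \leq (1+\varepsilon_l)\,\delta_{l-1} + \varepsilon_l B,\qquad \delta_0=0.
\end{align*}
I would then prove by induction on $l$ that $\delta_l \leq B\bigl[\prod_{j=1}^{l}(1+\varepsilon_j)-1\bigr]$; the inductive step is a one-line algebraic manipulation showing $(1+\varepsilon_l)(\prod_{j<l}(1+\varepsilon_j)-1)+\varepsilon_l=\prod_{j\leq l}(1+\varepsilon_j)-1$. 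Taking $l=L$ and multiplying back by $\prod_{j=1}^{L}\|\vc{w}_j\|$ yields the announced bound \eqref{eq:netOutputDiff}.

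For the specialized inequality \eqref{eq:netOutputDiffParticular}, the only remaining ingredient is to bound $\prod_{l=1}^{L}\|\vc{w}_l\|$ under the assumption $\|\vc{w}\|\leq R$. Since $\|\vc{w}\|^2=\sum_{l=1}^L\|\vc{w}_l\|^2\leq R^2$, applying the AM--GM inequality to $\{\|\vc{w}_l\|^2\}_{l=1}^L$ gives $\prod_{l=1}^{L}\|\vc{w}_l\|^2\leq (R^2/L)^L$, hence $\prod_{l}\|\vc{w}_l\|\leq (R/\sqrt{L})^L$, which combined with $\varepsilon_l=\varepsilon$ produces \eqref{eq:netOutputDiffParticular}. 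There is no genuine obstacle here; the only point that needs mild care is being consistent about the Frobenius versus operator norm (our convention in the paper makes $\|\vc{W}_l\|$ the Frobenius norm, which upper bounds the operator norm, so every inequality goes in the correct direction).
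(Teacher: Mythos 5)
Your proof is correct and follows essentially the same route as the paper: a layer-by-layer inductive perturbation argument in the style of Neyshabur et al., bounding the output difference of layer $l$ via the $1$-Lipschitz property of ReLU and the decomposition $\widehat{\vc{W}}_l\widehat{\vc{x}}^{l-1}-\vc{W}_l\vc{x}^{l-1}=\widehat{\vc{W}}_l(\widehat{\vc{x}}^{l-1}-\vc{x}^{l-1})+(\widehat{\vc{W}}_l-\vc{W}_l)\vc{x}^{l-1}$, which after grouping is the same three-term split used in the paper's Lemma~\ref{lem:diffBound}. The normalization $\delta_l=\Delta_l/\prod_{j\leq l}\|\vc{w}_j\|$ to reach the clean scalar recursion $\delta_l\leq(1+\varepsilon_l)\delta_{l-1}+\varepsilon_l B$ is a cosmetic streamlining rather than a different argument, and the AM--GM step for \eqref{eq:netOutputDiffParticular} matches what the paper does implicitly.
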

For derivation of the above bound on the network outputs, the worst case in the propagation of the errors of each layer is assumed, which results into an exponential dependence on the depth of the network, similarly to \cite{neyshabur2017pac}. 

\section{Compression rate and tail-index} \label{sec:sup_compressionRate}
In this work, different pruning strategies have been investigated by exploiting the compressibility properties of heavy-tail distributions. In this section, we show that moreover, in some certain sense, heavier-tailed distributions are more compressible. However, we must underline that this result is neither comprehensive, nor directly usable in our framework, as we will discuss after stating it. 

Before stating the result, let us define the following quantity. For $\varepsilon>0$ and $\vc{w} 
 \in \mathbb{R}^d$, let
\begin{align}
   \kappa_p(\vc{w},\varepsilon)\coloneqq \min \left\{\kappa \colon \left(\|\mathbf{w}^{(\kappa d)} - \mathbf{w}\|_p / \|\mathbf{w}\|_p\right)  \leq \varepsilon \right\}.
\end{align}

\begin{proposition} \label{prop:compressionRate} Suppose that $\vc{w}_1\in \mathbb{R}^d$ and $\vc{w}_2\in \mathbb{R}^d$ are independent vectors of  i.i.d. heavy-tailed random variables with  tail indices $\alpha_1$ and $\alpha_2$, respectively. If $\alpha_1 > \alpha_2$, then for any $\kappa,\varepsilon,\delta>0$ and $p< \max(\alpha_1,\alpha_2)$, there exists 
$d_0(\delta)$, such that for $d\geq d_0(\delta)$, 
\begin{align}
    \mathbb{E}\left[\|\mathbf{w}_1^{(\kappa d)} - \mathbf{w}_1\|_p / \|\mathbf{w}_1\|_p\right]+\delta>\mathbb{E}\left[\|\mathbf{w}_2^{(\kappa d)} - \mathbf{w}_2\|_p / \|\mathbf{w}_2\|_p\right], \label{eq:compresExpEr}
\end{align}
and 
\begin{align}
    \mathbb{E}\left[\kappa_p\left(\vc{w}_1,\varepsilon \right)\right]+\delta > \mathbb{E}\left[\kappa_p\left(\vc{w}_2,\varepsilon \right)\right]. \label{eq:compresExpRate}
\end{align}
\end{proposition}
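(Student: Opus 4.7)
My plan is to prove Proposition~\ref{prop:compressionRate} by first pinning down the $d \to \infty$ limit of the expected relative compression error and then comparing those limits across the two tail indices. The natural starting point is an asymptotic compressibility theorem for i.i.d.\ heavy-tailed sequences of the kind invoked throughout the paper (\cite{gribonval2012compressible,amini2011compressibility}, and \cite[Theorem 1]{silva2015characterization} referenced in Section~\ref{sec:hml}): for i.i.d.\ samples from a heavy-tailed law with tail index $\alpha$, the normalised residual $\|\vc{w}^{(\kappa d)} - \vc{w}\|_p / \|\vc{w}\|_p$ converges in probability, as $d \to \infty$, to a deterministic limit $G(\alpha, p, \kappa)$. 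Because this ratio lies in $[0, 1]$, bounded convergence upgrades this into convergence of expectations: $\mathbb{E}[\|\vc{w}^{(\kappa d)} - \vc{w}\|_p / \|\vc{w}\|_p] \to G(\alpha, p, \kappa)$.

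Given this, \eqref{eq:compresExpEr} reduces to showing $G(\alpha_1, p, \kappa) \geq G(\alpha_2, p, \kappa)$ whenever $\alpha_1 > \alpha_2$ and $p < \alpha_1$, after which $d_0(\delta)$ is chosen so that both expectations sit within $\delta/2$ of their limits. I would split the comparison into two regimes. In the regime $\alpha_2 \leq p < \alpha_1$, the heavier-tailed variable has $\mathbb{E}|w_2|^p = \infty$, so $\|\vc{w}_2\|_p$ concentrates on a vanishing fraction of its largest entries by regular variation; hence $G(\alpha_2, p, \kappa) = 0$, while $G(\alpha_1, p, \kappa) > 0$ since $p < \alpha_1$, and the desired inequality is immediate. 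In the regime $p < \alpha_2$, both $p$-th moments are finite, and the law of large numbers applied to the truncated moment yields the explicit form $G^p(\alpha, p, \kappa) = \mathbb{E}[|w_\alpha|^p \mathbbm{1}\{|w_\alpha| \leq q_\kappa\}] / \mathbb{E}[|w_\alpha|^p]$, where $q_\kappa$ is the upper $\kappa$-quantile of $|w_\alpha|$. One then argues that heavier tails push more $p$-th-moment mass past $q_\kappa$, so $1 - G^p$ is larger for smaller $\alpha$; the prototypical case $\mathbb{P}(|X| > x) = x^{-\alpha}$ on $[1,\infty)$ gives the explicit identity $1 - G^p = \kappa^{1 - p/\alpha}$, which is manifestly monotone in $\alpha$.

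For \eqref{eq:compresExpRate}, I would note that $\kappa_p(\vc{w}, \varepsilon)$ is essentially the left-continuous inverse of $\kappa \mapsto \|\vc{w}^{(\kappa d)} - \vc{w}\|_p / \|\vc{w}\|_p$ evaluated at level $\varepsilon$. Since $G(\alpha, p, \cdot)$ is continuous and strictly decreasing in $\kappa$, the continuous mapping theorem gives $\kappa_p(\vc{w}, \varepsilon) \to G^{-1}(\alpha, p, \varepsilon)$ in probability (with the inverse taken in the $\kappa$-argument), and this inverse inherits the monotonicity from the previous step in exactly the direction required: if $G(\alpha_1, p, \cdot) \geq G(\alpha_2, p, \cdot)$ pointwise, then reaching error level $\varepsilon$ requires at least as much $\kappa$ for $\alpha_1$ as for $\alpha_2$. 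Boundedness of $\kappa_p$ in $[0, 1]$ again delivers convergence of expectations, and the same $\delta/2$ slack argument closes the proof.

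The principal obstacle is the monotonicity comparison in the regime $p < \alpha_2$. The paper's definition of heavy-tailedness constrains only the asymptotic tail $\mathbb{P}(|X| \geq x) \sim c x^{-\alpha}$ and leaves the bulk of the distribution arbitrary, so the limit $G(\alpha, p, \kappa)$ is really a functional of the entire law rather than of $\alpha$ alone; truly universal monotonicity across all heavy-tailed pairs is therefore implausible without extra structure (matching scales, a common parametric family, or restriction to $\sas$ or Pareto). This aligns with the paper's own caveat that ``a more comprehensive theory is needed.'' My proof is clean for $p \in [\alpha_2, \alpha_1)$; for $p < \alpha_2$, it would need a parametric standing assumption or a tail-dominance coupling to be rigorous.
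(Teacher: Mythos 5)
Your approach and the paper's are genuinely different. The paper does not take your route of computing the deterministic LLN limit $G(\alpha,p,\kappa)$ and then comparing those limits. Instead, it invokes the LePage order-statistics representation: the normalized descending order statistics $\frac{1}{a_d}(x_{d,1},\dots,x_{d,d},0,\dots)$ converge in distribution to $(\Gamma_1^{-1/\alpha},\Gamma_2^{-1/\alpha},\dots)$, where $\Gamma_l=\sum_{i\leq l}e_i$ for i.i.d.\ standard exponentials $e_i$. The crucial point you miss is that this limit is built from an exponential sequence $\{\Gamma_l\}$ that is \emph{shared across all values of $\alpha$}, so it gives a natural coupling: holding $\{\Gamma_l\}$ fixed, one differentiates $\varepsilon_p(\vc{z}^d(\alpha),\kappa)^p=\bigl(\sum_{l>\lceil\kappa d\rceil}\Gamma_l^{-p/\alpha}\bigr)/\bigl(\sum_{l\leq d}\Gamma_l^{-p/\alpha}\bigr)$ in $\alpha$ and shows, by a rearrangement of the cross terms $\Gamma_{l_1}^{-p/\alpha}\Gamma_{l_2}^{-p/\alpha}(\log\Gamma_{l_2}-\log\Gamma_{l_1})$ over $l_1\leq\lceil\kappa d\rceil<l_2$, that the derivative is almost surely positive. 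This gives pathwise monotonicity in $\alpha$ and hence a comparison of expectations, with no case split. The paper then concludes by letting $d\to\infty$ on both the $\vc{w}$ side and the $\vc{z}$ side and spending the $\delta$ slack.

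That said, your diagnosis of the difficulty in the regime $p<\alpha_2$ is not misguided, and in fact it points at the weakest link in the paper's own argument. When $p<\alpha$, one has $\sum_l \Gamma_l^{-p/\alpha}=\infty$ and LePage's convergence is in the product topology on $\mathbb{R}^{\infty}$ (finite-dimensional marginals only), which does not control $\ell_p$-type functionals of the full $d$-vector such as $\varepsilon_p$. Your LLN argument shows the actual limit of $\varepsilon_p(\vc{w}^d,\kappa)^p$ is $\mathbb{E}[|X|^p\mathbbm{1}\{|X|\leq q_\kappa\}]/\mathbb{E}[|X|^p]$, which depends on the entire law, not just the tail index; the universal $\alpha$-only limit implicit in the paper's step ``$\mathbb{E}[\varepsilon_p(\vc{w}_i^d,\kappa)]\to\mathbb{E}[\varepsilon_p(\vc{z}_i^{\infty}(\alpha_i),\kappa)]$'' is therefore not automatic and requires more justification than the cited LePage convergence provides. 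So: the main thing you did not see is the LePage coupling and the derivative-in-$\alpha$ computation, which is the structural heart of the paper's proof and buys a clean monotonicity statement without any case analysis; but the concern you raise about law-dependence in the sub-$\alpha_2$ regime is substantive, and a fully careful proof via either route would need an additional moment-convergence or tail-dominance argument to bridge LePage's finite-dimensional convergence to the global $\ell_p$ ratio.
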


The above proposition shows that for a fixed $p$-norm of the normalized compression error with $p<\alpha$, the heavier-tailed distributions are more compressible. The caveat here is that for $p\geq \max(\alpha_1,\alpha_2)$, all terms in \eqref{eq:compresExpEr} and \eqref{eq:compresExpRate} go to zero due to Lemma~\ref{lem:compress} and hence \eqref{eq:compresExpEr}  and  \eqref{eq:compresExpRate}  trivially hold. Therefore, unfortunately we cannot use Proposition~\ref{prop:compressionRate} in our framework since we are mainly interested in the case where $p\geq \max(\alpha_1,\alpha_2)$. Investigating the level of compressibility as a function of the tail-index is a natural next step for our study.

\section{Proofs of the Main Results} \label{sec:sup_mainProofs}
In this section we provide proofs of our main results. We shall begin with stating the following result from \cite{gribonval2012compressible}, which will be repeatedly used in our proofs.

\subsection{Existing Theoretical Results} \label{sec:knownResults}
Many of our results are based on the compressibility of  i.i.d. instances of heavy-tail random variables. Here, we state a known result regarding this fact.
\begin{lem}[{\cite[Proposition 1, Part 2]{gribonval2012compressible} } ]
\label{lem:compress} 
Let $x\in \rset$ be a random variable and assume that $\mathbb{E}|x|^\alpha = \infty$ for some $\alpha \in \rset_+$. Then for all $p \geq \alpha$, $0 < \kappa \leq 1$ and any sequence $\kappa_d$ such that $\lim_{d\to \infty} \frac{\kappa_d}{d} = \kappa$, the following identity holds almost surely:
\begin{align}
\lim_{d \to \infty} \left( \|\vc{x}^{(\kappa_d)} - \vc{x}\|_p/\|\vc{x}\|_p  \right) = 0,
\end{align}
where $\vc{x}  = (x_1, \dots, x_d)$ is a vector of i.i.d. random variables of length $d$.
\end{lem}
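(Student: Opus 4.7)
\textbf{Proof plan for Lemma~\ref{lem:compress}.} The plan is to reduce the statement to an assertion about sums of i.i.d.\ nonnegative random variables with infinite mean, then combine the strong law of large numbers (SLLN) with a simple truncation argument.

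First I would set $Y_i := |x_i|^p$ for $i \geq 1$, which gives an i.i.d.\ sequence of nonnegative random variables. The hypothesis $\mathbb{E}|x_1|^\alpha = \infty$ together with $p \geq \alpha$ forces $\mathbb{E} Y_1 = \infty$: on $\{|x_1| \geq 1\}$ we have $|x_1|^p \geq |x_1|^\alpha$, so $\mathbb{E} Y_1 \geq \mathbb{E}[|x_1|^\alpha \mathbf{1}\{|x_1| \geq 1\}] = \infty$. Since the map $t \mapsto t^{1/p}$ is monotone, the conclusion of the lemma reduces to showing that, almost surely,
\[
R_d \;:=\; \frac{\sum_{l > \kappa_d} Y_{(l)}}{\sum_{i=1}^d Y_i} \;\xrightarrow[d\to\infty]{}\; 0,
\]
where $Y_{(1)} \geq Y_{(2)} \geq \cdots \geq Y_{(d)}$ denote the order statistics of $Y_1,\dots,Y_d$ in decreasing order.

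For the denominator $S_d := \sum_{i=1}^d Y_i$, the standard SLLN for nonnegative variables with infinite mean (obtained by truncating $Y_i$ at level $M$, applying the usual SLLN, and letting $M \to \infty$ along a countable sequence via monotone convergence) gives $S_d/d \to \infty$ almost surely. For the numerator I would pick any $M > 0$ such that $q(M) := \mathbb{P}(Y_1 > M) < \kappa$; this is possible because $q(M) \to 0$ as $M \to \infty$ and $\kappa > 0$. Applying the classical SLLN to the bounded indicators $\mathbf{1}\{Y_i > M\}$ yields $|\{i \leq d : Y_i > M\}|/d \to q(M) < \kappa$ almost surely, and since $\kappa_d/d \to \kappa$, on the corresponding probability-one event we have $|\{i \leq d : Y_i > M\}| \leq \kappa_d$ for all $d$ sufficiently large. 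Hence every index with $Y_i > M$ is already retained by the top-$\kappa_d$ selection, and the discarded bottom tail is uniformly bounded by $M$, giving
\[
\sum_{l > \kappa_d} Y_{(l)} \;\leq\; \sum_{i \,:\, Y_i \leq M} Y_i.
\]
A second application of SLLN, to the bounded variables $Y_i \mathbf{1}\{Y_i \leq M\}$, shows the right-hand side equals $d \cdot (m(M) + o(1))$ with $m(M) := \mathbb{E}[Y_1 \mathbf{1}\{Y_1 \leq M\}] < \infty$.

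Combining these bounds on a common probability-one event gives $R_d \leq (m(M) + o(1))/(S_d/d) \to 0$ almost surely, because $m(M)$ is a fixed finite constant while $S_d/d \to \infty$. Taking $p$-th roots then yields the lemma. The only real idea beyond routine SLLN lies in the pigeonhole-style observation that once $q(M) < \kappa$, the top-$\kappa_d$ entries asymptotically contain all values exceeding $M$, so the discarded mass is controlled by the finite truncated mean $m(M)$ while the total mass diverges; this is where the infinite-mean heavy tail plays the decisive role, and I expect it to be the main conceptual step. The technicalities I would be most careful about are ensuring $M$ can be chosen independently of the sample path (it can, since $q$ is a deterministic tail function) and verifying that the three almost-sure events invoked—$S_d/d \to \infty$, the Bernoulli SLLN, and the truncated-mean SLLN—all hold on a single probability-one set, which is immediate by taking their countable intersection.
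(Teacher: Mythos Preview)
Your proof is correct. The paper itself does not prove this lemma: it is stated in Section~\ref{sec:knownResults} as an existing result, cited verbatim from \cite[Proposition~1, Part~2]{gribonval2012compressible} (with a concurrent version in \cite{amini2011compressibility}), and is used throughout as a black box. So there is no ``paper's own proof'' to compare against.

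That said, your argument is clean and self-contained, and it is worth recording that it recovers the cited result by elementary means. The reduction to $Y_i=|x_i|^p$ with $\mathbb{E}Y_1=\infty$, the infinite-mean SLLN for the denominator, and the key pigeonhole step---choosing $M$ with $\mathbb{P}(Y_1>M)<\kappa$ so that eventually all values above $M$ sit inside the top-$\kappa_d$ selection---are exactly the right ingredients. The only cosmetic point is that the paper's definition of the best-term approximation retains $\lceil \kappa_d\rceil$ entries, but since $\lceil \kappa_d\rceil/d\to\kappa$ as well, your argument goes through unchanged.
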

A similar result was concurrently proven in \cite{amini2011compressibility}. As stated before, this result informally states that for a large vector of i.i.d.\ heavy-tailed random variables, the norm of the vector is mainly determined by a small fraction of its entries. To show this visually, we have generated $10^4$ i.i.d. random variables $\{x_i\}_i$ with $\sas(1)$ distribution where $\alpha=1.7$. Then, we have plotted the histogram of $|x|$ in Figure~\ref{fig:histogramStable}. As can be seen in the figure, the norm of the whole vector is mainly dominated by few number of samples.
\begin{figure}[t]
 \centering
   \includegraphics[width=0.9\textwidth]{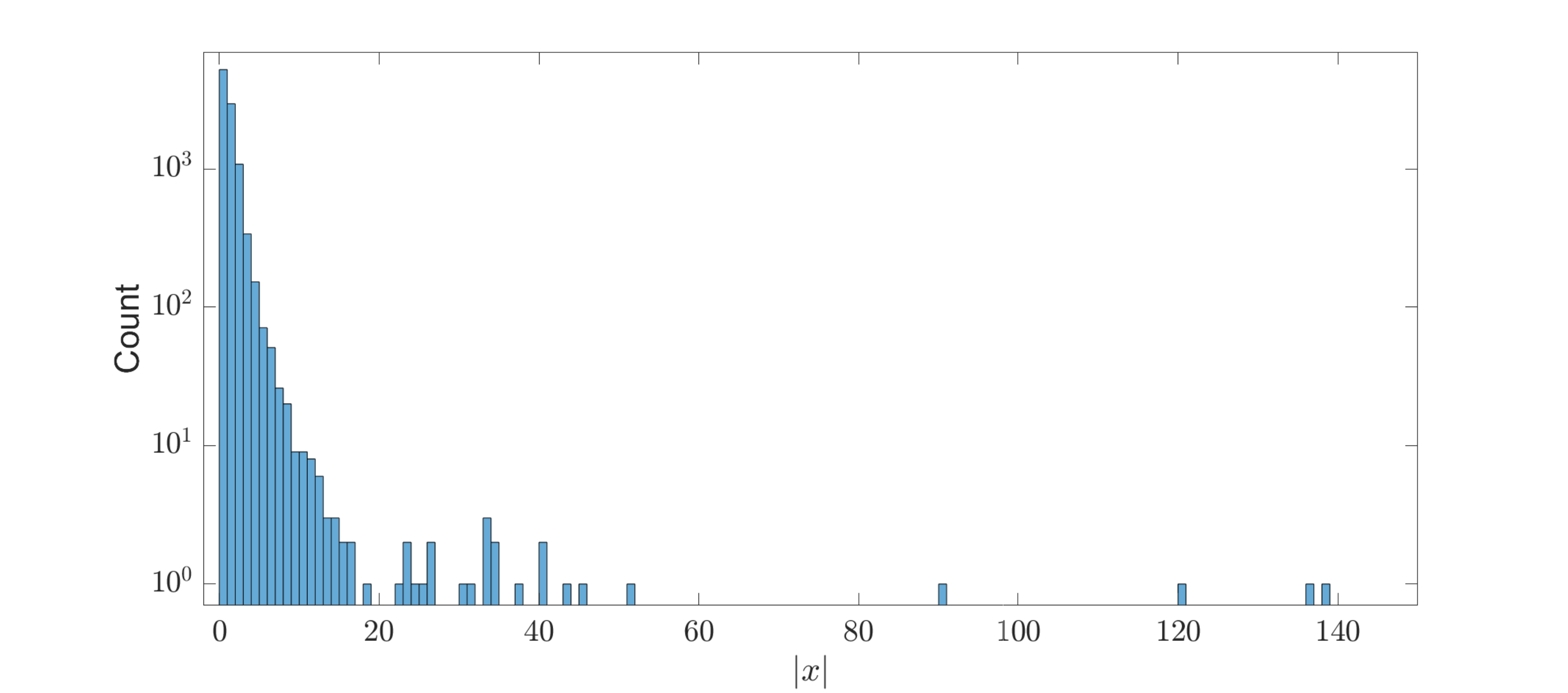}   
 \caption{Histogram of $|x|$ for a sequence of i.i.d. random variables distributed according to $\sas(1)$, where $\alpha=1.7$.}              \label{fig:histogramStable}
\end{figure}

\subsection{Proof of Theorem~\ref{th:magPruning}}

\begin{proof}
\begin{enumerate}[label=(\roman*),noitemsep,topsep=0pt,leftmargin=*,align=left]
    \item As $\{d_l\}_{l=1}^L$ grow, due to HML property and assumptions of this part of the theorem, $\vc{w}$ converges in distribution to a heavy-tailed random vector, denoted as $\vc{x}=(x_1,\ldots,x_d)\in \mathbb{R}^d$,  with i.i.d.\ elements and tail index $\alpha \in (0,2)$.  Hence, for any $\epsilon>0$, $\varepsilon>0$, $\kappa \in (0,1)$, and $p \geq \alpha$ there exists $\{d'_{l,0}\}_{l=1}^L\colon d'_{l,0} \in \mathbb{N}$ such that for $d_l \geq d'_{l,0}$, $l \in \llbracket 1,L \rrbracket$,
\begin{align}
    \mathbb{P}\left[\left(\|\vc{x}^{(\kappa d)}-\vc{x}\|_p/\|\vc{x}\|_p\right)\leq \varepsilon\right] - \mathbb{P}\left[\left(\|\vc{w}^{(\kappa d)}-\vc{w}\|_p/\|\vc{w}\|_p\right)\leq \varepsilon\right]  \leq \epsilon. \label{eq:magproof1}
\end{align}

Moreover due to Lemma~\ref{lem:compress}, there exists $d''_{0}\in \mathbb{N}$ such that for $d \geq d''_{0}$, 
\begin{align}
    \mathbb{P}\left[\left(\|\vc{x}^{(\kappa d)}-\vc{x}\|_p/\|\vc{x}\|_p\right)\leq \varepsilon\right]=1 \label{eq:magproof2}
\end{align}
The results follows from \eqref{eq:magproof1} and \eqref{eq:magproof2} and by choosing $d_{l,0} \geq d'_{l,0}$, for $l \in \llbracket 1,L \rrbracket$, such that  $\sum_{l=1}^L d_{l,0} \geq d''_{0}$.

\item The proof is similar to the previous part. As $\{d_l\}_{l=1}^L$ grow, due to HML property, for $l \in \llbracket 1, L \rrbracket$, $\vc{w}_l$ converges in distribution to a heavy-tailed random vector, denoted as $\vc{x}_l=(x_{l,1},\ldots,x_{l,d_l})\in \mathbb{R}^{d_l}$ ,  with i.i.d.\ elements and tail-index $\alpha_l \in (0,2)$. Hence, for any $\epsilon>0$, $\{\varepsilon_l\}_{l=1}^L\colon \varepsilon_l>0$, $\{\kappa_l\}_{l=1}^L\colon \kappa_l \in (0,1)$,  and $p \geq \max_l \alpha_l$ there exists $\{d'_{l,0}\}_{l=1}^L\colon d'_{l,0} \in \mathbb{N}$ such that for $d_l \geq d'_{l,0}$, $l \in \llbracket 1,L \rrbracket$,
\begin{align}
    \mathbb{P}\left[\left(\|\vc{x}_l^{(\kappa_l d_l)}-\vc{x}_l\|_p/\|\vc{x}_l\|_p\right)\leq \varepsilon_l\right] - \mathbb{P}\left[\left(\|\vc{w}_l^{(\kappa_l d_l)}-\vc{w}_l\|_p/\|\vc{w}_l\|_p\right)\leq \varepsilon_l\right] \leq \epsilon. \label{eq:magproof3}
\end{align}

Moreover due to Lemma~\ref{lem:compress}, there exists $\{d''_{l,0}\}_{l=1}^L\colon d''_{l,0} \in \mathbb{N}$ such that for $d_l \geq d''_{l,0}$, 
\begin{align}
    \mathbb{P}\left[\left(\|\vc{x}_l^{(\kappa_l d_l)}-\vc{x}_l\|_p/\|\vc{x}_l\|_p\right)\leq \varepsilon_l\right]=1 \label{eq:magproof4}
\end{align}
The results follows from \eqref{eq:magproof4} and \eqref{eq:magproof3} and by choosing $d_{l,0} \geq \max(d'_{l,0},d''_{l,0})$.
\end{enumerate} \end{proof}

\subsection{Proof of Theorem~\ref{th:spectral}}
\begin{proof}
Fix $l \in \llbracket 2, L-1 \rrbracket$ and recall that $\vc{W}_l \in \rset^{h_l \times h_{l-1}}$ with $d_l = h_l \times h_{l-1}$. Define
\begin{align}
    \vc{X}_l := \frac1{h_{l}^{2/\alpha_l}} \vc{W}_l^\top \vc{W}_l
\end{align}
and denote the eigenvalues of $\vc{X}_l$ by $\boldsymbol{\lambda}_l = [\lambda_{l,1}, \dots, \lambda_{l,h_{l-1}}]$.

Let $\vc{U}_l \in \rset^{h_l \times h_{l-1}}$ be a matrix whose entries are independent and identically distributed from a symmetric stable distribution with tail-index $\alpha_l$. Note that $\vc{W}_l$ converges in distribution to $\vc{U}_l$, as network dimension goes to infinity, due to HML property and the assumptions of the theorem. Similarly, define
\begin{align}
    \vc{X}'_l := \frac1{h_{l}^{2/\alpha_l}} \vc{U}_l^\top \vc{U}_l
\end{align}
and denote the eigenvalues of $\vc{X}'_l$ by $\boldsymbol{\lambda}'_l = [\lambda'_{l,1}, \dots, \lambda'_{l,h_{l-1}}]$.

As $\vc{W}_l$ converges in distribution to $\vc{U}_l$, then $\lambda_{l,k}$ also weakly converges to $\lambda'_{l,k}$, due to Weyl's inequality (\cite[Page~63]{bhatia97}). Hence, for any $\epsilon>0$,  $\{\varepsilon_l\}_{l=2}^{L-1}\colon \varepsilon_l>0$, and $\{\kappa_l\}_{l=2}^{L-1}\colon \kappa_l \in(0,1)$,  there exists $\{\hat{h}_{l,0}\}_{l=1}^{L-1}$, such that for every $l \in \llbracket 2, L-1\rrbracket$, and $h_i \geq \hat{h}_{i,0}$ for all $i \in \llbracket 1, l\rrbracket$, the following holds
\begin{align}
 \left|\mathbb{P} \left[ \frac{\|\boldsymbol{\lambda}_i^{(\kappa_i h_{i-1})} - \boldsymbol{\lambda}_i  \|_p}{\|\boldsymbol{\lambda}_i\|_p} \leq \veps_i^2 \right] -\mathbb{P} \left[ \frac{\|\boldsymbol{\lambda}_i^{'(\kappa_i h_{i-1})} - \boldsymbol{\lambda}'_i  \|_p}{\|\boldsymbol{\lambda}'_i\|_p} \leq \veps_i^2 \right]\right| \leq \frac{\epsilon}{2}. \label{eq:conNetSize}
\end{align}

Moreover, since each $[\vc{U}]_{i,j}$ is independent and identically distributed from a symmetric stable distribution with tail-index $\alpha_l$, by \cite[Theorem 2.7]{teimouri2020asymptotic}, as $h_{l} \to \infty$, for each $k=1,\dots,h_{l-1}$, the eigenvalue $\lambda'_{l,k}$ weakly converges to a  random variable $\xi_{l,k}$, where the collection $\{\xi_{l,k}\}_{k=1}^{h_{l-1}}$ is independent and identically distributed from a positive stable distribution with tail-index $\alpha_l/2$. Denote by $\boldsymbol{\xi}_l := [\xi_{l,1}, \dots, \xi_{l,h_{l-1}}] \in \rset^{h_{l-1}}$ the random vector containing the limiting i.i.d.\ random variables. 

We will now construct a sequence of $\{h_{l,0}\}_{l=1}^{L-1}$ such that the claims will follow for $h_l \geq h_{l,0}$. Let us start from the second layer, i.e., set $l=2$.
Then, by Lemma~\ref{lem:compress}, for any $\veps_2 >0$, $\kappa_2\in(0,1)$, and $p\geq  \alpha_2/2$, there exists $h'_{1,0} \in \mathbb{N}_+$, such that $h_{1} \geq h'_{1,0}$ implies:
\begin{align}
    \mathbb{P} \left[ \frac{\|\boldsymbol{\xi}_{2}^{(\kappa_{2} h_{1})} - \boldsymbol{\xi}_{2}  \|_p}{\|\boldsymbol{\xi}_{2}\|_p} \leq \veps_{2}^2 \right] =  1.
\end{align}
Let $h_{1,0} = h'_{1,0} \vee \hat{h}_{1,0}$. Having fixed $h_1 \geq h_{1,0}$, we now iterate the following argument from $l=2$ to $l=L-1$ sequentially. Due to the weak convergence of the eigenvalues, we have:
\begin{align}
    \lim_{h_{l} \to \infty} \mathbb{P} \left[ \frac{\|\boldsymbol{\lambda}_l^{'(\kappa_l h_{l-1})} - \boldsymbol{\lambda}'_l  \|_p}{\|\boldsymbol{\lambda}'_l\|_p} \leq \veps_l^2 \right] &= \mathbb{P} \left[ \frac{\|\boldsymbol{\xi}_l^{(\kappa_l h_{l-1})} - \boldsymbol{\xi}_l  \|_p}{\|\boldsymbol{\xi}_l\|_p} \leq \veps_l^2 \right]\\
    &= 1.
\end{align}
Hence, combining with \eqref{eq:conNetSize}, for any $\epsilon>0$, there exists $h''_{l,0} \in \mathbb{N}_+$, such that $h_{l} \geq h''_{l,0} \vee \hat{h}_{l,0} $ implies 
\begin{align}
    \label{eqn:specprun_interm}
    \mathbb{P} \left[ \frac{\|\boldsymbol{\lambda}_l^{(\kappa_l h_{l-1})} - \boldsymbol{\lambda}_l  \|_p}{\|\boldsymbol{\lambda}_l\|_p} \leq \veps_l^2 \right] \geq 1-\epsilon.
\end{align}
If $l=L-1$, set $h_{l,0} = h''_{l,0} \vee \hat{h}_{l} \vee  h_{l-1}$. If $l \leq L-2$, repeat the previous argument to find a $h'_{l,0}$, such that $h_{l} \geq h'_{l,0}$ implies
\begin{align}
    \mathbb{P} \left[ \frac{\|\boldsymbol{\xi}_{l+1}^{(\kappa_{l+1} h_{l})} - \boldsymbol{\xi}_{l+1}  \|_p}{\|\boldsymbol{\xi}_{l+1}\|_p} \leq \veps_{l+1}^2 \right] =  1,
\end{align}
and set $h_{l,0} = h'_{l,0} \vee h''_{l,0} \vee \hat{h}_{l,0} \vee h_{l-1} $. This proves the first claim.

To prove the second claim, first notice that we can set $p=1$ as $\max_l\alpha_l<2$, hence $p=1\geq \alpha_l/2$ for all $l$. Now, fix $l\in \llbracket 2, L-1 \rrbracket$, and for a given $h_{l-1}$ and $h_l$, consider the singular value decomposition of $\vc{W}_l$ as:
\begin{align}
    \vc{W}_l = \vc{U} \boldsymbol{\Sigma} \vc{V}^\top,
\end{align}
and define $\vc{W}_l^{[\kappa_l h_l-1]} := \vc{U} \boldsymbol{\Sigma}^{(\kappa_l h_{l-1})} \vc{V}^\top$, where $\boldsymbol{\Sigma}^{(\kappa_l h_{l-1})}$ is the diagonal matrix whose diagonal entries contain the $\lceil \kappa_l h_{l-1} \rceil$ largest singular values (i.e., prune the diagonal part of $\boldsymbol{\Sigma}$). By using \eqref{eqn:specprun_interm} and the fact that the Schatten $2$-norm coincides with the Frobenius norm, we have:
\begin{align}
\|\vc{W}_l\|^2 = h_l^{1/\alpha_l} \|\boldsymbol{\lambda}_l\|_1, \quad \text{and} \quad \|\vc{W}_l^{[\kappa_l h_l-1]} - \vc{W}_l\|^2 = h_l^{1/\alpha_l} \|\boldsymbol{\lambda}_l^{(\kappa_l h_{l-1})} - \boldsymbol{\lambda}_l\|_1.
\end{align}
Hence, we conclude that for $h_l \geq h_{l,0}$, the following inequality holds for $l\in \llbracket 2, L-1 \rrbracket$:
\begin{align}
    \mathbb{P} \left[ \frac{\|\vc{W}^{[\kappa_l h_l-1]}_l - \vc{W}_l\|}{\|\vc{W}_l\|} \leq \veps_l \right] \geq 1-\epsilon.
\end{align}
This concludes the proof.
\end{proof}

\subsection{Proof of Theorem~\ref{th:normPruning}}
\begin{proof}
For $l \in \llbracket 2, L\rrbracket$, let $v_{l,i}=\|\vc{W}_l(i)\|_p$, where $\vc{W}_l(i) \in \mathbb{R}^{h_l}$ is the $i$-th column of $\vc{W}_l \in \mathbb{R}^{h_l \times h_{l-1}}$ for $i \in \llbracket 1, h_{l-1} \rrbracket$. Note that by definition, for any    $\kappa_l \in (0,1)$ 
\begin{align*}
    \|\vc{v}_l\|_p&=\|\vc{w}_l\|_p,~ \text{ and }~
    \|\vc{v}_l^{(\kappa_l h_{l-1})}-\vc{v}_l\|_p = \|\vc{w}_l^{\{\kappa_l h_{l-1}\},p}-\vc{w}_l\|_p.
\end{align*}
Hence, it suffices to show that for any  $\epsilon>0$ and $\veps_l >0$, there exists $h_{l-1,0}$ such that for $h_{l-1} \geq h_{l-1,0}$,
\begin{align}
    \mathbb{P}\left[\left(\|\vc{v}_l^{(\kappa_l h_{l-1})}-\vc{v}_l\|_p\right)/\|\vc{v}_l\|_p\leq \varepsilon_l\right] \geq 1- \epsilon.  \label{eq:normPruning0}
\end{align}

As network dimensions grow, due to HML property, $w_{l,j}$, $j \in \llbracket 1,d_l \rrbracket$ converges in distribution to an i.i.d. heavy-tailed random variable with tail-index $\alpha_l$. Hence, as $\{h_{l-1}\}_{l=2}^{L}$ grows, $v_{l,i}$ also converges in distribution to a heavy-tailed random variable, denoted as $\vc{x}_l=(x_{l,1},\ldots,x_{l,h_{l-1}})\in \mathbb{R}^{h_{l-1}}$ ,  with i.i.d. elements and tail-index $\alpha_l \in (0,2)$. Thus, there exists $\{h'_{l-1,0}\}_{l=2}^L\colon h'_{l-1,0} \in \mathbb{N}$ such that for $h_{l-1} \geq h'_{l-1,0}$, $l \in \llbracket 2,L \rrbracket$,
\begin{align}
    \mathbb{P}\left[\left(\|\vc{x}_l^{(\kappa_l h_{l-1})}-\vc{x}_l\|_p/\|\vc{x}_l\|_p\right)\leq \varepsilon_l\right] - \mathbb{P}\left[\left(\|\vc{v}_l^{(\kappa_l h_{l-1})}-\vc{v}_l\|_p/\|\vc{v}_l\|_p\right)\leq \varepsilon_l\right] \leq \epsilon. \label{eq:normproof2}
\end{align}

Moreover due to Lemma~\ref{lem:compress}, there exists $\{h''_{l-1,0}\}_{l=2}^L\colon h''_{l-1,0} \in \mathbb{N}$ such that for $h_{l-1} \geq h''_{l-1,0}$, 
\begin{align}
   \mathbb{P}\left[\left(\|\vc{x}_l^{(\kappa_l h_{l-1})}-\vc{x}_l\|_p/\|\vc{x}_l\|_p\right)\leq \varepsilon_l\right]=1 \label{eq:normproof3}
\end{align}
The results follows from \eqref{eq:magproof4} and \eqref{eq:magproof3} and by choosing $h_{l-1,0} \geq \max(h'_{l-1,0},h''_{l-1,0})$.
\end{proof}

\subsection{Proof of Theorem~\ref{th:generalizationPrunedNetwork}} \label{sec:proofGenPrun}

Let
\begin{align*}
D\left(y,f_{\vc{w}}(\vc{x})\right)\coloneqq f_{\vc{w}}(\vc{x})[y]-\max \limits_{j \neq y} f_{\vc{w}}(\vc{x})[j].    
\end{align*}
Define the surrogate loss function $ \ell_{\gamma,\tau}\colon \mathcal{Y}\times \mathcal{Y} \mapsto [0,1]$, with margin loss $\gamma\geq 0$ and continuity margin $\tau > 0$, for the multiclass classifier $f_{\vc{w}}$ as:
\begin{align}
    \ell_{\gamma,\tau}\left(y,f_{\vc{w}}(\vc{x})\right)\coloneqq\begin{cases}1,& \text{ if } D\left(y,f_{\vc{w}}(\vc{x})\right) \leq \gamma,\\
    1-\frac{D\left(y,f_{\vc{w}}(\vc{x})\right)-\gamma}{\tau},& \text{ if } \gamma < D\left(y,f_{\vc{w}}(\vc{x})\right) \leq \gamma+\tau,\\
    0,& \text{ if } \gamma+\tau < D\left(y,f_{\vc{w}}(\vc{x})\right).
    \end{cases} \label{def:lossContMargin}
\end{align}
Note that $\ell_{\gamma}\left(y,f_{\vc{w}}(\vc{x})\right)= \ell_{\gamma,0}\left(y,f_{\vc{w}}(\vc{x})\right)$. Population and empirical risks of a hypothesis $\vc{w}$ are denoted by $\Rcal_{\gamma,\tau}({\vc{w}})$ and $\hat{\Rcal}_{\gamma,\tau}({\vc{w}})$, respectively.

\begin{proof}
Recall that for all $l \in \llbracket 1,L \rrbracket$,  $\vc{\widehat{w}}_l:=\vc{w}_l^{\left(\kappa_l d_l\right)}$. Denote by $A$ the event that $\vc{w}$ is compressible, \emph{i.e.}   when for all $ l \in \llbracket 1,L \rrbracket$,   $\|\vc{\widehat{w}}_l - \vc{w}_l\| \leq \veps \|\vc{w}_l\|$. Denote its complement by $A^C$ and note that $\mathbb{P}(A^C)\leq \epsilon$, where the probability is with respect to $P_{S,\vc{w}}$.

Fix $\delta,\tau>0 $. We show that with probability of at least $1-2e^{-\kappa d/2}-\delta-\epsilon$:
\begin{align}
        \Rcal_{0,\tau}({\vc{\widehat{w}}}) & \leq \hat{\Rcal}_{0,\tau}({\vc{\widehat{w}}})+ \left(12 \mathcal{L}(\tau,\delta)R(\delta)+\sqrt{d}\right) \sqrt{\frac{ (\kappa+\epsilon_{\kappa}) d \log(n)}{n}}, \label{eq:genPruneq1}
\end{align}
and moreover $\|\vc{w}\|\leq R(\delta)$ and $\|\vc{\widehat{w}}_l - \vc{w}_l\| \leq \veps \|\vc{w}_l\|$ for all $ l \in \llbracket 1,L \rrbracket$, simultaneously. Then, under the latter two conditions, 
\begin{align*}
      \Rcal_{0}({\vc{\widehat{w}}}) & \leq   \Rcal_{0,\tau}({\vc{\widehat{w}}}), \\
      \hat{\Rcal}_{0,\tau}({\vc{\widehat{w}}}) & \leq \hat{\Rcal}_{ \gamma(\delta,\tau)}({\vc{w}}),
\end{align*}
using Lemma~\ref{lem:prunOutputDiff},  Definition~\ref{def:lossContMargin}, and  Lemma~\ref{lem:diffBound} that bounds the Lipschitz coefficient of the network. This completes the proof.

\begin{lem} \label{lem:diffBound}
Suppose that for $l \in \llbracket 1,L\rrbracket$ and a given $\upsilon_l>0$, we have  $\|\vc{w}_l-\vc{w}'_l\|\leq \upsilon_l$.
\begin{itemize}
    \item[i.] Then, for any $(\vc{x},y)$, the following relations hold:
    \begin{align*}
        \| f_{\vc{w}}(\vc{x})-f_{\vc{w}'}(\vc{x})\| &\leq B \prod \limits_{l=1}^L \left(\|\vc{w}_l\|+\upsilon_l \right)-B\prod \limits_{l=1}^L \|\vc{w}_l\|, \\
        \big| \ell_{0,\tau}(y, f_{\vc{w}}(\vc{x}))-\ell_{0,\tau}(y, f_{\vc{w}'}(\vc{x})) \big| &\leq \frac{\sqrt{2}}{\tau} \| f_{\vc{w}}(\vc{x})-f_{\vc{w}'}(\vc{x})\|.
    \end{align*}
    \item[ii.]
    In particular, if  $\upsilon_l=\upsilon \leq R/\sqrt{L}$ for $l \in \llbracket 1,L \rrbracket$ and if $\|\vc{w}\|\leq R$, then 
    \begin{align*}
        \| f_{\vc{w}}(\vc{x})-f_{\vc{w}'}(\vc{x})\| &\leq  BL\left(\frac{2R}{\sqrt{L}}\right)^{L-1}\upsilon,\\
        \big| \ell_{0,\tau}(y, f_{\vc{w}}(\vc{x}))-\ell_{0,\tau}(y, f_{\vc{w}'}(\vc{x})) \big|  &\leq \frac{BL\sqrt{2}}{\tau}\left(\frac{2R}{\sqrt{L}}\right)^{L-1}\upsilon \eqqcolon \mathcal{L}(\tau,\delta) \upsilon .
    \end{align*}
\end{itemize}
\end{lem}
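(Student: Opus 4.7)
I will prove the two inequalities of part~(i) first and then derive part~(ii). For the output bound I propagate a layer-wise perturbation estimate by induction, using that ReLU is $1$-Lipschitz and that the operator norm is dominated by the Frobenius norm $\|\vc{w}_l\|$. The Lipschitz bound on $\ell_{0,\tau}$ combines the $1/\tau$-slope of the piecewise-linear surrogate \eqref{def:lossContMargin} with a $\sqrt{2}$-Lipschitz bound on $f\mapsto D(y,f)$ that falls out of writing $D$ as the minimum over $j\neq y$ of $\langle e_y-e_j,f\rangle$. Part~(ii) then follows from part~(i) by Maclaurin's inequality combined with Cauchy-Schwarz.

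\textbf{Layer-wise induction.} Set $h_0=\vc{x}$, $h_l=\phi(\vc{W}_l h_{l-1})$ for $1\leq l\leq L-1$, and $h_L=\vc{W}_L h_{L-1}$; define $h'_l$ analogously from $\vc{w}'$. Because $\phi$ is $1$-Lipschitz and the operator norm of a matrix is bounded by its Frobenius norm, the triangle inequality gives the recursion
\begin{align*}
\|h_l-h'_l\|\leq \|\vc{w}_l\|\,\|h_{l-1}-h'_{l-1}\|+\upsilon_l\|h'_{l-1}\|, \qquad \|h'_{l-1}\|\leq B\prod_{i<l}(\|\vc{w}_i\|+\upsilon_i).
\end{align*}
Unrolling from $\|h_0-h'_0\|=0$ produces $\|f_{\vc{w}}(\vc{x})-f_{\vc{w}'}(\vc{x})\|\leq B\sum_l\upsilon_l\prod_{i<l}(\|\vc{w}_i\|+\upsilon_i)\prod_{i>l}\|\vc{w}_i\|$, which equals $B\bigl[\prod_l(\|\vc{w}_l\|+\upsilon_l)-\prod_l\|\vc{w}_l\|\bigr]$ via the telescoping identity obtained by expanding both products over subsets of $\llbracket 1,L\rrbracket$. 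This is the first inequality of~(i). For the second inequality, writing $D(y,f)=\min_{j\neq y}\langle e_y-e_j,f\rangle$ as a minimum of linear functionals with slope vectors of norm $\sqrt{2}$ yields $|D(y,f)-D(y,f')|\leq \sqrt{2}\|f-f'\|$; composing with the $1/\tau$-Lipschitz piecewise-linear definition \eqref{def:lossContMargin} delivers the $\sqrt{2}/\tau$ factor.

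\textbf{Part (ii).} Set $a_i\coloneqq \|\vc{w}_i\|$, so $\sum_i a_i^2\leq R^2$. Applying part~(i) with $\upsilon_l\equiv \upsilon$ and using $a_i\leq a_i+\upsilon$ on the right-hand factors bounds $\|f_{\vc{w}}(\vc{x})-f_{\vc{w}'}(\vc{x})\|$ by $B\upsilon\sum_l\prod_{i\neq l}(a_i+\upsilon)$. This sum is the elementary symmetric polynomial $e_{L-1}(a_1+\upsilon,\ldots,a_L+\upsilon)$, which Maclaurin's inequality bounds by $L\,\bigl[L^{-1}\sum_i(a_i+\upsilon)\bigr]^{L-1}$. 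Cauchy-Schwarz yields $\sum_i a_i\leq\sqrt{L}\,R$, so the arithmetic mean of $a_i+\upsilon$ is at most $R/\sqrt{L}+\upsilon\leq 2R/\sqrt{L}$ under the hypothesis $\upsilon\leq R/\sqrt{L}$, producing the claimed bound $BL(2R/\sqrt{L})^{L-1}\upsilon$. Substituting into the loss Lipschitz estimate from part~(i) yields the second assertion.

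\textbf{Expected main obstacle.} There is no genuinely hard step; the only slightly delicate bookkeeping is (a) the telescoping identity used to repackage the induction in closed form, and (b) the specific Maclaurin-plus-Cauchy-Schwarz combination in part~(ii) that produces the exponent $L-1$. A crude estimate such as $\prod_i(a_i+\upsilon)\leq(2R/\sqrt{L})^L$ multiplied by $L$ would produce the wrong exponent $L$ and therefore lose a factor of $2R/\sqrt{L}$, so routing the bound through the elementary symmetric polynomial of degree $L-1$ is essential for matching the stated constant.
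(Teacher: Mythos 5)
Your argument is correct throughout and matches the paper's bound exactly, but you route part~(ii) through a genuinely different set of tools. The paper's part~(i) induction uses the three-term decomposition $(\vc{W}_{i+1}+\Delta)(h+\delta)-\vc{W}_{i+1}h=\Delta h+\vc{W}_{i+1}\delta+\Delta\delta$ to propagate the closed-form bound $B\prod(\|\vc{w}_l\|+\upsilon_l)-B\prod\|\vc{w}_l\|$ directly, while you unroll the two-term recursion $\vc{W}h-\vc{W}'h'=\vc{W}(h-h')+(\vc{W}-\vc{W}')h'$ and recover the same closed form by summing over the maximal element of each subset; both are fine, and the equality you invoke is easy to verify. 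For the loss Lipschitz constant, you observe that $D(y,\cdot)$ is a minimum of the linear maps $\langle e_y-e_j,\cdot\rangle$, each $\sqrt{2}$-Lipschitz; the paper instead splits into $|f[y]-f'[y]|+\max_j|f[j]-f'[j]|$ and applies $x+y\le\sqrt{2(x^2+y^2)}$, so the constant comes out identically. The real divergence is in part~(ii): the paper asserts by a symmetry argument that $\prod(\|\vc{w}_l\|+\upsilon)-\prod\|\vc{w}_l\|$ is maximized over $\|\vc{w}\|\le R$ at the equal-norm point $\|\vc{w}_l\|=R/\sqrt{L}$ and then uses the elementary inequality $a^L-b^L\le L(a-b)a^{L-1}$, whereas you go through the intermediate sum $B\upsilon\sum_l\prod_{i\ne l}(a_i+\upsilon)=B\upsilon\, e_{L-1}(a_1+\upsilon,\dots,a_L+\upsilon)$ and apply Maclaurin's inequality plus Cauchy--Schwarz. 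Your route avoids having to justify the paper's maximization claim (which is plausible by Schur-concavity considerations but not spelled out), so it is arguably the more self-contained of the two; the paper's route is shorter once the symmetry is granted. Both give the exponent $L-1$ and the constant $BL(2R/\sqrt{L})^{L-1}$.
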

Hence, it remains to show \eqref{eq:genPruneq1} together with the conditions $\|\vc{w}\|\leq R(\delta)$ and $\|\vc{\widehat{w}}_l - \vc{w}_l\| \leq \veps \|\vc{w}_l\|$, for $ l \in \llbracket 1,L \rrbracket$, hold with probability at least $1-2e^{-\kappa d}-\delta -\epsilon$. Now, first whenever $\|\vc{\widehat{w}}\|\leq R(\delta)$, we discretize $\vc{\hat{w}}$. Let
\begin{align*}
    \widehat{\mathcal{W}}(R(\delta),d,\kappa)\coloneqq \left\{\vc{\widehat{w}} \in \mathbb{R}^d \big| \|\vc{\widehat{w}}\|\leq R(\delta), \|\vc{\widehat{w}}\|_0 \leq \kappa d \right\},
\end{align*} 
where  $\|\vc{\widehat{w}}\|_0$ denotes the number of non-zero components of $\vc{\widehat{w}}$. Assume that $\widetilde{\mathcal{W}}(R(\delta),d,\kappa)$ is a discretization of this space with $\upsilon>0$ precision, \emph{i.e.} for every $\vc{\widehat{w}} \in   \widehat{\mathcal{W}}(R(\delta),d,\kappa)$, there exists $ \widetilde{\vc{w}} \in   \widetilde{\mathcal{W}}(R(\delta),d,\kappa)$ satisfying $\| \widetilde{\vc{w}}-\vc{\widehat{w}} \|\leq \upsilon$. Among all such coverings, consider the one with  minimum number of $\mathcal{N}_{\upsilon}$ points.

\begin{lem} \label{lem:disPointsComp}
$\mathcal{N}_{\upsilon} \leq e^{d h_b(\kappa)} \left(\frac{3 R(\delta) }{\upsilon}\right)^{\kappa d}$.
\end{lem}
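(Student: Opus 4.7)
The proof follows the standard template for covering numbers of sparse bounded sets. The plan is to decompose the constrained set according to the support of the nonzero entries, cover each resulting low-dimensional Euclidean ball, and then use a binomial-coefficient entropy bound to count supports.

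First I would write
\begin{align*}
\widehat{\mathcal{W}}(R(\delta),d,\kappa) \;=\; \bigcup_{\substack{S\subseteq \llbracket 1,d\rrbracket \\ |S| \leq \lceil \kappa d\rceil}} \widehat{\mathcal{W}}_S,
\end{align*}
where $\widehat{\mathcal{W}}_S \coloneqq \{\vc{\widehat w}\in\mathbb{R}^d : \mathrm{supp}(\vc{\widehat w})\subseteq S,\ \|\vc{\widehat w}\| \leq R(\delta)\}$. For each fixed $S$, the set $\widehat{\mathcal{W}}_S$ is isometric to the Euclidean ball of radius $R(\delta)$ in $\mathbb{R}^{|S|}$, which by a standard volumetric argument (e.g.\ Vershynin, \textit{High-Dimensional Probability}, Cor.~4.2.13) admits a $\upsilon$-net of cardinality at most $(3R(\delta)/\upsilon)^{|S|} \leq (3R(\delta)/\upsilon)^{\lceil \kappa d\rceil}$. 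Taking the union of these nets over $S$ produces a $\upsilon$-net of $\widehat{\mathcal{W}}(R(\delta),d,\kappa)$ itself.

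Next I would bound the number of admissible supports. Since including zero entries is allowed, it suffices to count subsets of size exactly $\lceil \kappa d\rceil$, giving at most $\binom{d}{\lceil \kappa d\rceil}$. Applying the classical entropy bound
\begin{align*}
\binom{d}{\lceil \kappa d\rceil} \leq \exp\bigl(d\, h_b(\kappa)\bigr),
\end{align*}
and multiplying by the per-support covering number gives
\begin{align*}
\mathcal{N}_{\upsilon} \leq \exp\bigl(d\, h_b(\kappa)\bigr) \left(\frac{3 R(\delta)}{\upsilon}\right)^{\lceil \kappa d\rceil},
\end{align*}
which, up to replacing $\lceil \kappa d\rceil$ by $\kappa d$ in the exponent (which is the convention used elsewhere in the proof of Theorem~\ref{th:generalizationPrunedNetwork} with the corresponding ceiling absorbed into $\kappa$ via the definition $\kappa = \tfrac{1}{d}\sum_l \lceil \kappa_l d_l\rceil$), yields the stated inequality.

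There is no real obstacle here; the only points requiring care are (i) that the $\upsilon$-net one obtains this way need not itself consist of vectors in $\widehat{\mathcal{W}}$ (the volumetric bound produces centers inside a slightly enlarged ball), which can be handled either by standard inflation arguments or by invoking the external covering number and noting that Lemma~\ref{lem:prunOutputDiff}–style perturbation arguments are insensitive to centers lying in a modestly enlarged set; and (ii) ensuring that one uses $|S|$ rather than $d$ in the exponent of the covering number, which is the whole point of exploiting sparsity and which is exactly what turns the prefactor from $(3R(\delta)/\upsilon)^d$ into $(3R(\delta)/\upsilon)^{\kappa d}$.
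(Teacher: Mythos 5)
Your proof is correct and follows essentially the same route as the paper: decompose by support, cover each sparse slice (a ball in a $\lceil \kappa d\rceil$-dimensional subspace) via the standard volumetric bound $(3R/\upsilon)^{\kappa d}$, and control the number of supports by $\binom{d}{\kappa d} \le e^{d h_b(\kappa)}$. The paper cites Wu's lecture notes rather than Vershynin for the single-ball covering number and is more cavalier about the ceiling in $\kappa d$ and about internal versus external nets, both of which you flag correctly, but these do not amount to a different argument.
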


Note that in general $\|\vc{\widehat{w}}\| \leq\|\vc{w}\|$. Let $ l \coloneqq c_1(\delta,\tau)\sqrt{\frac{(\kappa+\epsilon_{\kappa}) d \log(n)}{n}}$ and $\upsilon =\frac{a l}{4\mathcal{L}(\tau,\delta)}$, where $a\coloneqq \frac{12 \mathcal{L}(\tau,\delta)R(\delta)}{12 \mathcal{L}(\tau,\delta)R(\delta)+\sqrt{d}}$. Then
\begin{align}
    \mathbb{P}\bigg( \big| \Rcal_{0,\tau}({\vc{\widehat{w}}})&- \hat{\Rcal}_{0,\tau}({\vc{\widehat{w}}})\big| \geq l \, \bigcup \, \|\vc{w}\| > R(\delta) \,\bigcup\, A^C \bigg) \nonumber \\
    & \leq \mathbb{P}\left( \big| \Rcal_{0,\tau}({\vc{\widehat{w}}})- \hat{\Rcal}_{0,\tau}({\vc{\widehat{w}}})\big| \geq l \,\bigcap \, \|\vc{w}\|\leq R(\delta)\right)+
    \mathbb{P}\left( \|\vc{w}\| \geq R(\delta)\right) + \mathbb{P}\left( A^C \right) \nonumber \\
    & \leq \mathbb{P}\left( \big| \Rcal_{0,\tau}({\vc{\widehat{w}}})- \hat{\Rcal}_{0,\tau}({\vc{\widehat{w}}})\big| \geq l \,\bigcap \, \|\vc{w}\|\leq R(\delta)\right)+\delta+\epsilon \nonumber \\
    & \leq \mathbb{P}\left(\sup \limits_{\vc{\widehat{w}} \in \widehat{\mathcal{W}}(R(\delta),d,\kappa) } \big| \Rcal_{0,\tau}({\vc{\widehat{w}}})- \hat{\Rcal}_{0,\tau}({\vc{\widehat{w}}})\big| \geq l/2 \right)+\delta+\epsilon \nonumber \\
    & \stackrel{(a)}{\leq} \mathbb{P}\left(\max \limits_{\vc{\widetilde{w}} \in  \widetilde{\mathcal{W}}(R(\delta),d,\kappa)} \big| \Rcal_{0,\tau}(\vc{\widetilde{w}})- \hat{\Rcal}_{0,\tau}(\vc{\widetilde{w}})\big| \geq l(1-a)/2\right)+\delta+\epsilon\nonumber\\
    &\stackrel{(b)}{\leq}  \mathcal{N}_{\upsilon} \max \limits_{\widetilde{\vc{w}}\in  \widetilde{\mathcal{W}}(R(\delta),d,\kappa)} \mathbb{P}\left( \big| \Rcal_{0,\tau}(\vc{\widetilde{w}})- \hat{\Rcal}_{0,\tau}(\vc{\widetilde{w}})\big| \geq l(1-a)/2 \right)+\delta+\epsilon\nonumber \\
    &\stackrel{(c)}{\leq}2  \mathcal{N}_{\upsilon}  \exp\left(-n l^2(1-a)^2/2\right) +\delta+\epsilon \nonumber \\
        &\stackrel{(d)}{\leq} 2   \exp\left(-n l^2(1-a)^2/2+\kappa d\log\left(\frac{12  \mathcal{L}(\tau,\delta) R(\delta) }{al}\right)+dh_b(\kappa)\right) +\delta+\epsilon, 
    \label{eq:genBound1}
\end{align}
where $(a)$ is derived since
\begin{align*}
    \big|\Rcal_{0,\tau}({\vc{\widehat{w}}})- \hat{\Rcal}_{0,\tau}({\vc{\widehat{w}}})\big|& \leq  \big|\Rcal_{0,\tau}({\vc{\widetilde{w}}})- \hat{\Rcal}_{0,\tau}({\vc{\widetilde{w}}})\big|+2 \mathcal{L}(\tau,\delta) \upsilon=\big|\Rcal_{0,\tau}({\vc{\widetilde{w}}})- \hat{\Rcal}_{0,\tau}({\vc{\widetilde{w}}})\big|+a l/2.
\end{align*}
by Lemma~\ref{lem:diffBound}, given that $\upsilon \leq R/\sqrt{L}$, and the triangle inequality. The inequality $(b)$ is obtained by applying the union bound, $(c)$ is derived using Hoeffding's inequality and since loss is bounded by 1, and $(d)$ is due to Lemma~\ref{lem:disPointsComp}. \\
It remains to show that the term in the exponent in \eqref{eq:genBound1} is upper bounded by $-\kappa d/2$ and $\upsilon \leq R/\sqrt{L}$.
\begin{align}
    -n l^2(1-a)^2/2+\kappa d&\log\left(\frac{12  \mathcal{L}(\tau,\delta) R(\delta) }{al}\right)+dh_b(\kappa)\\
    =&-\frac{ c_1(\delta,\tau)^2 (1-a)^2 \kappa d \log(n) }{2}+\frac{\kappa d}{2}\log(n) \label{eq:genConclusionComp1}\\
    &+\kappa d\log\left(\frac{12  \mathcal{L}(\tau,\delta) R(\delta) }{a c_1(\delta,\tau)\sqrt{d}}\right)
    \label{eq:genConclusionComp2}\\
    &-\frac{ c_1(\delta,\tau)^2 (1-a)^2 \epsilon_{\kappa} d \log(n) }{2}-\frac{\kappa d}{2} \log\left(\kappa+\epsilon_{\kappa}\right) +dh_b(\kappa) \label{eq:genConclusionComp3}\\
    &-\frac{\kappa d}{2}\log \log(n). \label{eq:genConclusionComp4}
\end{align}
It can be verified that \eqref{eq:genConclusionComp1} and  \eqref{eq:genConclusionComp2} are non-positive when $c_1(\delta,\tau) \geq \left(12 \mathcal{L}(\tau,\delta)R(\delta)+\sqrt{d}\right)/\sqrt{d}$. Moreover, with this choice of $c_1(\delta,\tau)$, \eqref{eq:genConclusionComp3} is non-positive for $ \epsilon_{\kappa}  = \left(2 h_b(\kappa)-\kappa\log \left(\kappa\right)\right)/\left(\log(n)\right)$. Finally, \eqref{eq:genConclusionComp3} is less than $-\kappa d/2$, for $n\geq 16$. 

Finally with the chosen value of $\upsilon$,  $\upsilon \leq R/\sqrt{L}$ holds if $n/\log(n) \geq 10L$. This completes the proof.
\end{proof}

\subsection{Proof of Corollary~\ref{cor:genCompressStable}}

For notation convenience, let $\sas_n(\sigma)\equiv \sas(\sigma_{\alpha} \sigma)$,  where $n$ stands for normalized and $\sigma_{\alpha}\coloneqq \left(2\Gamma(-\alpha)\cos((2-\alpha)\pi/2)\right)^{1/\alpha}$. First, we state the Corollary for a more  general case, and then we state the proof of this general result.

\begin{cor} \label{cor:genCompressStableGeneral} Assume that for $l \in \llbracket 1,L \rrbracket$ and $i \in \llbracket 1,d_l \rrbracket$, the conditional distribution of ${w}_{l,i} \stackrel{\mathclap{\mbox{\tiny{i.i.d.}}}}{\sim} \mathcal{S}\alpha_l \mathcal{S}_n(\sigma_l) $  with $\alpha_l \in (1,2)$. Further assume that $\sigma^2 \coloneqq \sum\nolimits_{l=1}^L (d_l/d) \sigma_l^2$ and $\{\alpha_l\}_{l=1}^L$  do not depend on $S$. Then for every $\veps >0$, $\kappa_l \in (0,1)$, $l \in \llbracket 1,L \rrbracket$, and $\beta>0$,  there exists $d_{l,0} \in \mathbb{N}$, such that for every $n \colon n/\log(n) \geq 10L$ and  $\tau>0$, with probability  at least $1-3d^{-\beta}$,
\begin{align}
         \Rcal_{0}({\vc{\widehat{w}}}) & \leq \hat{\Rcal}_{\gamma}({\vc{w}})+ \left(a(\alpha)\sigma^{L} d^{\frac{L\left(\alpha+2\beta+2\right)}{2\alpha}} /\tau+\sqrt{d}\right)\sqrt{\left(\kappa+\epsilon_{\kappa}\right)  \log(n)/n},  \label{eq:generalizationPrunedNetworkStableGen}
\end{align}
where  $\{\vc{\widehat{w}_l}\}_l=\{\vc{w}_l^{(\kappa d_l)}\}_l$,  $\alpha{\coloneqq}\min_{l}\alpha_l$, 
\begin{align*}
a(\alpha) \coloneqq  6\sqrt{2} B  6^{L} 4^{L/\alpha} /  L^{(L-3)/2},~~~\gamma \coloneqq \tau+b_{\varepsilon}(\alpha) \sigma^{L}  d^{\frac{L\left(\alpha+2\beta+2\right)}{2\alpha}}/\tau    
\end{align*}
and  $b_{\varepsilon}(\alpha) \coloneqq \sqrt{2} B 3^{L} 4^{L/\alpha} \left((1+\varepsilon)^{L}-1\right) /  L^{L/2}$.
\end{cor}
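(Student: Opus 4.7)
The plan is to specialize Theorem~\ref{th:generalizationPrunedNetwork} to the present i.i.d.\ stable setup with the choices $\delta = \epsilon = d^{-\beta}$. Assumption~\ref{asmp:magp} with failure probability $\epsilon = d^{-\beta}$ holds immediately: the weights are genuinely i.i.d.\ symmetric $\alpha$-stable per layer, so Theorem~\ref{th:magPruning}(ii) directly supplies the required $\{d_{l,0}\}$, and independence of $d_{l,0}$ from $S$ is built into the hypothesis that $\{\alpha_l\}$ and $\sigma$ do not depend on $S$.

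The substantive work is to produce an explicit, $d$-dependent bound on the radius $R(\delta)$ from \eqref{def:RofDeltaLipchtz}. Because each $\alpha_l < 2$, the entries have infinite variance, so one cannot argue through $\mathbb{E}\|\vc{w}\|^2$. Instead I would use the crude bound $\|\vc{w}\| \leq \sqrt{d}\,\max_{l,i}|w_{l,i}|$ and then union-bound the maximum using the standard stable tail estimate $\mathbb{P}(|w_{l,i}|>t) \leq C\,\sigma_l^{\alpha_l} t^{-\alpha_l}$, which is exactly the normalization for which $\sigma_{\alpha_l}$ is defined in the statement. Taking $\alpha = \min_l\alpha_l$, replacing $t^{-\alpha_l}$ by $t^{-\alpha}$ for $t$ large, and applying Jensen's inequality to the concave map $x\mapsto x^{\alpha/2}$ with weights $d_l/d$ yields $\sum_l d_l \sigma_l^\alpha \leq d\sigma^\alpha$ via the defining identity $\sigma^2 = \sum_l (d_l/d)\sigma_l^2$. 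Setting the union bound equal to $\delta = d^{-\beta}$ then gives
\begin{align*}
R(\delta) \;\lesssim\; \sqrt{d}\,(d\sigma^\alpha/\delta)^{1/\alpha} \;=\; \sigma\,d^{1/2 + (1+\beta)/\alpha} \;=\; \sigma\,d^{(\alpha+2+2\beta)/(2\alpha)}.
\end{align*}

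From here the rest is substitution. Plugging this $R(\delta)$ into the definition of $\mathcal{L}(\tau,\delta)$ yields $\mathcal{L}(\tau,\delta)R(\delta) \lesssim a(\alpha)\sigma^L d^{L(\alpha+2+2\beta)/(2\alpha)}/\tau$, with $a(\alpha)$ absorbing the factors $2^{L-1}$, $L^{-(L-3)/2}$ from $\mathcal{L}$ and the constants from $R(\delta)$; this becomes the leading coefficient multiplying $\sqrt{(\kappa+\epsilon_\kappa)\log(n)/n}$ in \eqref{eq:generalizationPrunedNetwork}. The same substitution into $\gamma(\delta,\tau)$ from Theorem~\ref{th:generalizationPrunedNetwork} produces the stated $\gamma = \tau + b_\varepsilon(\alpha)\sigma^L d^{L(\alpha+2+2\beta)/(2\alpha)}/\tau$. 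Finally, since $2e^{-\kappa d/2}$ is exponentially small in $d$, for all sufficiently large $d$ (which can be absorbed into $d_{l,0}$) it is dominated by $d^{-\beta}$, so the cumulative failure probability remains at most $3d^{-\beta}$.

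The main obstacle is not really conceptual but two-fold. First, tracking the constants precisely through $\mathcal{L}(\tau,\delta)$, the $\sigma_{\alpha_l}$-normalization in the $\sas$ tail, and the Jensen step is where the exact forms of $a(\alpha)$ and $b_\varepsilon(\alpha)$ must be read off and where small arithmetic mistakes easily creep in. Second, the crude bound $\|\vc{w}\| \leq \sqrt{d}\max_i|w_i|$ is loose by a factor of $\sqrt{d}$ relative to the true $d^{1/\alpha}$ order of the $\alpha$-stable norm, and this suboptimality is precisely what creates the additional $L/2$ term in the exponent $L(\alpha+2+2\beta)/(2\alpha) = L/2 + L(1+\beta)/\alpha$ of the corollary; a tighter bound would require a direct treatment of the heavy-tailed sum $\sum w_i^2$ (tail index $\alpha/2<1$, dominated by its maximum term), which is not pursued here.
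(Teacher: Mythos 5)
Your proposal follows the same overall strategy as the paper: specialize Theorem~\ref{th:generalizationPrunedNetwork} to the i.i.d.\ stable setup, produce an explicit bound $R(\delta)\lesssim \sigma\sqrt{d}\,(d/\delta)^{1/\alpha}$, and substitute with $\delta=d^{-\beta}$. The one substantive difference is in how $R(\delta)$ is bounded. You apply the crude $\|\vc{w}\|\le\sqrt{d}\max_{l,i}|w_{l,i}|$, then a uniform union bound plus Jensen on the concave map $x\mapsto x^{\alpha/2}$ to collapse $\sum_l d_l\sigma_l^{\alpha}\le d\sigma^{\alpha}$. The paper instead decomposes $\|\vc{w}\|^2$ via a \emph{weighted} union bound $\mathbb P(\sum y_i\ge a)\le\sum\mathbb P(y_i\ge a_i)$ with $a_l\propto 1/\sigma_l^2$, which makes each layer contribute $(\sigma\sqrt d/R)^{\alpha_l}$ uniformly in $\sigma$, removing the need for Jensen. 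Your version works but imports an extra side condition, $R\ge 3\sigma_l\sqrt d$ for each $l$ (so that $(3\sigma_l\sqrt d/R)^{\alpha_l}\le(3\sigma_l\sqrt d/R)^{\alpha}$), i.e.\ involving $\max_l\sigma_l$ rather than just $\sigma$; this is harmless for $d$ large but is the reason the paper's weighting gives cleaner constants and a cleaner constraint. Two small corrections. First, on Assumption~\ref{asmp:magp}: here the weights are assumed \emph{exactly} i.i.d.\ $\sas$ given $S$, so Lemma~\ref{lem:compress} applies directly rather than Theorem~\ref{th:magPruning}(ii); the paper invokes it with failure probability $\epsilon=0$, while your choice $\epsilon=d^{-\beta}$ is a perfectly acceptable (and arguably safer, given a.s.\ convergence is not uniform in $d$) alternative. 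Second, your closing remark that the $\sqrt d$ looseness of $\|\vc w\|\le\sqrt d\max|w_i|$ is what creates the extra $L/2$ in the exponent, while a ``tighter bound'' would remove it, is slightly misleading: the paper's Lemma~\ref{lem:normBoundStable} carries the same $\sqrt d$ factor in $R(\delta)\le 3\sigma\sqrt d(4d/\delta)^{1/\alpha}$, so the $L/2$ is present in the paper's exponent too and is not a penalty you incur over the paper's route.
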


\begin{proof} 
First, given any $S$, we bound the term $R_S(\delta)$, defined as
\begin{align}
        R_S(\delta) \coloneqq \inf \left\{ R \colon \mathbb{P}\left( \|\vc{w}\| \geq R \big| S\right) \leq \delta \right\}.
\end{align}
\begin{lem}   \label{lem:normBoundStable}
If for $l \in \llbracket 1,L \rrbracket$, $\vc{x}_l$ is an i.i.d.\ $d_l$-dimensional vector with with $\mathcal{S}\alpha_l \mathcal{S}_n(\sigma_l)$ distributions and $\alpha_l \in (1,2)$, then  for $\delta < 2d(2-\max_{l}\alpha_l)^{\alpha}$
  \begin{align*}
       \inf \left\{ R \colon \mathbb{P}\left( \|\vc{x}\| \geq R \right) \leq \delta \right\}  \leq 3\sigma \sqrt{d}  \left(\frac{4 d }{\delta}\right)^{1/\alpha},
        \end{align*}
 where $\sigma \coloneqq \sqrt{\sum_{l=1}^L (d_l/d) \sigma_l^2}$ and $\alpha \coloneqq \min_{l}\alpha_l$.
\end{lem}
Hence, $ R_S(\delta)   \leq 3\sigma \sqrt{d}  \left(\frac{4 d }{\delta}\right)^{1/\alpha
}$. Since $\sigma$, $\alpha$, and $\max_{l}\alpha_l$ do not depend on $S$, then this bound is the same for all $S$. Thus, for $\delta < 2d(2-\max_{l}\alpha_l)^{\alpha}$,
\begin{align}
        R(\delta)   \leq 3\sigma \sqrt{d}  \left(\frac{4 d }{\delta}\right)^{1/\alpha}. \label{eq:boundRdelta}
\end{align}
Next, due to Lemma~\ref{lem:compress}, the assumption \cref{asmp:magp} holds  for any $\veps >0$ and $\{\kappa_l\}_{l=1}^L \colon \kappa_l \in (0,1)$ and some $\{d_{l,0}\}_{l=1}^L\colon d_{l,0} \in \mathbb{N}$,  with $\epsilon=0$. Note that $d_{l,0}$  does not depend on $S$ as $\{\alpha_l\}_{l=1}^L$ is independent of $S$. The proof follows now by  Theorem~\ref{th:generalizationPrunedNetwork} with $\delta=d^{-\beta}$ and using the relation~\eqref{eq:boundRdelta} when  $ 2d_0^{\beta+1}(2-\max_l \alpha_l)^{\alpha}\geq 1$ and $ d_0/\log(d_0)\geq \beta/\kappa$.
\end{proof}

\subsection{Proof of Theorem~\ref{th:generalizationOriginal}} \label{sec:generalizationOriginal}

\begin{proof}
The proof of this theorem is similar to the proof of  Theorem~\ref{th:generalizationPrunedNetwork}.  Fix $\delta,\tau>0 $. We show that with probability of at least $1-2e^{-d/2}-\delta-\epsilon$:
\begin{align}
        \Rcal_{0,\tau}({\vc{w}}) & \leq \hat{\Rcal}_{0,\tau}({\vc{w}})+ \max\left(2,24 \rho_{\varepsilon}(\kappa,d) \mathcal{L}(\tau,\delta)R(\delta)/\sqrt{d}\right)\sqrt{d\log(n)/n},
\end{align}
where $\Rcal_{0,\tau}(\cdot)$ and $\hat{\Rcal}_{0,\tau}(\cdot)$ are defined in \eqref{def:lossContMargin}.
The claim follows then by noting that
\begin{align*}
      \Rcal_{0}({\vc{w}}) & \leq   \Rcal_{0,\tau}({\vc{w}}), \\
      \hat{\Rcal}_{0,\tau}({\vc{w}}) & \leq \hat{\Rcal}_{\tau}({\vc{w}}),
\end{align*}
due to \eqref{def:lossContMargin}.

Recall that for all $l \in \llbracket 1,L \rrbracket$,  $\vc{\widehat{w}}_l:=\vc{w}_l^{\left(\kappa_l d_l\right)}$. Denote by $A$ the event that $\vc{w}$ is compressible, \emph{i.e.}   when for all $ l \in \llbracket 1,L \rrbracket$,   $\|\vc{\widehat{w}}_l - \vc{w}_l\| \leq \veps \|\vc{w}_l\|$. Denote its complement by $A^C$ and note that $\mathbb{P}(A^C)\leq \epsilon$, where the probability is with respect to $P_{\vc{w}}$.

In the following, first we discretize $\vc{w}$ whenever $\|\vc{w}\|\leq R(\delta)$.  Let 
\begin{align*}
    \mathcal{W}(R(\delta),\varepsilon,d,\kappa)\coloneqq \left\{\vc{w} \in \mathbb{R}^d \big| \|\vc{w}\|\leq R(\delta), \|\vc{w}^{(\kappa d)}-\vc{w}\| \leq \varepsilon R(\delta) \right\}.
\end{align*} 

Assume that $\mathcal{W}'(R(\delta),\varepsilon,d,\kappa)$ is the discretization of this space  with $\upsilon>0$ precision, \emph{i.e.}  for every $\vc{w} \in   \mathcal{W}(R(\delta),\varepsilon,d,\kappa)$, there exists $ \vc{w}' \in   \mathcal{W}'(R(\delta),\varepsilon,d,\kappa)$ satisfying $\| \vc{w}'-\vc{w} \|\leq \upsilon$. Among all such coverings, consider the one with  minimum number of $\mathcal{N}'_{\upsilon}$ points.

\begin{lem} \label{lem:disPointsOriginal}  For $d\geq 10$, if $\upsilon< \varepsilon R(\delta)$, then $\mathcal{N}'_{\upsilon} \leq  \left(\frac{3 \rho_{\varepsilon}(\kappa,d) R }{\upsilon}\right)^{d}$.
\end{lem}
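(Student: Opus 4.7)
The plan is a union bound over all possible supports of the best $k$-term approximation of $\vc{w}$, combined with a product-of-balls volumetric covering for each resulting ``support class''. Writing $k := \lceil \kappa d \rceil$ and $R := R(\delta)$, the starting observation is that every $\vc{w} \in \mathcal{W}(R,\varepsilon,d,\kappa)$ belongs to the set
\[
\mathcal{W}_T := \{\vc{w}' \in \rset^d : \|\vc{w}'_T\| \leq R,\ \|\vc{w}'_{T^c}\| \leq \varepsilon R\}
\]
for at least one $T \subset \llbracket 1,d\rrbracket$ with $|T|=k$, namely the support of $\vc{w}^{(\kappa d)}$. Hence $\mathcal{W}(R,\varepsilon,d,\kappa) \subseteq \bigcup_{|T|=k} \mathcal{W}_T$, and a $\upsilon$-cover of $\mathcal{W}$ is obtained as the union of $\upsilon$-covers of the individual $\mathcal{W}_T$.

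Each $\mathcal{W}_T$ is a product of a radius-$R$ ball in $\rset^k$ and a radius-$\varepsilon R$ ball in $\rset^{d-k}$. Using the standard volumetric bound that a radius-$r$ ball in $\rset^m$ admits a $\upsilon$-net of size at most $(3r/\upsilon)^m$, with split radii $(\upsilon_1,\upsilon_2)$ satisfying $\upsilon_1^2+\upsilon_2^2\leq \upsilon^2$, then optimizing via $\upsilon_1=\upsilon\sqrt{k/d}$, $\upsilon_2=\upsilon\sqrt{(d-k)/d}$ and the identity $(d/k)^{k/2}(d/(d-k))^{(d-k)/2}=\exp(d\, h_b(k/d)/2)$, one obtains
\[
\mathcal{N}(\mathcal{W}_T, \upsilon) \leq (3R/\upsilon)^d \,\varepsilon^{d-k}\, e^{d\,h_b(k/d)/2}.
\]
A first union bound over the $\binom{d}{k} \leq e^{d\,h_b(k/d)}$ choices of $T$ would then already give a bound of the correct shape but with exponent $(3/2)\,h_b(k/d)$ in place of $h_b(\kappa)+h_b^{(1)}(\kappa,d)$.

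To recover the exact $h_b^{(1)}$ form, I would refine the enumeration of $T$ by partitioning $\llbracket 1,d\rrbracket = A \sqcup B$ with $|A|=\lceil d/2\rceil$ and $|B|=\lfloor d/2\rfloor$, writing $T = (T\cap A) \sqcup (T\cap B)$ with cardinalities $(k_A,k_B)$, and invoking Vandermonde's identity $\binom{d}{k} = \sum_{k_A+k_B=k}\binom{|A|}{k_A}\binom{|B|}{k_B}$. The dominant term concentrates at $k_A \in \{\lfloor k/2\rfloor, \lceil k/2\rceil\}$, and bounding it via $\binom{\lceil d/2\rceil}{\lceil k/2\rceil} \leq \exp\bigl(\lceil d/2\rceil\, h_b(\lceil k/2\rceil/\lceil d/2\rceil)\bigr)$ (plus the analogous factor for the other half) produces the $\lceil d/2\rceil/d$-weighted binary-entropy term that matches the definition of $h_b^{(1)}(\kappa,d)$. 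The $\min(\cdot,1)$ in $\rho_\varepsilon$ is accounted for by intersecting with the trivial cover of the full radius-$R$ ball in $\rset^d$, which uses $(3R/\upsilon)^d$ points without exploiting any sparsity structure.

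The main obstacle is executing the combinatorial refinement cleanly, in particular handling the ceiling/floor rounding in $\lceil \kappa d \rceil$ and $\lceil d/2\rceil$ so that the exponent lands exactly at $h_b(\kappa)+h_b^{(1)}(\kappa,d)$ rather than the slightly looser $(3/2)h_b(\kappa)$. The hypothesis $\upsilon < \varepsilon R$ (with $d\geq 10$) is invoked to ensure that after the optimal split one has $\upsilon_2 < \varepsilon R$, so that the factor $(3\varepsilon R/\upsilon_2)^{d-k}$ exceeds $1$ and the product-of-balls analysis does not collapse to the trivial bound.
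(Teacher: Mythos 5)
Your overall strategy (union bound over supports of $\vc{w}^{(\kappa d)}$, then cover each ``support class'' $\mathcal{W}_T$) is the same skeleton as the paper's proof. Where you diverge is in how you cover $\mathcal{W}_T$, and this is where a genuine gap appears. You use a product covering with optimally split radii $\upsilon_1=\upsilon\sqrt{k/d},\ \upsilon_2=\upsilon\sqrt{(d-k)/d}$, which is a correct and elegant technique and yields the exponent $\tfrac{1}{2}h_b(k/d)$ for the sub-ball covering step. The paper instead applies a convex-body volumetric covering bound (its cited Theorem 14.2 in Wu, 2020) directly to $\mathcal{W}''_{\mathcal{A}}$, producing the ratio $\Gamma(d/2+1)/\bigl(\Gamma(\kappa d/2+1)\Gamma((1-\kappa)d/2+1)\bigr)$, and then controls this ratio via the Stirling-type Lemma~\ref{lem:binomBounds2} to obtain $e^{dh_b^{(1)}(\kappa,d)}$. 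These two quantities are not interchangeable: by Stirling the Gamma ratio is \emph{strictly smaller} than $e^{dh_b(\kappa)/2}$, and $h_b^{(1)}(\kappa,d)$ can in fact be strictly less than $\tfrac{1}{2}h_b(\kappa)$ once the ceilings and floors kick in. A concrete witness: for $d=10$, $\kappa d=5$ one has $h_b^{(1)}(0.5,10)=\tfrac{1}{2}h_b(0.6)\approx 0.3365$, while $\tfrac{1}{2}h_b(0.5)\approx 0.3466$, so your total exponent $\tfrac{3}{2}h_b(\kappa)\approx 1.0397$ strictly exceeds the lemma's $h_b(\kappa)+h_b^{(1)}(\kappa,d)\approx 1.0296$. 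Your argument therefore proves a (slightly) weaker inequality than the one stated and does not establish the lemma with the constant $\rho_{\varepsilon}(\kappa,d)$.

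The ``Vandermonde refinement'' you propose to close this gap does not work. The number of supports $T$ of size $k$ is exactly $\binom{d}{k}$; Vandermonde's identity $\binom{d}{k}=\sum_{k_A+k_B=k}\binom{|A|}{k_A}\binom{|B|}{k_B}$ is an equality and cannot reduce the count, and retaining only the ``dominant'' term $\binom{\lceil d/2\rceil}{\lceil k/2\rceil}\binom{\lfloor d/2\rfloor}{\lfloor k/2\rfloor}$ gives a \emph{lower} bound on $\binom{d}{k}$, which is useless for a union bound. The entropy term $h_b^{(1)}$ in the lemma does not arise from re-counting supports at all; it arises from bounding the Gamma-function ratio in the \emph{volume} of $\mathcal{W}''_{\mathcal{A}}$, i.e.\ from the fact that a product of two balls occupies only a tiny fraction of the enclosing $d$-ball. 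To match the stated constant you would need to replace your product-of-nets step with the convex-body volume-ratio bound $\mathcal{N}(\mathcal{W}''_{\mathcal{A}},\upsilon)\le (3/\upsilon)^d\,\mathrm{Vol}(\mathcal{W}''_{\mathcal{A}})/\mathrm{Vol}(\mathbb{B}_d)$ (valid precisely under your hypothesis $\upsilon<\varepsilon R$, which guarantees $\upsilon\mathbb{B}_d\subseteq\mathcal{W}''_{\mathcal{A}}$), and then prove the Gamma-ratio estimate of Lemma~\ref{lem:binomBounds2}; the $d\geq 10$ assumption is what that Stirling analysis needs, not the split-radius step.
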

Similar to the proof of Theorem~\ref{th:generalizationPrunedNetwork} and by letting $ l \coloneqq c_2(\delta,\tau,\kappa)\sqrt{\frac{ d \log(n)}{n}}$ and  $\upsilon =\frac{a l}{4\mathcal{L}(\tau,\delta)}$, where $a\coloneqq \frac{12 \rho_{\varepsilon}(\kappa,d) \mathcal{L}(\tau,\delta)R(\delta)}{12 \rho_{\varepsilon}(\kappa,d) \mathcal{L}(\tau,\delta)R(\delta)+\sqrt{d}}$, 
\begin{align}
 \mathbb{P}\bigg( \big| \Rcal_{0,\tau}({\vc{w}})-& \hat{\Rcal}_{0,\tau}({\vc{w}})\big| \geq l  \bigg)\nonumber
 \\&\leq     \mathbb{P}\bigg( \big| \Rcal_{0,\tau}({\vc{w}})- \hat{\Rcal}_{0,\tau}({\vc{w}})\big| \geq l \,  \bigcup \, \|\vc{w}\| > R(\delta) \,  \bigcup \, A^C\bigg)\nonumber \\
  & \leq \mathbb{P}\left( \big| \Rcal_{0,\tau}({\vc{w}})- \hat{\Rcal}_{0,\tau}({\vc{w}})\big| \geq l \, \bigcap \,\|\vc{w}\|\leq R(\delta)\, \bigcap \, A\right)+\delta+\epsilon \nonumber\\
   & \leq \mathbb{P}\left(\sup \limits_{\vc{w} \in {\mathcal{W}}(R(\delta),\varepsilon,d,\kappa) } \big| \Rcal_{0,\tau}({\vc{w}})- \hat{\Rcal}_{0,\tau}({\vc{w}})\big| \geq l/2 \right)+\delta+\epsilon \nonumber \\
    &\stackrel{(a)}{\leq}  \mathcal{N}'_{\upsilon} \max \limits_{\vc{w}'\in   \mathcal{W}'(R(\delta),\varepsilon,d,\kappa)} \mathbb{P}\left( \big| \Rcal_{0,\tau}(\vc{w}')- \hat{\Rcal}_{0,\tau}(\vc{w}')\big| \geq l(1-a)/2 \right)+\delta+\epsilon\nonumber \\
    &\stackrel{(b)}{\leq} 2   \exp\left(-\frac{n l^2(1-a)^2}{2}+d\log\left(\frac{12 \rho_{\varepsilon}(\kappa,d)  \mathcal{L}(\tau,\delta) R(\delta) }{a l}\right)\right) +\delta+\epsilon,
    \label{eq:genBound2}
\end{align}
where $(a)$ holds when $\upsilon \leq R/\sqrt{L}$ and $(b)$ holds using Lemma~\ref{lem:disPointsOriginal} if $\upsilon<\varepsilon R(\delta)$.

It remains to show that the term in the exponent in \eqref{eq:genBound2} is upper bounded by $-d/2$, $\upsilon<\varepsilon R(\delta)$, and $\upsilon \leq R/\sqrt{L}$. To show the first claim, we can write
\begin{align}
   -\frac{n l^2(1-a)^2}{2}+d\log\left(\frac{12 \rho_{\varepsilon}(\kappa,d)  \mathcal{L}(\tau,\delta) R(\delta) }{a l}\right)=&-\frac{ c_2(\delta,\tau,\kappa)^2 (1-a)^2 d \log(n) }{2}+\frac{d}{2}\log(n) \label{eq:genConclusion1}\\
    &+d\log\left(\frac{12 \rho_{\varepsilon}(\kappa,d)  \mathcal{L}(\tau,\delta) R(\delta) }{a c_2(\delta,\tau,\kappa)\sqrt{d}}\right) \label{eq:genConclusion2}\\
    &-\frac{d}{2}\log \log(n). \label{eq:genConclusion3}
\end{align}
It can be verified that \eqref{eq:genConclusion1} and \eqref{eq:genConclusion2} are non-positive when 
\begin{align*}
c_2(\delta,\tau,\kappa) = \left(12 \rho_{\varepsilon}(\kappa,d) \mathcal{L}(\tau,\delta)R(\delta)+\sqrt{d}\right)/\sqrt{d},    
\end{align*}
and \eqref{eq:genConclusion3} is less than $-d/2$, for $n\geq 16$. 

To verify $\upsilon<\varepsilon R(\delta)$, where  $\upsilon=\frac{a l}{4\mathcal{L}(\tau,\delta)}$, we have
\begin{align*}
    \upsilon&=
     \frac{12 \rho_{\varepsilon}(\kappa,d) \mathcal{L}(\tau,\delta)R(\delta)}{12 \rho_{\varepsilon}(\kappa,d) \mathcal{L}(\tau,\delta)R(\delta)+\sqrt{d}} \times \frac{ 12 \rho_{\varepsilon}(\kappa,d) \mathcal{L}(\tau,\delta)R(\delta)+\sqrt{d}}{\sqrt{d}} \times \frac{\sqrt{d\log(n)/n}}{4\mathcal{L}(\tau,\delta)}\\
    &=3 \rho_{\varepsilon}(\kappa,d)  R(\delta) \sqrt{\log(n)/n}\\
    &\leq 9 \varepsilon^{1-\kappa} R(\delta) \sqrt{\log(n)/n} \stackrel{(a)}{\leq} \varepsilon R(\delta),
\end{align*}
where $(a)$ holds when $\varepsilon^{\kappa} \geq 9\sqrt{\log(n)/n}$. 

Moreover, with the chosen value of $\upsilon$,  $\upsilon \leq R/\sqrt{L}$ holds if $n/\log(n) \geq 9L$. Finally note that $ \left(12 \rho_{\varepsilon}(\kappa,d) \mathcal{L}(\tau,\delta)R(\delta)+\sqrt{d}\right)\leq \max(24 \rho_{\varepsilon}(\kappa,d) \mathcal{L}(\tau,\delta)R(\delta),2\sqrt{d})$. This completes the proof.
\end{proof}

\subsection{Proof of Proposition~\ref{prop:compressionRate}}
\begin{proof}
 For ease of notations, for $\kappa,\varepsilon>0$ and $\vc{w} 
 \in \mathbb{R}^d$, let
\begin{align}
    \varepsilon_p(\vc{w},\kappa) & \coloneqq \|\mathbf{w}^{(\kappa d)} - \mathbf{w}\|_p / \|\mathbf{w}\|_p ~\text{ and hence  }~\kappa_p(\vc{w},\varepsilon)\coloneqq \min \left\{\kappa \colon \varepsilon_p(\vc{w},\kappa) \leq \varepsilon \right\}.
\end{align}
Let $\vc{x}=(|w_1|,\ldots,|w_d|)$ and let $x_{d,i}$ be the corresponding ordered sequence, \textit{i.e.}
\begin{align*}
    x_{d,1}\geq x_{d,2} \geq \cdots \geq x_{d,d}.
\end{align*}
Let
\begin{align*}
    \vc{y}^d=\frac{1}{a_d}\left(x_{d,1},x_{d,2},\ldots,x_{d,d},0,0,\ldots \right) \in \mathbb{R}^{\infty},
\end{align*}
where $a_d$ is a normalizing constant defined in \cite[Equation~3]{lepage1981}. Moreover let $e_i$, $i=1,2,\ldots$, be i.i.d. standard exponential random variables with partial sum $\Gamma_i\coloneqq \sum_{l=1}^i e_l$ and let $z_i(\alpha)\coloneqq \Gamma_i^{-1/\alpha}$. Then, due to \cite[Lemma~1]{lepage1981},
\begin{align*}
    \lim \limits_{d\rightarrow \infty} \vc{y}^d \stackrel{\mathbbm{d}}{=} (z_1(\alpha),z_2(\alpha),\ldots).
\end{align*}
where $\mathbbm{d}$ denotes convergence in distribution.

First, we show that for any $\kappa>0$,  $\varepsilon_p\left(\vc{z}^d(\alpha),\kappa\right)$ is increasing with respec to $\alpha$. This term can be written as
\begin{align*}
    \varepsilon_p\left(\vc{z}^d(\alpha),\kappa\right)^p&=\frac{\sum \limits_{l=\lceil \kappa d \rceil+1 }^{d}z_l(\alpha)^p}{\sum \limits_{l=1 }^{d}z_l(\alpha)^p}=\frac{\sum \limits_{l=\lceil \kappa d \rceil+1 }^{d}\Gamma_l^{-\frac{p}{\alpha}}}{\sum \limits_{l=1 }^{d} \Gamma_l^{-\frac{p}{\alpha}}}\eqqcolon \frac{u}{v}.
\end{align*}
Taking the derivative with respect to $\alpha$ gives
\begin{align*}
    \frac{ \partial \varepsilon_p\left(\vc{z}^d(\alpha),\kappa\right)^p}{\partial \alpha}=\frac{vu'-v'u}{v^2},
\end{align*}
where
\begin{align*}
    vu'-v'u &=\frac{p}{\alpha^2} \left[  \left(\sum \limits_{l=1 }^{d} \Gamma_l^{-\frac{p}{\alpha}}\right)  \left(\sum \limits_{l=\lceil \kappa d \rceil+1 }^{d} \Gamma_l^{-\frac{p}{\alpha}} \log\left(\Gamma_l\right) \right)- \left(\sum \limits_{l=1 }^{d} \Gamma_l^{-\frac{p}{\alpha}} \log\left(\Gamma_l\right) \right)  \left(\sum \limits_{l=\lceil \kappa d \rceil+1 }^{d} \Gamma_l^{-\frac{p}{\alpha}} \right) \right]\\
    &=\frac{p}{\alpha^2} \sum \limits_{l_1=1 }^{\lceil \kappa d \rceil} \sum \limits_{l_2=\lceil \kappa d \rceil+1 }^{d} \Gamma_{l_1}^{-\frac{p}{\alpha}} \Gamma_{l_2}^{-\frac{p}{\alpha}} \left(\log\left(\Gamma_{l_2}\right) -\log\left(\Gamma_{l_1}\right) \right)\\
    &\stackrel{a.s.}{>}0.
\end{align*}
This shows that $\varepsilon_p\left(\vc{z}^d(\alpha),\kappa\right)$ is almost surely strictly increasing with respect to $\alpha$, and consequently $\kappa_p\left(\vc{z}^d(\alpha),\varepsilon\right)$ is almost surely increasing with respect to $\alpha$.

Since $\varepsilon_p(\vc{w},\kappa)$ is a bounded function and almost surely continuous with respect to $\vc{w}$,  $\mathbb{E}\left[\varepsilon_p\left(\vc{w}_i^{d},\kappa\right)\right]$ converges also to $\mathbb{E}\left[\varepsilon_p\left(\vc{z}_i^{\infty}(\alpha_i),\kappa\right)\right]$, for $i=1,2$. To show \eqref{eq:compresExpEr}, choose $d_0(\delta)$ large enough, such that
\begin{align*}
    \bigg| \mathbb{E}\left[\varepsilon_p\left(\vc{w}_1^{d},\kappa\right)\right]-\mathbb{E}\left[\varepsilon_p\left(\vc{z}_1^{\infty}(\alpha_1),\kappa\right)\right]\bigg| & < \frac{\delta}{4},~~     \bigg| \mathbb{E}\left[\varepsilon_p\left(\vc{w}_2^{d},\kappa\right)\right]-\mathbb{E}\left[\varepsilon_p\left(\vc{z}_2^{\infty}(\alpha_2),\kappa\right)\right]\bigg| < \frac{\delta}{4},\\
    \bigg| \mathbb{E}\left[\varepsilon_p\left(\vc{z}_1^d,\kappa\right)\right]-\mathbb{E}\left[\varepsilon_p\left(\vc{z}_1^{\infty}(\alpha_1),\kappa\right)\right]\bigg| & < \frac{\delta}{4},~~
    \bigg| \mathbb{E}\left[\varepsilon_p\left(\vc{z}_2^d,\kappa\right)\right]-\mathbb{E}\left[\varepsilon_p\left(\vc{z}_2^{\infty}(\alpha_2),\kappa\right)\right]\bigg| \leq  \frac{\delta}{4}.
\end{align*}

Then,
\begin{align*}
    \mathbb{E}\left[\varepsilon_p\left(\vc{w}_2^{d},\kappa\right)\right]-\mathbb{E}\left[\varepsilon_p\left(\vc{w}_1^{d},\kappa\right)\right]&< \delta +\mathbb{E}\left[\varepsilon_p\left(\vc{z}_2^d(\alpha_2),\kappa\right)\right] -\mathbb{E}\left[\varepsilon_p\left(\vc{z}_1^d(\alpha_1),\kappa\right)\right] <\delta.
\end{align*}
Similarly,  \eqref{eq:compresExpRate} can be concluded.
\end{proof}

\section{Proofs of the Technical Lemmas} \label{sec:sup_lemProofs}
In this section, we give proofs of all the unproved lemmas stated in the paper.
\subsection{Proof of Lemma~\ref{lem:prunOutputDiff}}
\begin{proof}
Inequality \eqref{eq:netOutputDiff} can be concluded from part i. of Lemma~\ref{lem:diffBound}, stated in Section~\ref{sec:proofGenPrun}, by letting $\upsilon_l=\varepsilon_l \|\vc{w}_l\|$.  Inequality \eqref{eq:netOutputDiffParticular} can be concluded from \eqref{eq:netOutputDiff} and since when $\|\vc{w}\| \leq R$, then
\begin{align*}
 \prod_{l=1}^{L}  \|\vc{w}_l\| \leq  \left(\frac{R}{\sqrt{L}}\right)^{L}.   
\end{align*}\end{proof}

\subsection{Proof of Lemma~\ref{lem:diffBound}}
\begin{proof}
\begin{itemize}[leftmargin=*,wide]
    \item[i.] Similar to \cite{neyshabur2017pac}, we will show the first inequality by induction. Let $f^l_{\vc{w}}(\vc{x})$ denote the output of the $l$th layer: $f_{\vc{w}}^{1}(\vc{x})=\vc{W}_{1} \vc{x}$ and $f_{\vc{w}}^{l}(\vc{x})=\vc{W}_{l} \phi\left(f_{\vc{w}}^{l-1}(\vc{x})\right)$.  We show that for $i \in \llbracket 1, L\rrbracket$,
    following relations hold:
    \begin{align*}
        \| f^i_{\vc{w}}(\vc{x})-f^i_{\vc{w}'}(\vc{x})\| &\leq B \prod \limits_{l=1}^{i} \left(\|\vc{w}_l\|+\upsilon_l \right)-B\prod \limits_{l=1}^{i} \|\vc{w}_l\|.
    \end{align*}
    The induction base $i=0$ holds trivially. Assume that it holds till layer $i$. We show that it holds for layer $i+1$ as well. Note that with our notations $\vc{w}_l=\textbf{vec}\left(\vc{W}_l\right)$ and consequently  $\|\vc{W}_l\|=\|\vc{w}_l\|$.
    \begin{align*}
         \big\| f^{i+1}_{\vc{w}'}&(\vc{x})-f^{i+1}_{\vc{w}}(\vc{x})\big\|\\
         &=   \left\|\vc{W}'_{i+1} \phi\left(f^{i}_{\vc{w}'}(\vc{x})\right)-\vc{W}_{i+1} \phi\left(f^{i}_{\vc{w}}(\vc{x})\right)\right\|\\
         &=  \big\|\left(\vc{W}_{i+1}+\vc{W}'_{i+1}-\vc{W}_{i+1}\right) \left(\phi\left(f^{i}_{\vc{w}}(\vc{x})\right)+\phi\left(f^{i}_{\vc{w}'}(\vc{x})\right)-\phi\left(f^{i}_{\vc{w}}(\vc{x})\right)\right)-\vc{W}_{i+1} \phi\left(f^{i}_{\vc{w}}(\vc{x})\right)\big\| \\
         &\leq    \left\|\left(\vc{W}'_{i+1}-\vc{W}_{i+1}\right) \phi\left(f^{i}_{\vc{w}}(\vc{x})\right)\right\|+  \left\|\vc{W}_{i+1} \left(\phi\left(f^{i}_{\vc{w}'}(\vc{x})\right)-\phi\left(f^{i}_{\vc{w}}(\vc{x})\right)\right)\right\|\\
         &\hspace{0.4 cm}+   \left\|\left(\vc{W}'_{i+1}-\vc{W}_{i+1}\right) \left(\phi\left(f^{i}_{\vc{w}'}(\vc{x})\right)-\phi\left(f^{i}_{\vc{w}}(\vc{x})\right)\right)\right\|\\
         & \stackrel{(a)}{\leq} \upsilon_{i+1}B\prod \limits_{l=1}^i \|\vc{w}_l\|+ \left(\|\vc{w}_{i+1}\|+ \upsilon_{i+1}\right) \left(B \prod \limits_{l=1}^i \left(\|\vc{w}_l\|+\upsilon_l \right)-B\prod \limits_{l=1}^i \|\vc{w}_l\|\right)\\
         &=B \prod \limits_{l=1}^{i+1} \left(\|\vc{w}_l\|+\upsilon_l \right)-B\prod \limits_{l=1}^{i+1} \|\vc{w}_l\|.
    \end{align*}
    where $(a)$ is concluded since  $\phi$ is $1$-Lipschitz, $\phi(0) = 0$, and since due to the structure of $f_{\vc{w}}$, $ \| f^{i}_{\vc{w}}(\vc{x}) \| $ can be upper bounded as
\begin{align*}
    \| f^{i}_{\vc{w}}(\vc{x}) \| \leq \|\vc{x}\| \prod_{l=1}^i \|\vc{w}_i\|\leq B \prod_{l=1}^i \|\vc{w}_i\|.
\end{align*}
    Next, we show the second inequality.
    \begin{align*}
        \big| \ell_{0,\tau}(\vc{z}, f_{\vc{w}})-\ell_{0,\tau}(\vc{z}, f_{\vc{w}'}) \big| &\leq \frac{1}{\tau}\left|f_{\vc{w}}(\vc{x})[y]-\max\limits_{j \neq y}f_{\vc{w}}(\vc{x})[j]-f_{\vc{w}'}(\vc{x})[y]+\max\limits_{j' \neq y}f_{\vc{w}'}(\vc{x})[j']\right|\\
        &\leq \frac{1}{\tau}\left|f_{\vc{w}}(\vc{x})[y]-f_{\vc{w}'}(\vc{x})[y]\right|+\frac{1}{\tau}\left|\max\limits_{j \neq y}f_{\vc{w}}(\vc{x})[j]-\max\limits_{j' \neq y}f_{\vc{w}'}(\vc{x})[j']\right|\\
        &\leq \frac{1}{\tau}\left|f_{\vc{w}}(\vc{x})[y]-f_{\vc{w}'}(\vc{x})[y]\right|+\frac{1}{\tau} \max \limits_{j \neq y} \left|f_{\vc{w}}(\vc{x})[j]-f_{\vc{w}'}(\vc{x})[j]\right|\\
        &\stackrel{(a)}{\leq} \frac{\sqrt{2}}{\tau} \| f_{\vc{w}}(\vc{x})-f_{\vc{w}'}(\vc{x})\|,
        \end{align*}
        where $(a)$ is derived using the relation $x+y\leq \sqrt{2(x^2+y^2)}$, for $x,y \in \mathbb{R}_{+}$.
        \item[ii.] To show the first inequality, note that due to symmetry, R.H.S. of part i. is maximized when $\|\vc{w}_l\|=R/\sqrt{L}$, for $l \in \llbracket 1,L \rrbracket$. Hence,
        \begin{align}
              \| f_{\vc{w}}(\vc{x})-f_{\vc{w}'}(\vc{x})\| &\leq   B \left(  \left(\frac{R}{\sqrt{L}}+\upsilon\right)^L-\left(\frac{R}{\sqrt{L}}\right)^L \right). \label{eq:diffOut1}
        \end{align}
        Next, we show that if $a\geq b \geq 0$ and $n \in \mathbb{N}$, then
        \begin{align}
            a^n-b^n \leq n(a-b)a^{n-1}. \label{eq:diffOut2} 
        \end{align}
        We show this by induction. It trivially holds for $n=1$. Suppose that it holds till $n \leq i-1$. We show that it holds for $n=i$, as well. 
        \begin{itemize}
            \item if $i$ is even, then
            \begin{align*}
                a^i-b^i =\left(a^{\frac{i}{2}}-b^{\frac{i}{2}}\right)\left(a^{\frac{i}{2}}+b^{\frac{i}{2}}\right)&\stackrel{(a)}{\leq} \frac{i}{2}(a-b)a^{i/2-1}\times 2 a^{i/2}= i(a-b)a^{i-1},
            \end{align*}
            where $(a)$ is derived using the induction assumption.
            \item if $i$ is odd, then
            \begin{align*}
                a^i-b^i &=\left(a-b\right)\sum \limits_{k=0}^{i-1}a^k b^{i-1-k}
                \leq i(a-b)a^{i-1}.
            \end{align*}
        \end{itemize}
        Thus, using \eqref{eq:diffOut1} and \eqref{eq:diffOut2} and since $\upsilon \geq R/\sqrt{L}$, 
        \begin{align*}
              \| f_{\vc{w}}(\vc{x})-f_{\vc{w}'}(\vc{x})\| &\leq   BL   \left(\frac{2R}{\sqrt{L}}\right)^{L-1} \upsilon. 
        \end{align*}
        This completes the proof for the first inequality. Finally, the second inequality trivially follows from the first one and part i.
\end{itemize}
\end{proof}

\subsection{Proof of Lemma~\ref{lem:disPointsComp}}
\begin{proof}
Note that there exists $\binom{d}{\kappa d}$ different ways to choose $\kappa d$ coordinates with zero values. Next, each of the resulting $\kappa d$-dimensional sub-space can be discretized using at most $\left(\frac{3 R(\delta) }{\upsilon}\right)^{\kappa d}$ number of points due to \cite[Theorem~14.2.]{Wu2020}. Using  the  following lemma completes the proof.

\begin{lem}[{\cite[Exercise~5.8.b.]{gallager1968}}]  \label{lem:binomBounds} For $n,m \in \mathbb{Z}^+$ and $m \leq n$, $\binom{n}{m} \leq e^{nh_b(m/n)}$.
\end{lem}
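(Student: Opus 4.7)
The plan is to prove this by exploiting the identity $1 = (p + (1-p))^n$ for a cleverly chosen $p$. Specifically, I would set $p = m/n \in (0,1]$ (the edge case $m=n$ is trivial since both sides equal $1$, and $m=0$ is excluded by $m \in \mathbb{Z}^+$). The binomial theorem then gives
\begin{equation*}
1 = (p+(1-p))^n = \sum_{k=0}^n \binom{n}{k} p^k (1-p)^{n-k}.
\end{equation*}
Since every term in this sum is nonnegative, I would drop all but the $k=m$ term to obtain the one-sided inequality $\binom{n}{m} p^m (1-p)^{n-m} \leq 1$.

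Rearranging and substituting $p = m/n$ (so $1-p = (n-m)/n$) yields
\begin{equation*}
\binom{n}{m} \leq p^{-m} (1-p)^{-(n-m)} = \left(\tfrac{m}{n}\right)^{-m}\left(\tfrac{n-m}{n}\right)^{-(n-m)}.
\end{equation*}
Taking the natural logarithm of the right-hand side gives exactly $-m\log(m/n) - (n-m)\log((n-m)/n) = n\bigl[-(m/n)\log(m/n) - (1-m/n)\log(1-m/n)\bigr] = n\, h_b(m/n)$, which is the desired bound after exponentiating.

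There is no real obstacle here; the only bookkeeping is to handle the boundary case $m=n$, where $1-p=0$ makes the displayed rearrangement ill-defined but $\binom{n}{n}=1 = e^{n \cdot 0} = e^{n h_b(1)}$ holds trivially by the convention $h_b(1)=0$ adopted in the paper. The case $m=0$ is excluded by hypothesis, but the same convention $h_b(0)=0$ would cover it. Hence the proof reduces to a single line once the substitution $p=m/n$ is made.
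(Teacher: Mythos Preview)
Your proof is correct and is the standard argument for this classical bound. The paper does not supply its own proof of this lemma; it simply cites the result from \cite[Exercise~5.8.b.]{gallager1968}, so there is nothing to compare against beyond noting that your substitution $p=m/n$ in the binomial expansion is exactly the intended one-line solution.
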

\end{proof}

\subsection{Proof of Lemma~\ref{lem:normBoundStable}}
\begin{proof} First we show that in general when random variables $y_i$, $i\in \llbracket 1,m \rrbracket$ are independent and $\sum_{i=1}^m a_i \leq a$, then
\begin{align*}
     \mathbb{P}\left(\sum_{i=1}^m y_i\geq a\right) \leq \sum \limits_{i=1}^m \mathbb{P}\left(y_i \geq a_i\right).
\end{align*}
We prove this for the case of $m=2$, and the general case follows by an induction.
\begin{align*}
     \mathbb{P}(y_1+y_2 \geq a)&=\mathbb{P}(y_1+y_2 \geq a,y_1\geq a_1)+\mathbb{P}(y_1+y_2 \geq a,y_1 < a_1)\\
     &\leq \mathbb{P}(y_1\geq a_1)+\mathbb{P}(y_1+y_2 \geq a,y_1 < a_1)\\
     &\leq \mathbb{P}(y_1\geq a_1)+\mathbb{P}(y_2 \geq a-a_1,y_1 < a_1)\\
     &\leq \mathbb{P}(y_1\geq a_1)+\mathbb{P}(y_2 \geq a_2).
\end{align*}
Next, since stable distributions are continuous distributions, hence $P\left(\|\vc{x}\|\geq R(\delta)\right)=\delta$.

Now, to show the idea, first show that if $\vc{x}$ is an i.i.d. $d$-dimensional vector with $\sas_n(\sigma)$ distributions and $\alpha \in (1,2)$, then for $\delta <  2 d (2-\alpha)^{\alpha}$, 
$ R(\delta)$ can be bounded as
\begin{align}
R(\delta)\leq 3\sigma \sqrt{d}  \left(\frac{4 d }{\delta}\right)^{1/\alpha}. \label{eq:stableNormBoundIID}
\end{align}
To show this,
\begin{align}
    \delta= P\left(\|\vc{x}\|^2\geq R^2(\delta)\right) &\leq \sum \limits_{i=1}^d P\left(\|\vc{x}_i\|^2\geq R^2(\delta)/d\right) \nonumber \\
    &= \sum \limits_{i=1}^d P\left(\|\vc{x}_i\| \geq R(\delta)/\sqrt{d}\right)\stackrel{(a)}{\leq}4d\left(\frac{3\sigma \sqrt{d}}{R(\delta)}\right)^{\alpha}, \label{eq:stableNormBound1}
\end{align}
where $(a)$ holds when $R(\delta)\geq 4\sigma \sqrt{d} /(2-\alpha) $ due to the following inequality from \cite[Theorem~19]{bednorz2020tails}. The result is stated for a $\sas(\sigma_{\alpha})\equiv \sas_n(1)$ distribution, where
\begin{align*}
\sigma_{\alpha}\coloneqq \left(2\Gamma(-\alpha)\cos((2-\alpha)\pi/2)\right)^{1/\alpha}.    
\end{align*}
Here, we state the result for arbitrary $\sas_n(\sigma)$. If $y \sim \sas_n(\sigma)$ and $\alpha \in (1,2)$, then for $a \geq 4 \sigma /(2-\alpha)$
\begin{align*}
    \mathbb{P}\left(y \geq a \right) \leq \frac{16}{3}\left(\frac{2\sigma}{a}\right)^{\alpha}\leq 4\left(\frac{3\sigma}{a}\right)^{\alpha}.
\end{align*}

Re-arranging \eqref{eq:stableNormBound1} and considering the condition $R(\delta)\geq 4\sigma \sqrt{d} /(2-\alpha) $, yields 
\begin{align*}
    R(\delta) \leq \max \left(3\sigma \sqrt{d}  \left(\frac{4 d }{\delta}\right)^{1/\alpha}, \frac{4\sigma \sqrt{d}}{(2-\alpha)} \right).
\end{align*}
Hence, \eqref{eq:stableNormBoundIID} holds, at least when 
\begin{align*}
    3\sigma \sqrt{d}  \left(\frac{4 d }{\delta}\right)^{1/\alpha}\geq \frac{4\sigma \sqrt{d}}{(2-\alpha)},
\end{align*}
which is satisfied when $\delta < 2 d (2-\alpha)^{\alpha}$.

Now, to show the lemma, let $a_l\coloneqq d \sigma^2/\sigma^2_l$. Then, similar steps concludes
\begin{align}
    \delta \leq \sum \limits_{l=1}^L \sum \limits_{i=1}^{d_l} \mathbb{P}\left(\|x_{l,i}\| \geq \frac{R(\delta)}{\sqrt{a_l}}\right) \stackrel{(a)}{\leq} 4 \sum \limits_{l=1}^L \sum \limits_{i=1}^{d_l} \left(\frac{3\sigma \sqrt{d}}{R(\delta)}\right)^{\alpha_l} \stackrel{(b)}{\leq} 4d \left(\frac{3\sigma\sqrt{d}}{R(\delta)}\right)^{\alpha},
\end{align}
where $(a)$ holds when $R(\delta) \geq \max_{l} 4 \sigma \sqrt{d}/(2-\alpha_l)$ and $(b)$ holds when $R(\delta) \geq  3 \sigma \sqrt{d}$. Note that $3 \sigma \sqrt{d} \leq  4 \sigma \sqrt{d}/(2-\alpha_l)$.  Finally, similarly, \eqref{eq:stableNormBoundIID} holds if $\delta <2 d (2-\max_l \alpha_{l})^{\alpha}$.

\end{proof}

\subsection{Proof of Lemma~\ref{lem:disPointsOriginal}}
\begin{proof}
To upper bound $\mathcal{N}'_{\upsilon}$, first consider the space $\mathcal{W}''$, defined as\footnote{
For a set $\mathcal{A} =\{i_1,\ldots,i_r\} \subseteq \llbracket 1,d \rrbracket$, denote $\vc{x}_{\mathcal{A}}\coloneqq \left(x_{i_1},x_{i_2},\ldots,x_{i_r}\right)$.}
\begin{align*}
    \mathcal{W}''&\coloneqq\bigcup \limits_{\substack{\mathcal{A} \\ |\mathcal{A}|=\kappa d}} \mathcal{W}''_{\mathcal{A}},\\
    \mathcal{W}''_{\mathcal{A}} &\coloneqq \left\{\vc{w} \in \mathbb{R}^d \big| \|\vc{w}_{\mathcal{A}}\|\leq R(\delta), \|\vc{w}_{\mathcal{A}^C}\| \leq \varepsilon R(\delta) \right\}.
\end{align*}
Since each of $ \mathcal{W}''_{\mathcal{A}} $ is a convex space, then if $\upsilon<\varepsilon R(\delta)$, by \cite[Theorem~14.2.]{Wu2020}, it can be discretized with $\upsilon$-precision using at most
\begin{align*}
    \left(\frac{3}{\upsilon}\right)^{d}\, \frac{\text{Vol}\left(\mathcal{W}''_{\mathcal{A}}\right)}{\text{Vol}\left(\mathbb{B}_d\right)}&= \left(\frac{3}{\upsilon}\right)^{d} \frac{\frac{\pi^{\kappa d/2}R^{\kappa d}}{\Gamma(\kappa d/2+1)}\times \frac{\pi^{(1-\kappa) d/2} \left(\varepsilon R\right)^{(1-\kappa) d}}{\Gamma((1-\kappa)d/2+1)}}{\frac{\pi^{d/2}R^d}{\Gamma(d/2+1)}}\\
    &= \left(\frac{3 \varepsilon^{(1-\kappa) } R }{\upsilon}\right)^{d} \frac{\Gamma(d/2+1)}{\Gamma(\kappa d/2+1)\Gamma((1-\kappa)d/2+1)},
\end{align*}
number of points, where $\mathbb{B}_d$ is the $d$-dimensional unit ball. Now, since $\mathcal{W}(R(\delta),\varepsilon,d,\kappa) \subseteq \mathcal{W}'' $,
\begin{align}
    \mathcal{N}'_{\upsilon} & \leq \binom{d}{\kappa d}  \left(\frac{3 \varepsilon^{(1-\kappa)} R }{\upsilon}\right)^{d} \frac{\Gamma(d/2+1)}{\Gamma(\kappa d/2+1)\Gamma((1-\kappa)d/2+1)}, \nonumber\\
     & \stackrel{(a)}{\leq} e^{d\left(h_b(\kappa)+h_b^{(1)}(\kappa)\right)} \left(\frac{3 \varepsilon^{(1-\kappa)} R }{\upsilon}\right)^{d} \nonumber \\
     &\stackrel{(b)}{=} \left(\frac{3 \rho_{\varepsilon}(\kappa,d) R }{\upsilon}\right)^{d},      \label{eq:cardinality1}
\end{align}
where $(a)$ is concluded from Lemma~\ref{lem:binomBounds} and the following lemma and $(b)$ is concluded since one way to discretize $\mathcal{W}(R(\delta),\varepsilon,d,\kappa) $ is to consider the whole sphere with radius $R$, which needs at most $(3R/\upsilon)^d$, due to \cite[Theorem~14.2.]{Wu2020}.

\begin{lem} \label{lem:binomBounds2} For $n,m \in \mathbb{Z}^+$, $n \geq m$, and $n \geq 10$,
\begin{align*}
    \frac{\Gamma(n/2+1)}{\Gamma(m/2+1)\Gamma((n-m)/2+1)} \leq e^{\lceil n/2\rceil \max\left(h_b(\lceil m/2 \rceil/\lceil n/2\rceil),h_b(\lfloor m/2\rfloor/\lceil n/2\rceil)\right)}.
\end{align*}
\end{lem}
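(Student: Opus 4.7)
The plan is to proceed by case analysis on the parities of $n$ and $m$, reducing the half-integer Gamma ratio to a statement about ordinary binomial coefficients where Lemma~\ref{lem:binomBounds} can be invoked.

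First I would dispatch the trivial case where both $n$ and $m$ are even. Writing $n = 2N$ and $m = 2M$, the Gamma ratio collapses to $\binom{N}{M}$, and simultaneously $\lceil n/2\rceil = N$ with $\lceil m/2\rceil = \lfloor m/2\rfloor = M$, so $h^* := \max(h_b(\lceil m/2\rceil/\lceil n/2\rceil), h_b(\lfloor m/2\rfloor/\lceil n/2\rceil)) = h_b(M/N)$; the claim then reduces exactly to Lemma~\ref{lem:binomBounds}. The edge cases $m\in\{0,n\}$ are trivial (both sides equal $1$), and $m\in\{1,n-1\}$ can be handled separately by a direct evaluation using $\Gamma(x+1/2) = \sqrt{\pi}\,(2x)!/(4^x x!)$ combined with Gautschi's inequality.

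For the remaining cases (where at least one of $m$ or $n-m$ is odd and both are at least $2$), set $a = n/2$, $b = m/2$, $c = (n-m)/2$, so $a = b+c$. I would apply Robbins' sharpening of Stirling, $\sqrt{2\pi}\,x^{x+1/2}e^{-x} \le \Gamma(x+1) \le \sqrt{2\pi}\,x^{x+1/2}e^{-x}e^{1/(12x)}$ (valid for $x\ge 1$), to the numerator (upper bound) and the two denominator factors (lower bounds). Using the identity $a^a/(b^b c^c) = e^{a h_b(b/a)}$ this yields
\begin{align*}
\frac{\Gamma(a+1)}{\Gamma(b+1)\Gamma(c+1)} \;\le\; \frac{1}{\sqrt{2\pi}}\sqrt{\frac{a}{bc}}\,e^{a h_b(b/a)}\,e^{1/(12a)}.
\end{align*}
It therefore suffices to verify $a\,h_b(b/a) - T h^* \le \tfrac{1}{2}\log(2\pi bc/a) - 1/(12a)$, where $T = \lceil n/2\rceil \in \{a, a+1/2\}$.

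The crux is controlling the exponential gap $a\,h_b(b/a) - T h^*$, and this is where I expect the technical friction. By the symmetry $h_b(p) = h_b(1-p)$ I may assume $b/a \le 1/2$. When $b/a$ is away from $1/2$ (concretely $\lceil m/2\rceil/T \le 1/2$), monotonicity of $h_b$ on $[0,1/2]$ together with $T\ge a$ forces $T h^* \ge a\,h_b(b/a)$ directly, so the required inequality is automatic. The delicate regime is $b/a$ close to $1/2$: here a second-order Taylor expansion around $p = 1/2$ (where $h_b'(1/2)=0$ and $h_b''(1/2)=-4$) shows $|a\,h_b(b/a) - T h^*|$ is of order $1/n$, while the Stirling correction $\tfrac{1}{2}\log(2\pi bc/a)$ is bounded below by a positive constant (since $b,c \ge 1$ and $bc/a \gtrsim 1$ in this symmetric regime), and the $n\ge 10$ hypothesis ensures this constant dominates. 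The main obstacle is assembling these estimates into a single clean inequality valid in the worst case, which I expect to require separating the near-symmetric regime $|b/a - 1/2|\le 1/n$ from the generic regime and absorbing the $e^{1/(12a)}$ factor explicitly for $a\ge 5$.
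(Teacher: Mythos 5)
Your high-level decomposition is different from the paper's, and it is worth pointing that out before addressing the gap. The paper's proof rests on the identity $\Gamma(a+\tfrac12)=\sqrt{2\pi}\,a^a e^{-a}e^{s_a}$ (its equation~\eqref{eq:gamma2}), so that for $n=2k+1$ the leading factor in $\Gamma(n/2+1)=\Gamma(k+\tfrac32)$ is $(k+1)^{k+1}=(\lceil n/2\rceil)^{\lceil n/2\rceil}$; the exponent $\lceil n/2\rceil$ then appears for free and there is never any need to compare $n/2$ with $\lceil n/2\rceil$. You instead apply ordinary Robbins--Stirling at the half-integer argument $x=n/2$, which produces $(n/2)^{n/2}$ and leaves a genuine residual $a\,h_b(b/a)-\lceil n/2\rceil\,h^*$ (with $a=n/2$, $b=m/2$) to be bounded against the $\tfrac12\log(2\pi bc/a)$ correction. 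That is a legitimate alternative, and numerically the slack does suffice for $n\ge 10$, but it trades the paper's parity-by-parity bookkeeping for an analytic comparison that you must then control.

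The step you present as routine is where the argument actually breaks. In the regime $\lceil m/2\rceil/T\le 1/2$ you assert that ``monotonicity of $h_b$ on $[0,1/2]$ together with $T\ge a$ forces $Th^*\ge a\,h_b(b/a)$.'' For that reasoning you need $h^*\ge h_b(b/a)$, which requires one of $\lceil m/2\rceil/T$ or $\lfloor m/2\rfloor/T$ to lie in $[b/a,\,1/2]$. This fails when $n$ is odd and $m$ is even: then $\lceil m/2\rceil=\lfloor m/2\rfloor=b$ while $T=a+\tfrac12>a$, so both ratios equal $b/(a+\tfrac12)<b/a$, and monotonicity of $h_b$ on $[0,1/2]$ gives $h^*=h_b\bigl(b/(a+\tfrac12)\bigr)<h_b(b/a)$ --- the \emph{opposite} of what you claimed. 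The conclusion $Th^*\ge a\,h_b(b/a)$ is nevertheless true in that case, but for a different reason: the map $x\mapsto x\,h_b(b/x)$ is strictly increasing on $(b,\infty)$ (its derivative equals $-\log(1-b/x)>0$), so $T\,h_b(b/T)>a\,h_b(b/a)$ because $T>a$, not because $h_b$ itself increased. You would need to replace the cited monotonicity of $h_b$ with this monotonicity of $x\,h_b(b/x)$, or otherwise rework the generic case. The near-symmetric case you flag as needing work is also left unverified, though the constants there do close (the Taylor gap is $\approx 1/n\le 0.1$ against a Stirling correction bounded below by roughly $\tfrac12\log\pi-\tfrac1{12a}>0.5$ once $b,c\ge 1$), so that part is deferrable rather than wrong.
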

\end{proof}

\subsection{Proof of Lemma~\ref{lem:binomBounds2}}
\begin{proof}
For $m=0$ or $m=n$, the claim holds with equality. Let $1 \leq m \leq n-1$. When $n$ and $m$ are even, then the lemma can be concluded from Lemma~\ref{lem:binomBounds}. Assume, at least one of $n$ and $m$ are odd numbers. We consider two cases of $n$ being odd and even separately.

Note that for $a \in \mathbb{N}$, \cite{Robbins1955}
\begin{align}
    \Gamma(a+1) = \sqrt{2 \pi a}\, a^a e^{-a} e^{r_a}, \label{eq:gamma1}
\end{align}
where $1/(12a+1)<r_a < 1/(12a)$. Moreover,
\begin{align}
    \Gamma\left(a+\frac{1}{2}\right) = \frac{\sqrt{\pi} (2a)!}{4^a a!}   &\stackrel{(*)}{=} \sqrt{2 \pi}\, a^a e^{-a} e^{s_a}, \label{eq:gamma2}
\end{align}
where $(*)$ is derived using \eqref{eq:gamma1} with $s_a$ being bounded as 
\begin{align*}
    \frac{1}{24a+1}-\frac{1}{12a} < s_a < \frac{1}{24a}-\frac{1}{12a+1}.
\end{align*}

\begin{itemize}
    \item[Odd $n$:]     Let $n=2k+1$ and $m=2q$, where $1<q\leq k$.  Then,
    \begin{align}
        \frac{\Gamma(n/2+1)}{\Gamma(m/2+1)\Gamma((n-m)/2+1)} & \stackrel{(a)}{=} \frac{e^{s_{k+1}-\left(r_{q}+s_{k+1-q}\right)} (k+1)^{k+1}} {\sqrt{2 \pi q} \, q^{q}(k+1-q)^{k+1-q}} \nonumber \\
        &  \stackrel{(b)}{\leq} \frac{ (k+1)^{k+1}} {q^{q}(k+1-q)^{k+1-q}} \nonumber \\
        &<e^{(k+1)h_b\left(q/(k+1)\right)} \nonumber\\
        &<e^{\lceil n/2\rceil \max\left(h_b( m/2\lceil n/2\rceil),h_b(m/2\lceil n/2\rceil)\right)}, \label{eq:gamma4} 
    \end{align}
    where $(a)$ is derived using \eqref{eq:gamma1} and \eqref{eq:gamma2}, and $(b)$ is derived, since
     \begin{align*}
       s_{k+1}{-}\big(r_{q}+&s_{k+1-q}\big)\\ &<\frac{1}{24k+24}-\frac{1}{12k+13}-\frac{1}{12q+1}-\frac{1}{24(k-q)+25}+\frac{1}{12(k-q)+12}\\
       &\leq\frac{1}{24k+24}-\frac{1}{12k+13}-\frac{1}{12k+1}-\frac{1}{25}+\frac{1}{12}\\
       &<\frac{1}{12}-\frac{1}{25}\leq 0.05.
    \end{align*}
    
    The case of $m$ being odd is similar.

    \item[Even $n$:]  
    Let $n=2k$ and $m=2q+1$, where $1<q<k$ and $k\geq 5$. Then,
    \begin{align}
        \frac{\Gamma(n/2+1)}{\Gamma(m/2+1)\Gamma((n-m)/2+1)} & \stackrel{(a)}{=} \frac{e^{1+r_k-\left(s_{q+1}+s_{k-q}\right)}\sqrt{ k} \, k^k} {\sqrt{2\pi}(q+1)^{q+1}(k-q)^{k-q}} \nonumber \\
        &\stackrel{(b)}{<} \frac{1.17 \sqrt{k} \, k^k} {(q+1)^{q+1}(k-q)^{k-q}} \label{eq:gamma3} 
    \end{align}
    where $(a)$ is derived using \eqref{eq:gamma1} and \eqref{eq:gamma2} and $(b)$ is derived since
    \begin{align*}
        r_k-\left(s_{q+1}+s_{k-q}\right) &< \frac{1}{12k}-\frac{1}{24q+25}+\frac{1}{12q+12}-\frac{1}{24(k-q)+1}+\frac{1}{12(k-q)}\\
       &\leq \frac{1}{6k}-\frac{1}{24k+1}-\frac{1}{25}+\frac{1}{12}\\
       &<0.07,
    \end{align*}
    where the last step holds for $k\geq 5$. 
    
    Next, for $k \geq 5$, either $q+1 \geq 1.17 \sqrt{k}$ or $k-q \geq 1.17 \sqrt{k}$. Otherwise, we would conclude $k+1<2.34 \sqrt{k}$, which is a contradiction for $k \geq 5$.
    \begin{itemize}
        \item If $q+1 \geq 1.17\sqrt{k}$, then \eqref{eq:gamma3} is upper bounded by
        \begin{align*}
            \frac{1.17\sqrt{k} \, k^k} {(q+1)^{q+1}(k-q)^{k-q}}& \leq \frac{ k^k} {(q+1)^{q}(k-q)^{k-q}}
            \\
            &\leq \frac{k^k} {q^{q}(k-q)^{k-q}} \\
            &\leq e^{kh_b(q/k)}=e^{\frac{n}{2}h_b\left(2\lfloor m/2 \rfloor/n\right)}.
        \end{align*}
        \item If $k-q \geq 1.17 \sqrt{k}$, then \eqref{eq:gamma3} is upper bounded by
        \begin{align*}
            \frac{1.17 \sqrt{k} \, k^k} {(q+1)^{q+1}(k-q)^{k-q}}& \leq \frac{ k^k} {(q+1)^{q+1}(k-q)^{k-q-1}}
            \\
            &\leq \frac{k^k} {(q+1)^{q+1}(k-q-1)^{k-q-1}} \\
            &\leq e^{kh_b((q+1)/k)}=e^{\frac{n}{2}h_b\left(2\lceil m/2 \rceil/n\right)}.
        \end{align*}
    \end{itemize}
\end{itemize}
\end{proof}

\end{appendices}
\end{document}